\newcommand{\scrD}{\ensuremath{\mathcal{D}}}
\newcommand{\scrE}{\ensuremath{\mathcal{E}}}
\newcommand{\scrG}{\ensuremath{\mathcal{G}}}
\newcommand{\scrP}{\ensuremath{\mathcal{P}}}
\newcommand{\fk}[1]{\ensuremath{f^{#1}_{k}}}
\newcommand{\f}[1]{\ensuremath{f^{#1}(\alpha)}}
\newcommand{\fb}[1]{\ensuremath{f^{#1}(\beta)}}
\definecolor{darkviolet}{RGB}{148,0,211}
\newtheorem{corollary}{Corollary}
\newtheorem{example}{Example}
\newtheorem{lemma}{Lemma}
\newtheorem{problem}{Problem}
\newtheorem{theorem}{Theorem}
\newtheorem{definition}{Definition}
\newtheorem{remark}{Remark}
\begin{document}

\title{Streaming Algorithms for Pattern Discovery over Dynamically Changing Event Sequences}
\author{
Debprakash Patnaik\thanks{Debprakash Patnaik is now with Amazon.com} and Naren Ramakrishnan\\
Virginia Tech, Blacksburg, VA\\
{ \{patnaik,naren\}@vt.edu}
\and
Srivatsan Laxman and Badrish Chandramouli\\
       {Microsoft Research}\\
	   { \{slaxman,badrishc\}@microsoft.com}
}

\maketitle
\begin{abstract}
Discovering frequent episodes over event sequences is an important data mining task. In many applications, events constituting the data sequence arrive as a stream, at furious rates, and recent trends (or frequent episodes) can change and drift due to the dynamical nature of the underlying event generation process. The ability to detect and track such the changing sets of frequent episodes can be valuable in many application scenarios. Current methods for frequent episode discovery are typically multipass algorithms, making them unsuitable in the streaming context. In this paper, we propose a new streaming algorithm for discovering frequent episodes over a window of recent events in the stream. Our algorithm processes events as they arrive, one batch at a time, while discovering the top frequent episodes over a window consisting of several batches in the immediate past. We derive approximation guarantees for our algorithm under the condition that frequent episodes are approximately well-separated from infrequent ones in every batch of the window. We present extensive experimental evaluations of our algorithm on both real and synthetic data. We also present empirical comparisons with baselines and adaptations of streaming algorithms from itemset mining literature.

\end{abstract}

\section{Introduction}
\label{sec:intro}

The problem of discovering interesting patterns from large datasets has been 
well studied in the form of
pattern classes such as itemsets,
sequential patterns, and episodes with temporal constraints. However, most of these
techniques
deal with static datasets, over which multiple passes are
performed. 

In many domains like telecommunication and computer security, it is becoming increasingly difficult to store 
and process data at speeds comparable to their generation rate. 
A few minutes of call logs data in a telecommunication network can easily run into millions of records.
Such data are referred to as \textit{data streams}~\cite{sketch19}. A \textit{data stream} is an unbounded sequence where new data points or events arrive continuously and often at very high rates. Many traditional data mining algorithms are rendered useless in this context as one cannot hope to store the entire data and then process it. Any method for data streams must thus
operate under the constraints of limited memory and processing time. In addition, 
the data must be processed faster than it is being generated. In this paper, we 
investigate the problem of mining temporal patterns called episodes under these constraints; while we focus on discovering frequent episodes from event streams, our method is general and adaptable to any class of patterns that might be of interest over the given data.

In several applications where frequent episodes have been found to be useful, share the streaming data characteristics. In neuroscience, multi electrode arrays are being used as implants to control artificial prosthetics. These interfaces interpret commands from the brain and direct external devices. Identifying controlling signals from brain is much like finding a needle in the hay stack. Large volumes of data need to be processed in real time to be able to solve this problem. Similar situations exist in telecom and computer networks where the network traffic and call logs must be analyzed to detect attacks or fraudulent activity.

A few works exist in current literature for determining frequent itemsets from a stream of transactions (e.g. see \cite{WF06}). However, they are either computationally impractical, due to worst-case assumptions, or ineffective due to strong independence assumptions. We make no statistical assumptions on the stream, independence or otherwise. We develop the error characterization of our algorithms by identifying two key properties of the data, namely, maximum rate of change and top-k separation. Our key algorithmic contribution is an adaptation of the border sets datastructures to reuse work done in previous batches when computing frequent patterns of the current batch. This reduces the candidate generation effort from $F^2$ to $FF_{new}$ (where $F$ denotes the number of frequent patterns of a particular size in the previous batch, while $F_{new}$ denotes the number of {\em newly} frequent patterns of that same size in the current batch). Experimental work demonstrates the practicality of our algorithms, both in-terms of accuracy of the returned frequent pattern sets as well as in terms of computational efficiencies.

\section{Preliminaries}
\label{sec:preliminaries}


In the framework of frequent episodes \cite{MTV97}, an {\em event sequence} is denoted as $\langle (e_1,\tau_1), \ldots, (e_n,\tau_n) \rangle$, where $(e_i,\tau_i)$ represents the $i^\mathrm{th}$ event; $e_i$ is drawn from a finite alphabet $\scrE$ of symbols (called {\em event-types}) and $\tau_i$ denotes the time-stamp of the $i^\mathrm{th}$ event, with $\tau_{i+1} \geq \tau_i$, $i=1,\ldots,(n-1)$. An {\em $\ell$-node  episode} $\alpha$ is defined by a triple $\alpha=(V_\alpha,<_\alpha,g_\alpha)$, where $V_\alpha=\{v_1,\ldots,v_\ell\}$ is a collection of $\ell$ nodes, $<_\alpha$ is a partial order over $V_\alpha$ and $g_\alpha\::\:V_\alpha \rightarrow \scrE$ is a map that assigns an event-type $g_\alpha(v)$ to each node $v\in V_\alpha$. There are two special classes of episodes: When $<_\alpha$ is total $\alpha$ is called a {\em serial episode} and when it is empty, it is called a {\em parallel episode}. An {\em occurrence} of an episode $\alpha$ is a map $h\::\:V_\alpha\rightarrow\{1, \ldots, n\}$ such that $e_{h(v)}=g(v)$ for all $v\in V_\alpha$ and for all pairs of nodes $v,v'\in V_\alpha$ such that $v<_\alpha v'$ the map $h$ ensures that $\tau_{h(v)}<\tau_{h(v')}$. Two occurrences of an episode are {\em non-overlapped} \cite{laxman06} if no event corresponding to one appears in-between the events corresponding to the other. The maximum number of non-overlapped occurrences of an episode is defined as its {\em frequency} in the event sequence. The task in frequent episode discovery is to find all patterns whose frequency exceeds a user-defined threshold. Given a frequency threshold, Apriori-style level-wise algorithms \cite{MTV97,ALVS12} can be used to obtain the frequent episodes in the event sequence. An important variant of this task is top-$k$ episode mining, where, rather than issue a frequency threshold to the mining algorithm, the user supplies the {\em number} of top frequent episodes that need to be discovered.

\begin{definition}[Top-$k$ episodes of size $\ell$]
\label{def:top-k}
The set of top-$k$ episodes of size $\ell$ is defined as the collection of all $\ell$-node episodes with frequency {\em greater than or equal to} the frequency $f^k$ of the $k^{th}$ most frequent $\ell$-node episode in the given event sequence.
\end{definition}
Note that the number of top-$k$ $\ell$-node episodes can exceed $k$, although the number of $\ell$-node episodes with frequencies strictly greater than $f^k$ is at most $(k-1)$.


\section{Problem Statement}
\label{sec:statement}


The data available (referred to as an {\em event stream}) is a potentially infinite sequence of events:
\begin{equation}
\mathcal{D} = \langle (e_1, \tau_1), (e_2, \tau_2), \ldots, (e_i, \tau_i),\ldots, (e_n, \tau_n),\ldots\rangle
\label{eq:streamD}
\end{equation}
Our goal is to find all episodes that were frequent in the recent past and to this end, we consider a sliding window model for the {\em window of interest} of the user\footnote{Streaming patterns literature has also considered other models, such as the landmark and time-fading models \cite{CKN08}, but we do not consider them in this paper.}. In this model, the user wants to determine episodes that are frequent over a window of fixed-size and terminating at the current time-tick. As new events arrive in the stream, the user's window of interest shifts, and the data mining task is to next report the frequent episodes in the  new window of interest.

\begin{figure}[t]
\centering
\includegraphics[width=\columnwidth]{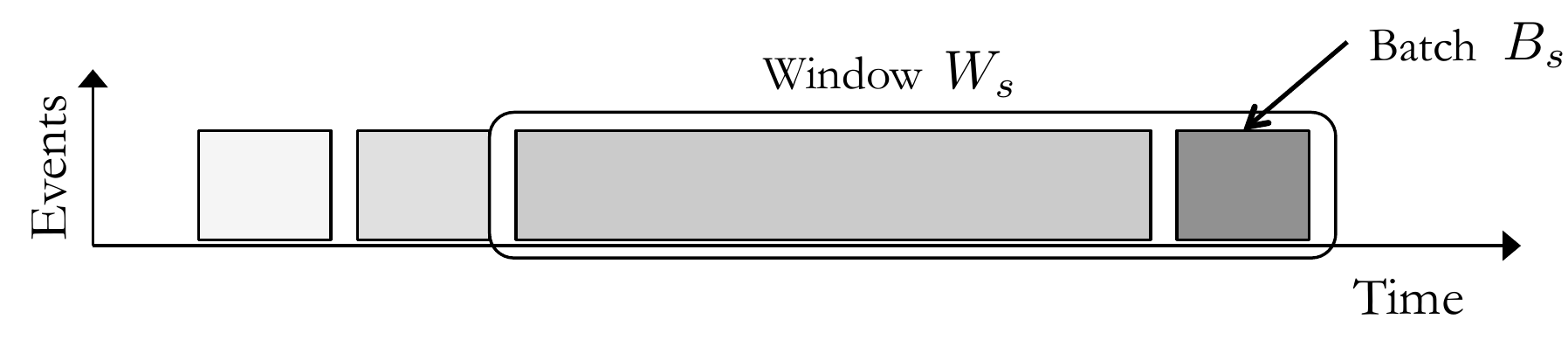}
\caption{A sliding window model for episode mining over event streams: $B_{s}$ is the most recent batch of events that arrived in the stream and $W_{s}$ is the window of interest over which the user wants to determine the set of frequent episodes.}
\label{fig:sliding-window}
\end{figure}

Typically, the window of interest is very large and cannot be stored and processed in-memory. This straightaway precludes the use of standard multi-pass algorithms for frequent episode discovery over the window of interest. Events in the stream can be organized into batches such that at any given time only the new incoming batch needs to be stored and processed in memory. This is illustrated in Fig.~\ref{fig:sliding-window}. The current window of interest is denoted by $W_{s}$ 
and the most recent batch, $B_{s}$, consists of a sequence of events in $\mathcal{D}$ with times of occurrence, $\tau_{i}$, such that,
\begin{equation}
(s-1) T_{b}\leq \tau_{i}  < s T_{b}
\end{equation}
where $T_{b}$ is the time-span of each batch and $s$ is the batch number ($s=1,2,\ldots$)\footnote{We assume that the number of events in any batch is bounded above and that we have sufficient memory to store and process all events that occur in a batch. For example, if time is integer-valued and if only one event occurs at any time-tick, then there are at most $T_{b}$ events in any batch.}. The frequency of an episode $\alpha$ in a batch $B_s$ is referred to as its {\em batch frequency} $f^s(\alpha)$. The current {\em window of interest}, $W_{s}$, consists of $m$ consecutive batches ending in batch $B_{s}$, i.e.
\begin{equation}
W_{s} = \langle B_{s-m+1},B_{s-m+2},\ldots,B_{s} \rangle
\end{equation}
\begin{definition}[Window Frequency]
The frequency of an episode $\alpha$ over window $W_s$, referred to as its {\em window frequency} and denoted by $f^{W_s}(\alpha)$, is defined as the sum of batch frequencies of $\alpha$ in $W_s$. Thus, if $f^j(\alpha)$ denotes the batch frequency of $\alpha$ in batch $B_j$, then the window frequency of $\alpha$ is given by $f^{W_s}(\alpha)=\sum_{B_j\in W_s} f^j(\alpha)$.
\label{def:window-frequency}
\end{definition}
In summary, we are given an event stream ($\scrD$), a time-span for batches ($T_b$), the number of consecutive batches that constitute the current window of interest ($m$), the desired size of frequent episodes ($\ell$) and the desired number of most frequent episodes ($k$). We are now ready to formally state the problem of discovering top-$k$ episodes in an event stream.

\begin{problem}[Streaming Top-$k$ Mining]
For each n- ew batch, $B_s$, of events in the stream, find all $\ell$-node episodes in the corresponding window of interest, $W_s$, whose window frequencies are greater than or equal to the window frequency, $f_s^k$, of $k^\mathrm{th}$ most frequent $\ell$-node episode in $W_s$.
\label{prob:streaming-topk-episodes-problem}
\end{problem}


\section{Method}

\begin{figure}[t]
\centering
\includegraphics[width=\columnwidth]{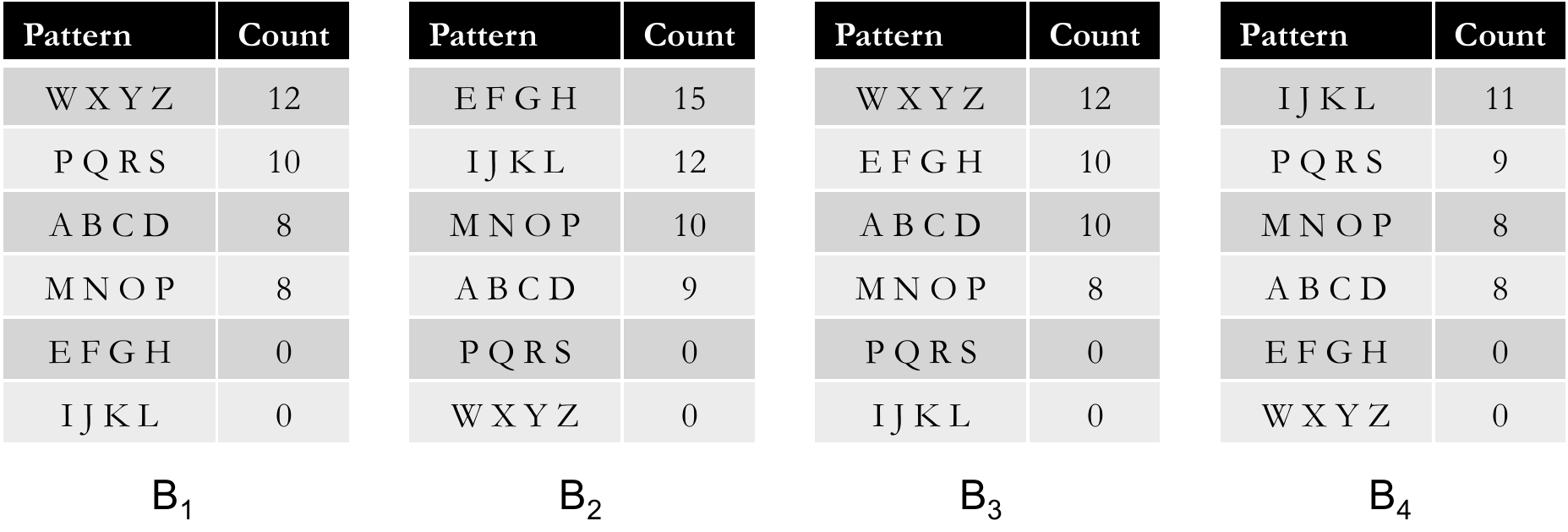}
\caption{Batch frequencies in {\em Example~\ref{eg:window-disconnect}}.}
\label{fig:top-k-connection}
\end{figure}

\begin{table}[t]
\centering
\caption{Window frequencies in {\em Example~\ref{eg:window-disconnect}}.}
\label{tab:window-counts}
\tiny
\begin{tabular}{|l|c|c|c|c|c|c|}
\hline
{\small Episode} & ABCD & MNOP & EFGH & WXYZ & IJKL & PQRS \\ \hline
{\small Window Freq} & 35 & 34 & 25 & 24 & 23 & 19\\
\hline
\end{tabular}
\end{table}

In general, the top-$k$ episodes over a window may be quite different from the top-$k$ episodes in the individual batches constituting the window. This is illustrated through an example in Fig.~\ref{fig:top-k-connection}.
\begin{example}[\label{eg:window-disconnect}Window Top-$k$ v/s Batch Top-$k$] Let $W$ be a window of four batches $B_1,\ldots, B_4$.
The episodes in each batch with corresponding batch frequencies are listed in Fig.~\ref{fig:top-k-connection}. The corresponding window frequencies (sum of each episodes' batch frequencies) are listed in Table~\ref{tab:window-counts}.
The top-2 episodes in $B_1$ are ${\rm (PQRS)}$ and ${\rm (WXYZ)}$. Similarly ${\rm (EFGH)}$ and ${\rm (IJKL)}$ are the top-2 episodes in $B_2$, and so on.
${\rm (ABCD)}$ and ${\rm (MNOP)}$ have the highest window frequencies but never appear in the top-2 of any batch -- these episodes would `fly below the radar' and go undetected if we considered only the top-2 episodes in every batch as candidates for the top-2 episodes over $W$. This example can be easily generalized to any number of batches and any $k$.
\end{example}


{\em Example~\ref{eg:window-disconnect}} highlights the main challenge in the streaming top-$k$ mining problem: while we can only store and process the most recent batch of events in the window of interest, the batchwise top-$k$ episodes may not contain sufficient informative about the top-$k$ over the entire window. It is obviously not possible to count and track all episodes (both frequent and infrequent) in every batch in the window, since the pattern space is typically very large. This brings us to the question of which episodes to select and track in every batch. How deep must we search within each batch for episodes that have potential to become top-$k$ over the window?  In this paper, we develop the formalism to answer this question. We identify two important properties of the underlying event stream which determine the design and analysis of our algorithms. These are stated in {\em Definitions~\ref{def:maximum-rate-change} \& \ref{def:topk-separation}} below.

\begin{definition} [Maximum Rate of Change, $\Delta$]
\label{def:maximum-rate-change}
The maximum change in batch frequency of any episode, $\alpha$, across any pair of consecutive batches, $B_{s}$ and $B_{s+1}$, is bounded above by $\Delta (>0)$, i.e., 
\begin{equation}
|\f{s+1} - \f{s}| \leq \Delta,
\end{equation}
and $\Delta$ is referred to as the {\em maximum rate of change}.
\end{definition}
Intuitively, $\Delta$ controls the extent of change that we may see from one batch to the next. It is trivially bounded above by the maximum number of events arriving per batch, and in practice, it is in fact much smaller.

\begin{definition}[Top-$k$ Separation of $(\varphi,\epsilon)$]
\label{def:topk-separation}
A batch $B_s$ of events is said to have a {\em top-$k$ separation of $(\varphi,\epsilon)$}, $\varphi\geq 0$, $\epsilon\geq 0$, if there are no more than $(1+\epsilon)k$ episodes with batch frequency greater than or equal to $(f_k^s - \varphi\Delta)$, where $f_k^s$ denotes the batch frequency of the $k^\mathrm{th}$ most-frequent episode in $B_s$  and $\Delta$ denotes the maximum rate of change as per {\em Definition~\ref{def:maximum-rate-change}}.
\end{definition}
This is a measure of how well-separated the frequencies of the top-$k$ episodes are relative to the rest of the episodes. We expect to see roughly $k$ episodes with batch frequencies of at least $f_s^k$ and the separation can be considered to be high (or good) if $\epsilon$ can remain small even for relatively large $\varphi$. We observe that $\epsilon$ is a non-decreasing function of $\varphi$ and that top-$k$ separation is measured relative to the maximum rate of change $\Delta$. Also, top-$k$ separation of any given batch of events is characterized through not one but several pairs of $(\varphi,\epsilon)$ since $\varphi$ and $\epsilon$ are essentially functionally related -- $\epsilon$ is typically close to zero for $\varphi=0$ and $\epsilon$ is roughly the size of the entire class of episodes (minus $k$) for $\varphi \geq f_s^k$.

We now use the maximum rate of change property to design efficient streaming algorithms for top-$k$ episode mining and show that top-$k$ separation plays a pivotal role in determining the quality of approximation that our algorithms can achieve.

\begin{lemma} 
\label{lem:fk}
Consider two consecutive batches, $B_{s}$ and $B_{s+1}$, with a maximum rate of change $\Delta$. The batch frequencies of the $k^\mathrm{th}$ most-frequent episodes in the corresponding batches are related as follows:
\begin{equation}
|\fk{s+1} - \fk{s}| \leq \Delta
\label{eq:fk-drift}
\end{equation}
\end{lemma}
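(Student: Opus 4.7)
The plan is to prove the two inequalities $f_k^{s+1} \ge f_k^s - \Delta$ and $f_k^s \ge f_k^{s+1} - \Delta$ separately; by symmetry in $s$ and $s+1$, only one direction requires work and the other follows by swapping roles.

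For the first inequality, I would start from the definition of $f_k^s$: there are at least $k$ episodes whose batch frequency in $B_s$ is at least $f_k^s$. Call this set $\mathcal{A}$, so $|\mathcal{A}| \ge k$ and $f^s(\alpha) \ge f_k^s$ for every $\alpha \in \mathcal{A}$. Now I would invoke the maximum rate of change property from Definition~\ref{def:maximum-rate-change}, which asserts that for every single episode $\alpha$, $|f^{s+1}(\alpha) - f^s(\alpha)| \le \Delta$. Applied to each $\alpha \in \mathcal{A}$, this gives $f^{s+1}(\alpha) \ge f^s(\alpha) - \Delta \ge f_k^s - \Delta$. Thus there are at least $k$ episodes whose batch frequency in $B_{s+1}$ is at least $f_k^s - \Delta$, and hence the $k^{\mathrm{th}}$ most frequent episode in $B_{s+1}$ must have frequency at least $f_k^s - \Delta$, i.e., $f_k^{s+1} \ge f_k^s - \Delta$.

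Interchanging the roles of $B_s$ and $B_{s+1}$ in the argument above yields $f_k^s \ge f_k^{s+1} - \Delta$, and combining the two inequalities gives $|f_k^{s+1} - f_k^s| \le \Delta$, which is (\ref{eq:fk-drift}).

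I do not anticipate any real obstacle here: the only subtle point is that Definition~\ref{def:top-k} allows more than $k$ episodes to share the threshold frequency $f_k^s$, so one should be careful to argue with ``at least $k$ episodes with frequency $\ge f_k^s$'' rather than with a uniquely defined top-$k$ set. Once that is handled, the proof is a one-line application of the per-episode bound followed by a symmetry argument.
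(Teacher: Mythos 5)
Your proof is correct and follows essentially the same argument as the paper: at least $k$ episodes have frequency $\geq f_k^s$ in $B_s$, each can drop by at most $\Delta$, so $f_k^{s+1} \geq f_k^s - \Delta$. The paper derives the reverse inequality by directly bounding the number of episodes in $B_{s+1}$ exceeding $f_k^s + \Delta$ rather than by your symmetry argument, but the two are interchangeable since the rate-of-change bound is symmetric in $s$ and $s+1$.
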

\begin{proof}
There exist at least $k$ episodes in $B_{s}$ with batch frequency greater than or equal to $\fk{s}$ (by definition). Hence, there exist at least $k$ episodes  in $B_{s+1}$ with batch frequency greater than or equal to $(f^s_k - \Delta)$ (since frequency of any episode can decrease by at most $\Delta$ going from $B_{s}$ to $B_{s+1}$). Hence we must have $f^{s+1}_k \geq (f^s_k-\Delta)$. Similarly, there can be at most $(k-1)$ episodes in $B_{s+1}$ with batch frequency strictly greater than $(\fk{s}+\Delta)$. Hence we must also have $f^{s+1}_k \leq (f^s_k+\Delta)$.
\end{proof}

Next we show that if the batch frequency of an episode is known relative to $\fk{s}$ in the current batch $B_{s}$, we can bound its frequency in a later batch.

\begin{lemma} 
\label{lem:r-Delta}
Consider two batches, $B_s$ and $B_{s+r},\ r\in\mathbb{Z}$, located $r$ batches away from each other. If $\Delta$ is the maximum rate of change (as per {\em Definition~\ref{def:maximum-rate-change}}) then the batch frequency of any episode $\alpha$ in $B_{s+r}$ must satisfy the following:
\begin{enumerate}
\item If $\f{s} \geq \fk{s}$, then $\f{s+r} \geq \fk{s+r} - 2|r|\Delta$
\item If $\f{s} < \fk{s}$, then $\f{s+r} < \fk{s+r} + 2|r|\Delta$
\end{enumerate}
\end{lemma}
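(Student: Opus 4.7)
The plan is to obtain both claims by combining two simple iterations: one of Definition~\ref{def:maximum-rate-change} (maximum rate of change) for the episode's own batch frequency, and one of Lemma~\ref{lem:fk} for the $k^\mathrm{th}$-most-frequent batch frequency. Each gives a drift bound of $|r|\Delta$ across $|r|$ consecutive batch transitions, and adding the two via the triangle inequality yields the $2|r|\Delta$ slack in the statement. The sign of $r$ is handled uniformly by symmetry, since both the maximum-rate-of-change property and Lemma~\ref{lem:fk} bound the absolute change between consecutive batches and therefore apply equally forward and backward in time.

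First, I would prove the telescoping claim that for any episode $\alpha$ and any integer $r$,
\begin{equation}
|f^{s+r}(\alpha) - f^s(\alpha)| \leq |r|\Delta.
\end{equation}
This follows by a straightforward induction on $|r|$: the base case $|r|=1$ is exactly Definition~\ref{def:maximum-rate-change}, and the inductive step uses the triangle inequality $|f^{s+r}(\alpha)-f^s(\alpha)| \leq |f^{s+r}(\alpha)-f^{s+r\mp 1}(\alpha)| + |f^{s+r\mp 1}(\alpha)-f^s(\alpha)|$ together with the induction hypothesis. An identical induction, but invoking Lemma~\ref{lem:fk} at every step in place of Definition~\ref{def:maximum-rate-change}, yields
\begin{equation}
|f^{s+r}_k - f^s_k| \leq |r|\Delta.
\end{equation}

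With these two drift bounds in hand, both parts follow in one line each. For part~1, assuming $f^s(\alpha) \geq f^s_k$, I chain
\begin{equation}
f^{s+r}(\alpha) \;\geq\; f^s(\alpha) - |r|\Delta \;\geq\; f^s_k - |r|\Delta \;\geq\; f^{s+r}_k - 2|r|\Delta,
\end{equation}
where the first inequality is the episode drift bound, the second is the assumption, and the third is the $k^\mathrm{th}$-frequency drift bound. For part~2, assuming $f^s(\alpha) < f^s_k$, I chain in the opposite direction:
\begin{equation}
f^{s+r}(\alpha) \;\leq\; f^s(\alpha) + |r|\Delta \;<\; f^s_k + |r|\Delta \;\leq\; f^{s+r}_k + 2|r|\Delta.
\end{equation}

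There is really no ``hard part'' here; the only thing to be careful about is making the inductions symmetric in the sign of $r$ so that the bound $|r|\Delta$ (rather than $r\Delta$) is justified for negative $r$ as well. Everything else is bookkeeping with the triangle inequality.
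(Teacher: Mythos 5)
Your proof is correct and follows essentially the same route as the paper: iterate the maximum-rate-of-change bound and Lemma~\ref{lem:fk} over $|r|$ consecutive transitions to get the two drift bounds $|f^{s+r}(\alpha)-f^s(\alpha)|\leq |r|\Delta$ and $|f^{s+r}_k - f^s_k|\leq |r|\Delta$, then chain the three inequalities. The only difference is that you make the telescoping induction explicit where the paper simply asserts the iterated bounds; the final chains of inequalities are identical.
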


\begin{proof}
Since $\Delta$ is the maximum rate of change, we have $\f{s+r}\geq (\f{s}-|r|\Delta)$ and from Lemma~\ref{lem:fk}, we have $\fk{s+r} \leq (\fk{s}+|r|\Delta)$. Therefore, if $f^{s}(\alpha) \geq \fk{s}$, then
\[
\f{s+r} + |r|\Delta \geq \f{s} \geq \fk{s} \geq \fk{s+r}-|r|\Delta
\]
which implies $\f{s+r} \geq \fk{s+r} - 2|r|\Delta$.
Similarly, if  $\f{s} < \fk{s}$, then
\[
\f{s+r} - |r|\Delta \leq \f{s} < \fk{s} \leq \fk{s+r}+|r|\Delta
\]
which implies $\f{s+r} < \fk{s+r} + 2|r|\Delta$.
\end{proof}

{\em Lemma~\ref{lem:r-Delta}} gives us a way to track episodes that have potential to be in the top-$k$ of future batches. This is an important property which our algorithm exploits and we recorded this as a remark below.
\begin{remark}
The top-$k$ episodes of batch, $B_{s+r},\ r\in\mathbb{Z}$, must have batch frequencies of at least $(\fk{s} - 2|r|\Delta)$ in batch $B_{s}$. Specifically, the top-$k$ episodes of $B_{s+1}$ must have batch frequencies of at least $(\fk{s}-2\Delta)$ in $B_{s}$.
\label{rem:batch-topk}
\end{remark}

%

Based on the maximum rate of change property we can derive a necessary condition for any episode to be top-$k$ over a window. The following theorem prescribes the minimum batch frequencies that an episode must satisfy if it is a top-$k$ episode over the window $W_s$.
\begin{theorem}[Exact Top-$k$ over $W_s$]
\label{thm:topk-mine}
An episode, $\alpha$, can be a top-$k$ episode over window $W_{s}$ only if its batch frequencies satisfy  $f^{s'}(\alpha) \geq (\fk{s'}-2(m-1)\Delta)$ $\forall B_{s'} \in W_{s}$.
\end{theorem}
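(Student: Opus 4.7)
The plan is to prove the contrapositive: assume some batch $B_{s'} \in W_s$ violates the stated bound, i.e.\ $f^{s'}(\alpha) < f^{s'}_k - 2(m-1)\Delta$, and exhibit $k$ distinct episodes whose window frequencies strictly exceed $f^{W_s}(\alpha)$. This is enough to force $\alpha$ out of the top-$k$ over $W_s$, since it makes $f^{W_s}(\alpha)$ strictly smaller than at least $k$ other window frequencies.

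By the definition of $f^{s'}_k$, pick $k$ distinct episodes $\beta_1,\ldots,\beta_k$ with $f^{s'}(\beta_i) \geq f^{s'}_k$ for every $i$. Since $f^{s'}(\alpha) < f^{s'}_k - 2(m-1)\Delta < f^{s'}_k$ (for $\Delta>0$), $\alpha$ is not among them. The key idea is to transport the batchwise gap at $B_{s'}$ to every other batch of the window via the maximum-rate-of-change property (Definition~\ref{def:maximum-rate-change}), and then sum. For any $B_j \in W_s$ let $r = j - s'$, so $|r| \leq m-1$. Iterating the one-step rate-of-change bound $|r|$ times in opposite directions for $\beta_i$ and for $\alpha$ gives
\begin{equation*}
f^j(\beta_i) \;\geq\; f^{s'}(\beta_i) - |r|\Delta \;\geq\; f^{s'}_k - |r|\Delta, \qquad f^j(\alpha) \;\leq\; f^{s'}(\alpha) + |r|\Delta.
\end{equation*}
Subtracting and using the standing hypothesis $f^{s'}(\alpha) < f^{s'}_k - 2(m-1)\Delta$ yields
\begin{equation*}
f^j(\beta_i) - f^j(\alpha) \;>\; 2(m-1)\Delta - 2|r|\Delta \;\geq\; 0,
\end{equation*}
so every batch of the window contributes a strictly positive margin in favor of $\beta_i$.

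Summing over all $B_j \in W_s$ therefore gives $f^{W_s}(\beta_i) > f^{W_s}(\alpha)$ for each $i=1,\ldots,k$, so there are at least $k$ episodes strictly more frequent than $\alpha$ over $W_s$. Consequently $f^{W_s}(\alpha) < f^{W_s}_k$ and $\alpha$ cannot be a top-$k$ episode over $W_s$, which establishes the contrapositive and hence the theorem.

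The only delicate point is that the rate-of-change bound is invoked in \emph{opposite} directions on the same batch $B_j$ (lower bound for each $\beta_i$, upper bound for $\alpha$), so the $2(m-1)\Delta$ slack in the hypothesis must simultaneously absorb the drift on both episodes; this is exactly where the factor of $2$ and the $(m-1)$, corresponding to the worst-case distance of $B_{s'}$ from the far end of the window, come from. Lemma~\ref{lem:r-Delta} is implicitly the same two-sided telescoping argument applied to a single episode against $f^{s+r}_k$, so the proof is essentially a window-wide aggregation of that lemma.
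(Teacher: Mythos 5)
Your proof is correct and follows essentially the same route as the paper's: fix the batch $B_{s'}$ where the bound fails, use the maximum-rate-of-change property to show that each of the (at least $k$) top-$k$ episodes of $B_{s'}$ strictly dominates the violating episode in \emph{every} batch of the window, and sum over batches to conclude it cannot be top-$k$ over $W_s$. The only difference is cosmetic (you swap the roles of the symbols $\alpha$ and $\beta$ relative to the paper and make the distinctness of the $k$ witnesses explicit).
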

\begin{proof} 
Consider an episode $\beta$ for which $\fb{s'} < (\fk{s'}-2(m-1)\Delta)$ in batch $B_{s'}\in W_s$. Let $\alpha$ be any top-$k$ episode of $B_{s'}$.
In any other batch $B_p\in W_s$, we have 
\begin{align}
\label{eq:alph1}
\f{p} &\geq \f{s'}-|p-s'|\Delta \nonumber\\
	  &\geq \fk{s'}-|p-s'|\Delta
\end{align}
and 
\begin{align}
\label{eq:alph2}
\fb{p} &\leq \fb{s'}+|p-s'|\Delta\nonumber\\ 
       &< (\fk{s'}-2(m-1)\Delta)+|p-s'|\Delta
\end{align}
Applying $|p-s'|\leq (m-1)$ to the above, we get
\begin{align}
\f{p} \geq \fk{s'}-(m-1)\Delta > \fb{p}
\label{eq:alpha-greater-than-beta}
\end{align}
This implies $\fb{W_{s}} < \f{W_{s}}$ for every top-$k$ episode $\alpha$ of $B_{s'}$.
Since there are at least $k$ top-$k$ episodes in $B_{s'}$, $\beta$ cannot be a top-$k$ episode over the window $W_{s}$.
\end{proof}


Based on {\em Theorem~\ref{thm:topk-mine}} we can have the following simple algorithm for obtaining the top-$k$ episodes over a window: Use a traditional level-wise approach to find all episodes with a batch frequency of at least $(f_1^k-2(m-1)\Delta)$ in the first batch ($B_1$), simply accumulate their corresponding batch frequencies over all $m$ batches of $W_s$ and report the episodes with the $k$ highest window frequencies over $W_s$. This approach is guaranteed to give us the exact top-$k$ episodes over $W_s$. Further, in order to report the top-$k$ over the next sliding window $W_{s+1}$, we need to consider all episodes with batch frequency of at least $(f_2^k-2(m-1)\Delta)$ in the second batch and track them over all batches of $W_{s+1}$, and so on. Thus, an exact solution to {\em Problem~\ref{prob:streaming-topk-episodes-problem}} would require running a level-wise episode mining algorithm in every batch, $B_s$, $s=1,2,\ldots$, with a frequency threshold of $(f_s^k-2(m-1)\Delta)$.


\subsection{Class of $(v,k)$-Persistent Episodes}
\label{sec:persistance}

{\em Theorem~\ref{thm:topk-mine}} characterizes the minimum batchwise computation needed in order to obtain the exact top-$k$ episodes over a sliding window. This is effective when $\Delta$ and $m$ are small (compared to $f_s^k$). However, the batchwise frequency thresholds can become very low in other settings, making the processing time per-batch as well as the number of episodes to track over the window to become impractically high. To address this issue, we introduce a new class of episodes called {\em $(v,k)$-persistent episodes} which can be computed efficiently by employing higher batchwise thresholds. Further, we show that these episodes can be used to approximate the true top-$k$ episodes over the window and the quality of approximation is characterized in terms of the top-$k$ separation property (cf.~{\em Definition~\ref{def:topk-separation}}).

\begin{definition}[$(v,k)$-Persistent Episode] A pattern is said to be {\em $(v,k)$-persistent} over window $W_{s}$ if it is a top-$k$ episode in {\em at least} $v$ batches of $W_{s}$.
\label{def:persitent-episode}
\end{definition}


\begin{problem}[Mining $(v,k)$-Persistent Episodes]
For each new batch, $B_s$, of events in the stream, find all $\ell$-node $(v,k)$-persistent episodes in the corresponding window of interest, $W_s$.
\label{prob:vk-persistent-episodes-problem}
\end{problem}

\begin{theorem}
An episode, $\alpha$, can be $(v,k)$-persistent over the window $W_{s}$ only if its batch frequencies satisfy  $f^{s'}(\alpha) \geq (\fk{s'}-2(m-v)\Delta)$ for every batch $B_{s'} \in W_{s}$.
\label{thm:persistent-episodes}
\end{theorem}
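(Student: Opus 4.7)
My plan is to prove the contrapositive: if there exists some batch $B_{s'} \in W_s$ in which $f^{s'}(\alpha) < f^{s'}_k - 2(m-v)\Delta$, then $\alpha$ is top-$k$ in \emph{at most} $v-1$ batches of $W_s$, and therefore is not $(v,k)$-persistent. This generalizes the argument used for Theorem~\ref{thm:topk-mine}, which handled the special case $v = m$ (exact top-$k$); the bookkeeping here becomes a bit more delicate because we only need to exclude $\alpha$ from ``enough'' batches rather than from all of them.

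Fix a batch $B_{s'} \in W_s$ in which the assumed inequality holds. For any other batch $B_p \in W_s$, I would combine the maximum-rate-of-change property $f^p(\alpha) \leq f^{s'}(\alpha) + |p-s'|\Delta$ with Lemma~\ref{lem:fk}, which gives $f^{s'}_k \leq f^p_k + |p-s'|\Delta$. Substituting the assumption $f^{s'}(\alpha) < f^{s'}_k - 2(m-v)\Delta$ yields
\begin{equation}
f^p(\alpha) < f^p_k + 2|p-s'|\Delta - 2(m-v)\Delta.
\end{equation}
Whenever $|p - s'| \leq m - v$ this forces $f^p(\alpha) < f^p_k$, i.e., $\alpha$ cannot be among the top-$k$ episodes in $B_p$. (The case $p = s'$ is automatic.)

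The step I expect to require the most care is the combinatorial count: how many indices $p$ with $B_p \in W_s$ satisfy $|p - s'| \leq m - v$? Writing $W_s = \{B_{s-m+1},\ldots,B_s\}$ and letting $i$ denote the position of $s'$ within this window ($0 \leq i \leq m-1$), the eligible set is the intersection of $[s' - (m-v), s' + (m-v)]$ with $\{s-m+1,\ldots,s\}$. A short case analysis on whether the left and/or right endpoints fall inside $W_s$ shows that, regardless of where $B_{s'}$ sits in the window, this intersection contains at least $m - v + 1$ batches. (The three regimes are: $i < m-v$, giving $i + m - v + 1$ batches; $i > v - 1$, giving $2m - v - i$ batches; and the middle regime, giving $2(m-v) + 1$; all are $\geq m - v + 1$.)

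Combining these two steps: in at least $m - v + 1$ batches of $W_s$, $\alpha$ fails to be top-$k$, so $\alpha$ is top-$k$ in at most $m - (m - v + 1) = v - 1$ batches of $W_s$. By Definition~\ref{def:persitent-episode} this contradicts $(v,k)$-persistence of $\alpha$, completing the contrapositive and hence the theorem. Note as a sanity check that setting $v = m$ recovers Theorem~\ref{thm:topk-mine}, since then the required bound becomes $f^{s'}(\alpha) \geq f^{s'}_k$ for every batch, matching the earlier statement up to the $2(m-1)\Delta$ slack used there.
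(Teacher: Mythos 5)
Your proof is correct, but it takes the contrapositive, ``dual'' route to the paper's direct argument. The paper fixes the set $V_\alpha$ of (at least $v$) batches in which $\alpha$ is top-$k$, observes that every other batch $B_q$ lies within distance $m-v$ of its nearest member of $V_\alpha$, and propagates a \emph{lower} bound outward from that nearest top-$k$ batch via Lemma~\ref{lem:r-Delta} to get $f^q(\alpha)\geq f^q_k-2(m-v)\Delta$ directly. You instead anchor at a hypothetical batch $B_{s'}$ where $\alpha$ sits more than $2(m-v)\Delta$ below $f^{s'}_k$, propagate an \emph{upper} bound outward (re-deriving the relevant half of Lemma~\ref{lem:r-Delta} with the extra slack), and then count that at least $m-v+1$ batches of $W_s$ lie within distance $m-v$ of $B_{s'}$, so $\alpha$ misses the top-$k$ too often to be $(v,k)$-persistent. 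Both arguments hinge on the same quantity $m-v$ and the same $2|r|\Delta$ propagation; the difference is which combinatorial fact carries the load --- the paper's ``every excluded batch is close to some included one'' versus your ``enough window batches are close to the one bad batch.'' Your intersection count is right in all regimes (the minimum $m-v+1$ is attained when $B_{s'}$ sits at an edge of the window), so the argument goes through; the paper's version is a couple of lines shorter because it can cite Lemma~\ref{lem:r-Delta} verbatim and needs no case analysis. One side remark in your write-up is backwards: Theorem~\ref{thm:topk-mine} corresponds to the $v=1$ threshold $f^{s'}_k-2(m-1)\Delta$ (as the paper notes after the theorem), not to $v=m$; at $v=m$ the necessary condition degenerates to the tautology that being top-$k$ in every batch forces $f^{s'}(\alpha)\geq f^{s'}_k$ in every batch. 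This mislabeling does not affect the validity of your proof.
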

\begin{proof}
Let $\alpha$ be $(v,k)$-persistent over $W_s$ and let $V_\alpha$ denote the set of batches in $W_s$ in which $\alpha$ is in the top-$k$. For any $B_q\notin V_\alpha$ there exists $B_{\widehat{p}(q)}\in V_\alpha$ that is {\em nearest to} $B_q$. Since $|V_\alpha|\geq v$, we must have $|\widehat{p}(q)-q| \leq (m-v)$. Applying {\em Lemma~\ref{lem:r-Delta}} we then get
$f^q(\alpha)\geq f^q_k-2(m-v)\Delta$ for all $B_q\notin V_\alpha$.
\end{proof}
{\em Theorem~\ref{thm:persistent-episodes}} gives us the necessary conditions for computing all $(v,k)$-persistent episodes over sliding windows in the stream. The batchwise threshold required for $(v,k)$-persistent episodes depends on the parameter $v$. For $v=1$, the threshold coincides with the threshold for exact top-$k$ in {\em Theorem~\ref{thm:topk-mine}}. The threshold increases linearly with $v$ and is highest at $v=m$ (when the batchwise threshold is same as the corresponding batchwise top-$k$ frequency).

The algorithm for discovering $(v,k)$-persistent episodes follows the same general lines as the one described earlier for exact top-$k$ mining, only that we now apply higher batchwise thresholds: For each new batch, $B_s$, entering the stream, use a standard level-wise episode mining algorithm to find all episodes with batch frequency of at least $(f_s^k-2(m-v)\Delta)$. (We provide more details of our algorithm later in Sec.~\ref{sec:incremental-algorithm}). First, we investigate the quality of approximation of top-$k$ that $(v,k)$-persistent episodes offer and show that the number of errors is closely related to the degree of top-$k$ separation in the data.

\subsubsection{Top-$k$ Approximation}
\label{sec:topk-approximation}

The main idea here is that, under a maximum rate of change $\Delta$ and a top-$k$ separation of $(\varphi,\epsilon)$, there cannot be too many distinct episodes which are not $(v,k)$-persistent, while having sufficiently high window frequencies. To this end, we first compute a lower-bound  ($f_L$) on the  window frequencies of $(v,k)$-persistent episodes and an upper-bound ($f_U$) on the window frequencies of episodes that are {\em not} $(v,k)$-persistent (cf.~{\em Lemmas~\ref{lem:fL} \& \ref{lem:fU}}).

\begin{lemma}
If episode $\alpha$ is $(v,k)$-persistent over a window, $W_s$, then its window frequency, $f^{W_s}(\alpha)$, must satisfy the following lower-bound:
\begin{equation}
f^{W_s}(\alpha) \geq \sum_{B_{s'}} f^{s'}_k - (m-v)(m-v+1)\Delta  \stackrel{\rm def}{=} f_L
\end{equation}
\label{lem:fL}
\end{lemma}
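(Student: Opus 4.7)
The plan is to split the window into batches where $\alpha$ is in the top-$k$ and those where it is not, and apply the frequency-drift bounds to each group separately.

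First I would let $V_\alpha \subseteq W_s$ denote the set of batches in which $\alpha$ is a top-$k$ episode; by hypothesis $|V_\alpha| \geq v$. Without loss of generality I can assume $|V_\alpha| = v$ exactly, since removing batches from $V_\alpha$ only enlarges the distances used below and can only weaken the bound I am proving. For any batch $B_{s'} \in V_\alpha$, by definition $f^{s'}(\alpha) \geq f^{s'}_k$. For any $B_q \notin V_\alpha$, I would define $\widehat{p}(q) \in V_\alpha$ to be the batch nearest to $B_q$ (as in the proof of Theorem~\ref{thm:persistent-episodes}) and apply Lemma~\ref{lem:r-Delta} with $r = q - \widehat{p}(q)$ to obtain $f^q(\alpha) \geq f^q_k - 2|q-\widehat{p}(q)|\Delta$.

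Next I would sum these per-batch lower bounds across all of $W_s$ to get
\begin{equation}
f^{W_s}(\alpha) \;\geq\; \sum_{B_{s'} \in W_s} f^{s'}_k \;-\; 2\Delta \sum_{B_q \notin V_\alpha} |q - \widehat{p}(q)|.
\end{equation}
The main work is then a purely combinatorial bound on $S := \sum_{B_q \notin V_\alpha} |q - \widehat{p}(q)|$, the total ``distance to the nearest persistent batch'' over the $(m-v)$ non-persistent batches. I would argue that $S$ is maximized when the $v$ batches of $V_\alpha$ are all clustered at one end of $W_s$: any interior element of $V_\alpha$ can be shifted toward an endpoint without decreasing any individual distance $|q - \widehat{p}(q)|$. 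In the extreme case $V_\alpha = \{B_{s-m+1},\ldots,B_{s-m+v}\}$, the remaining $(m-v)$ batches have distances $1, 2, \ldots, (m-v)$ to $V_\alpha$, giving $S \leq 1 + 2 + \cdots + (m-v) = (m-v)(m-v+1)/2$.

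Plugging this bound back yields exactly $f^{W_s}(\alpha) \geq \sum_{B_{s'}} f^{s'}_k - (m-v)(m-v+1)\Delta = f_L$, as claimed. The main obstacle I anticipate is rigorously justifying the ``shift to the endpoint'' exchange argument for the combinatorial maximum of $S$; a clean way is to observe that for any placement of $V_\alpha$, the distances $|q - \widehat{p}(q)|$ for the non-persistent batches (listed with multiplicity) are dominated term-by-term by the sorted sequence $1, 2, \ldots, m-v$ that arises when $V_\alpha$ is placed at one end, since between any two consecutive elements of $V_\alpha$ the distances to the nearer endpoint only take values at most $\lfloor \text{gap}/2 \rfloor$, which is strictly smaller than the contribution at the extreme configuration. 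Everything else is a routine application of the two preceding lemmas.
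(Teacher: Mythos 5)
Your proposal is correct and follows essentially the same route as the paper's proof: split $W_s$ into the batches $V_\alpha$ where $\alpha$ is in the top-$k$ and the rest, apply Lemma~\ref{lem:r-Delta} via the nearest batch $B_{\widehat{p}(q)}\in V_\alpha$, and bound $\sum_{B_q\notin V_\alpha}|q-\widehat{p}(q)|$ by $1+2+\cdots+(m-v)=\frac{1}{2}(m-v)(m-v+1)$. The only difference is that you take care to justify the combinatorial bound on the distance sum (the clustered-at-one-end extremal configuration), which the paper simply asserts from $|W_s\setminus V_\alpha|\leq m-v$.
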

\begin{proof}
Consider episode $\alpha$ that is $(v,k)$-persistent over $W_s$ and let $V_\alpha$ denote the batches of $W_s$ in which $\alpha$ is in the top-$k$. The window frequency of $\alpha$ can be written as
\begin{eqnarray}
f^{W_s}(\alpha) &=& \sum_{B_p\in V_\alpha} f^p(\alpha) + \sum_{B_q\in W_s\setminus V_\alpha} f^q(\alpha) \nonumber\\
&\geq& \sum_{B_p\in V_\alpha} f^p_k + \sum_{B_q\in W_s\setminus V_\alpha} f^q_k -2|\widehat{p}(q)-q|\Delta \nonumber\\
&=& \sum_{B_{s'}\in W_s} f^{s'}_k - \sum_{B_q\in W_s\setminus V_\alpha} 2|\widehat{p}(q)-q|\Delta \label{eq:lemfL1}
\end{eqnarray}
where $B_{\widehat{p}(q)}\in V_\alpha$ denotes the batch nearest $B_q$ where $\alpha$ is in the top-$k$. Since $|W_s\setminus V_\alpha|\leq (m-v)$, we must have
\begin{eqnarray}
\sum_{B_q\in W_s\setminus V_\alpha} |\widehat{p}(q)-q| &\leq& (1+2+\cdots+(m-v)) \nonumber\\
&=&\frac{1}{2}(m-v)(m-v+1) \label{eq:lemfL2}
\end{eqnarray}
Putting together (\ref{eq:lemfL1}) and (\ref{eq:lemfL2}) gives us the lemma.
\end{proof}

\begin{lemma}
If episode $\beta$ is not $(v,k)$-persistent over a window, $W_s$, then its window frequency, $f^{W_s}(\beta)$, must satisfy the following upper-bound:
\begin{equation}
f^{W_s}(\beta) < \sum_{B_{s'}} f^{s'}_k + v (v+1)\Delta  \stackrel{\rm def}{=} f_U
\end{equation}
\label{lem:fU}
\end{lemma}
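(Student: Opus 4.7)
The plan is to mirror the argument used for Lemma~\ref{lem:fL} but with the roles of ``in top-$k$'' and ``not in top-$k$'' swapped. Since $\beta$ is not $(v,k)$-persistent, the set $V_\beta$ of batches in $W_s$ where $\beta$ lies in the top-$k$ satisfies $|V_\beta|\le v-1$, and consequently $|W_s\setminus V_\beta|\ge m-v+1$. I will split the window frequency as
\[
f^{W_s}(\beta) \;=\; \sum_{B_p\in V_\beta} f^p(\beta) \;+\; \sum_{B_q\in W_s\setminus V_\beta} f^q(\beta),
\]
and bound the two pieces separately.

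For the second sum the argument is direct: for each $B_q\notin V_\beta$, $\beta$ is (by definition of $V_\beta$) not among the top-$k$ in $B_q$, so $f^q(\beta)<f^q_k$. For the first sum I cannot use $f^p_k$ as an upper bound directly (the episode is in the top-$k$ there), so I pull in {\em Lemma~\ref{lem:r-Delta}} in its second form, but applied from a ``witness'' batch where $\beta$ is demonstrably below $f_k$. Concretely, for each $B_p\in V_\beta$, let $B_{\widehat{q}(p)}\in W_s\setminus V_\beta$ be the nearest batch outside $V_\beta$; then $f^{\widehat{q}(p)}(\beta)<f^{\widehat{q}(p)}_k$, and applying {\em Lemma~\ref{lem:r-Delta}}(2) with $s=\widehat{q}(p)$ and $r=p-\widehat{q}(p)$ gives
\[
f^p(\beta) \;<\; f^p_k \;+\; 2\,|p-\widehat{q}(p)|\,\Delta.
\]
Summing the two pieces telescopes the $f^{s'}_k$ terms into $\sum_{B_{s'}\in W_s} f^{s'}_k$, leaving a residual of $\sum_{B_p\in V_\beta} 2|p-\widehat{q}(p)|\,\Delta$.

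The main step left is a combinatorial bound on $\sum_{B_p\in V_\beta}|p-\widehat{q}(p)|$, exactly analogous to (\ref{eq:lemfL2}). Since $|V_\beta|\le v-1$ and the distances from each point of $V_\beta$ to the nearest point {\em outside} $V_\beta$ are maximized when $V_\beta$ is a contiguous block abutting the edge of the window, the distances are bounded by $1,2,\ldots,|V_\beta|$ in some order, yielding $\sum|p-\widehat{q}(p)|\le 1+2+\cdots+v \le \tfrac{1}{2}v(v+1)$. Multiplying by $2\Delta$ and combining with the telescoped sum gives the claimed bound
\[
f^{W_s}(\beta) \;<\; \sum_{B_{s'}\in W_s} f^{s'}_k \;+\; v(v+1)\Delta \;=\; f_U.
\]

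The only delicate step is the ``nearest witness'' step: I must ensure that for every $B_p\in V_\beta$ some $B_{\widehat{q}(p)}\in W_s\setminus V_\beta$ actually exists. This is automatic because $v\le m$ (otherwise $(v,k)$-persistence is vacuous over a window of $m$ batches), so $|W_s\setminus V_\beta|\ge m-v+1\ge 1$. Everything else is the same structural bookkeeping as in {\em Lemma~\ref{lem:fL}}, and the proof should be essentially symmetric to it, with the slight over-count in the edge-case combinatorial sum being absorbed into the loose constant $v(v+1)$ rather than the tighter $v(v-1)$ one might derive.
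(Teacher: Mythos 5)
Your proposal is correct and follows essentially the same route as the paper's own proof: the same split of $f^{W_s}(\beta)$ over $V_\beta$ and its complement, the same nearest-witness bound $f^p(\beta) < f^p_k + 2|p-\widehat{q}(p)|\Delta$ via Lemmas~\ref{lem:fk}/\ref{lem:r-Delta}, and the same worst-case combinatorial bound on $\sum_{B_p\in V_\beta}|p-\widehat{q}(p)|$ (the paper, like you, absorbs the tighter $\tfrac{1}{2}v(v-1)$ count into the stated $v(v+1)\Delta$ slack). No gaps.
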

\begin{proof}
Consider episode $\beta$ that is not $(v,k)$-persistent over $W_s$ and let $V_\beta$ denote the batches of $W_s$ in which $\beta$ is in the top-$k$. The window frequency of $\beta$ can be written as:
\begin{eqnarray}
f^{W_s}(\beta) &=& \sum_{B_p\in V_\beta} f^p(\beta) + \sum_{B_q\in W_s\setminus V_\beta} f^q(\beta) \nonumber\\
&<& \sum_{B_p\in V_\beta} f^p_k + 2|\widehat{p}(q)-q|\Delta + \sum_{B_q\in W_s\setminus V_\beta} f^q_k \nonumber\\
&=& \sum_{B_{s'}\in W_s} f^{s'}_k + \sum_{B_p\in V_\beta} 2|\widehat{q}(p)-p|\Delta \label{eq:lemfU1}
\end{eqnarray}
where $B_{\widehat{q}(p)}\in W_s\setminus V_\beta$ denotes the batch nearest $B_p$ where $\beta$ is not in the top-$k$. Since $|V_\beta| < v$, we must have
\begin{eqnarray}
\sum_{B_p\in V_\beta} |\widehat{q}(p)-p| &\leq& (1+2+\cdots+(v-1)) \nonumber\\
&=&\frac{1}{2}v(v+1) \label{eq:lemfU2}
\end{eqnarray}
Putting together (\ref{eq:lemfU1}) and (\ref{eq:lemfU2}) gives us the lemma.
\end{proof}

It turns out that $f_U > f_L\ \forall v,$ $1\leq v \leq m$, and hence there is always a possibility for some episodes which are not $(v,k)$-persistent to end up with higher window frequencies than one or more $(v,k)$-persistent episodes. We observed a specific instance of this kind of `mixing' in our motivating example as well (cf.~{\em Example~\ref{eg:window-disconnect}}). This brings us to the top-$k$ separation property that we introduced in {\em Definition~\ref{def:topk-separation}}. Intuitively, if there is sufficient separation of the top-$k$ episodes from the rest of the episodes in every batch, then we would expect to see very little mixing. As we shall see, this separation need not occur exactly at $k^\mathrm{th}$ most-frequent episode in every batch, somewhere close to it is sufficient to achieve a good top-$k$ approximation.

\begin{definition}[Band Gap Episodes, $\scrG_\varphi$]
In any batch $B_{s'}\in W_s$, the half-open frequency interval $[f_{s'}^k - \varphi \Delta,\ f_{s'}^k)$ is called the {\em band gap} of $B_{s'}$. The corresponding set, $\scrG_\varphi$, of {\em band gap episodes} over the window $W_s$, is defined as the collection of all episodes with batch frequencies in the band gap of at least one $B_{s'}\in W_s$.
\label{def:band-gap-patterns}
\end{definition}

The main feature of $\scrG_\varphi$ is that, if $\varphi$ is large-enough, then the only episodes which are not $(v,k)$-persistent but that can still mix with $(v,k)$-persistent episodes are those belonging to $\scrG_\varphi$. This is stated formally in the next lemma.
\begin{lemma}
If $\frac{\varphi}{2} > \max \{1, (1-\frac{v}{m})(m-v+1)\}$, then any episode $\beta$ that is {\em not} $(v,k)$-persistent over $W_s$, can have $f^{W_s}(\beta) \geq f_L$ only if $\beta\in\scrG_\varphi$.
\label{lem:freq-G}
\end{lemma}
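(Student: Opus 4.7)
The plan is to prove the contrapositive: if $\beta$ is not $(v,k)$-persistent and $\beta\notin\scrG_\varphi$, then $f^{W_s}(\beta) < f_L$, where $f_L = \sum_{B_{s'}\in W_s} f^{s'}_k - (m-v)(m-v+1)\Delta$ from {\em Lemma~\ref{lem:fL}}. Let $V_\beta\subseteq W_s$ denote the set of batches in which $\beta$ is in the top-$k$; by assumption $|V_\beta| < v$, and the condition $\beta\notin\scrG_\varphi$ means that in every batch $B_{s'}$, either $f^{s'}(\beta) \geq f^{s'}_k$ (so $B_{s'}\in V_\beta$) or $f^{s'}(\beta) < f^{s'}_k - \varphi\Delta$ (no intermediate values are possible).

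The first key step is to show that the hypothesis $\varphi > 2$ forces $V_\beta = \emptyset$. Suppose for contradiction that $V_\beta$ is a non-empty proper subset of $W_s$. Since $W_s$ is a contiguous sequence of batches, there must exist adjacent batches $B_p \in V_\beta$ and $B_q\notin V_\beta$ with $|p-q|=1$. Combining $f^p(\beta) \geq f^p_k$ with $f^q(\beta) < f^q_k - \varphi\Delta$, and using the max-rate-of-change bound $f^p(\beta) - f^q(\beta) \leq \Delta$ together with {\em Lemma~\ref{lem:fk}} which gives $f^p_k - f^q_k \geq -\Delta$, one derives $\varphi < 2$, a contradiction. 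The remaining possibility $V_\beta = W_s$ is immediately ruled out by $|V_\beta| < v \leq m$.

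With $V_\beta = \emptyset$, every batch contributes a strict upper bound to the window frequency:
\[
f^{W_s}(\beta) \;=\; \sum_{B_{s'}\in W_s} f^{s'}(\beta) \;<\; \sum_{B_{s'}\in W_s}\bigl(f^{s'}_k - \varphi\Delta\bigr) \;=\; \sum_{B_{s'}\in W_s} f^{s'}_k \;-\; m\varphi\Delta.
\]
The second part of the hypothesis, $\varphi/2 > (1 - v/m)(m-v+1)$, rewrites as $m\varphi > 2(m-v)(m-v+1) > (m-v)(m-v+1)$, yielding $m\varphi\Delta > (m-v)(m-v+1)\Delta$, and hence $f^{W_s}(\beta) < f_L$, as required.

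The main obstacle is the adjacency argument establishing $V_\beta = \emptyset$: it is essential to simultaneously invoke the max-rate-of-change bound on $f(\beta)$ and {\em Lemma~\ref{lem:fk}} on the drift of $f_k$, which together supply a combined slack of $2\Delta$---this is precisely why the factor of $2$ appears in the lemma hypothesis on $\varphi/2$. The second condition $\varphi/2 > (1-v/m)(m-v+1)$ then simply calibrates $\varphi$ so that the resulting deficit $m\varphi\Delta$ dominates the overhead $(m-v)(m-v+1)\Delta$ built into the lower bound $f_L$.
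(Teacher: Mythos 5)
Your proof is correct and follows essentially the same route as the paper's: both hinge on the observation that, because the max-rate-of-change on $f(\beta)$ and the drift of $f_k$ from Lemma~\ref{lem:fk} give a combined slack of only $2\Delta$ between adjacent batches, an episode outside $\scrG_\varphi$ with $\varphi>2$ can never cross from below the band gap into the top-$k$, and hence sits below $f^{s'}_k-\varphi\Delta$ in every batch, after which summing and invoking $\frac{\varphi}{2}>(1-\frac{v}{m})(m-v+1)$ gives $f^{W_s}(\beta)<f_L$. Your adjacency-contradiction phrasing of that step is a slightly more explicit rendering of the paper's "cannot be in the top-$k$ of any neighboring batch" propagation argument, but the substance is identical.
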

\begin{proof}
If an episode $\beta$ is {\em not} $(v,k)$-persistent over $W_s$ then there exists a batch $B_{s'}\in W_s$ where $\beta$ is not in the top-$k$. Further, if $\beta\notin\scrG_\varphi$ then we must have $f_{s'}(\beta)<f^k_{s'}-\varphi\Delta$. Since $\varphi>2$, $\beta$ cannot be in the top-$k$ of any neighboring batch of $B_{s'}$, and hence, it will stay below $f_{s'}^k-\varphi\Delta$ for all $B_{s'}\in W_s$, i.e.,
\begin{equation*}
f^{W_s}(\beta) < \sum_{B_{s'}\in W_s} f^k_{s'} - m\varphi\Delta.
\end{equation*}
The Lemma follows from the given condition $\frac{\varphi}{2} > (1-\frac{v}{m})(m-v+1)$.
\end{proof}
The number of episodes in $\scrG_\varphi$ is controlled by the top-$k$ separation property, and since many of the non-persistent episodes which can mix with persistent ones must spend not one, but several batches in the band gap, the number of unique episodes that can cause such errors is bounded. {\em Theorem~\ref{thm:topk-approximation}} is our main result about quality of top-$k$ approximation that $(v,k)$-persistence can achieve.
\begin{theorem}[Quality of Top-$k$ Approximation]
Let every batch $B_{s'}\in W_s$ have a top-$k$ separation of $(\varphi,\epsilon)$ with $\frac{\varphi}{2} > \max \{1, (1-\frac{v}{m})(m-v+1)\}$. Let $\scrP$ denote the set of all $(v,k)$-persistent episodes over $W_s$. If $|\scrP| \geq k$, then the top-$k$ episodes over $W_s$ can be determined from $\scrP$ with an error of no more than $\left(\frac{\epsilon k m}{\mu}\right)$ episodes, where $\mu = \min \{ m-v+1, \frac{\varphi}{2}, \frac{1}{2}(\sqrt{1+2m\varphi} - 1)\}$.
\label{thm:topk-approximation}
\end{theorem}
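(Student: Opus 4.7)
The strategy is to reduce the error count to a count of non-$(v,k)$-persistent episodes whose window frequency is comparable to that of persistent ones, and then to bound this count by a pigeonhole argument on band-gap occurrences.

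First I would observe that any discrepancy between the true top-$k$ over $W_s$ and the top-$k$ subset extracted from $\scrP$ must be caused by a non-$(v,k)$-persistent episode $\beta$ whose window frequency matches or exceeds that of some persistent episode. Since \emph{Lemma~\ref{lem:fL}} gives $f^{W_s}(\alpha)\geq f_L$ for every $\alpha\in\scrP$ and $|\scrP|\geq k$, any such $\beta$ must satisfy $f^{W_s}(\beta)\geq f_L$. The hypothesis on $\varphi$ activates \emph{Lemma~\ref{lem:freq-G}}, which forces $\beta\in\scrG_\varphi$. Hence the error count is bounded by the number of non-persistent $\beta\in\scrG_\varphi$ with $f^{W_s}(\beta)\geq f_L$. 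Pigeonhole enters via \emph{Definition~\ref{def:topk-separation}}: each batch admits at most $\epsilon k$ episodes in its band gap, so $W_s$ has at most $\epsilon k m$ (batch, episode) band-gap incidences in total; if each candidate error episode occupies at least $\mu$ of these slots, the count is at most $\epsilon k m/\mu$, matching the claim.

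The core technical step is thus showing that any non-persistent $\beta$ with $f^{W_s}(\beta)\geq f_L$ lies in the band gap of at least $\mu$ batches. I would analyze the gap sequence $y_{s'}=f_k^{s'}-f^{s'}(\beta)$, which is $2\Delta$-Lipschitz in $s'$ by \emph{Lemma~\ref{lem:fk}} and \emph{Definition~\ref{def:maximum-rate-change}}. Partition $W_s$ into $V_\beta$ (where $y\leq 0$), $G_\beta$ (where $0<y\leq\varphi\Delta$), and $U_\beta$ (where $y>\varphi\Delta$), of sizes $b$, $g$, $u$. The three terms inside $\mu$ should correspond to three regimes: when $U_\beta=\emptyset$, non-persistence forces $b\leq v-1$ and thus $g\geq m-v+1$; when $V_\beta$ and $U_\beta$ are both nonempty, the $2\Delta$-Lipschitz constraint on $y$ requires any traversal from $\{y\leq 0\}$ to $\{y>\varphi\Delta\}$ to pass through at least $\lceil\varphi/2\rceil$ intermediate band-gap batches, yielding $g\geq\varphi/2$; finally, combining the $f_L$-implied summation constraint $\sum_{s'}y_{s'}\leq(m-v)(m-v+1)\Delta$ with the Lipschitz shape of $y$ should produce a quadratic inequality of the form $g(g+1)\leq m\varphi/2$, which inverts to $g\geq \tfrac{1}{2}(\sqrt{1+2m\varphi}-1)$.

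The step I expect to be most delicate is the third, quadratic bound. Getting the exact coefficients requires a tight accounting of the ``area'' that the $y$-profile must sweep while climbing from the band gap into $U_\beta$ and descending back, reconciled against the window-frequency ceiling $(m-v)(m-v+1)\Delta$; each below-band-gap batch $q$ forces roughly $(y_q-\varphi\Delta)/(2\Delta)$ band-gap batches on either side by Lipschitz, and summing these contributions over $U_\beta$ produces the quadratic. Once the three case bounds on $g$ are established, $g\geq\mu$ follows by taking the minimum over the three regimes, and the stated error bound $\epsilon km/\mu$ follows from the pigeonhole setup.
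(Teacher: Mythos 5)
Your proposal takes essentially the same route as the paper's proof: the same pigeonhole bound of $\epsilon k$ band-gap episodes per batch (hence $\epsilon k m$ incidences over the window), the same reduction of errors to non-persistent episodes with $f^{W_s}(\beta)\geq f_L$ forced into $\scrG_\varphi$, and a case analysis yielding the same three lower bounds on the number of band-gap batches (the paper merely merges your first two regimes into the single case ``$\beta$ is in the top-$k$ of some batch''). The one correction needed is in the quadratic regime: the constraint must come out as $g(g+1) > m\varphi/2$ --- obtained, as in the paper, by lower-bounding $y_p \geq \varphi\Delta - 2|p-\widehat{q}(p)|\Delta$ at each band-gap batch $p$, where $\widehat{q}(p)$ is the nearest below-band-gap batch, so the corrections sum to at most $g(g+1)\Delta$ --- since the inequality $g(g+1)\leq m\varphi/2$ as you wrote it would invert to an upper, not a lower, bound on $g$.
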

\begin{proof}
By top-$k$ separation, we have a maximum of $(1+\epsilon)k$ episodes in any batch $B_{s'}\in W_s$, with batch frequencies greater than or equal to $f_{s'}^k-\varphi\Delta$. Since at least $k$ of these must belong to the top-$k$ of the $B_{s'}$, there are no more than $\epsilon k$ episodes that can belong to the band gap of $B_{s'}$. Thus, there can be no more than a total of $\epsilon km$ episodes over all $m$ batches of $W_s$ that can belong to $\scrG_\varphi$.

Consider any $\beta\notin\scrP$ with $f^{W_s}(\beta)\geq f_L$ -- these are the only episodes whose window frequencies can exceed that of any $\alpha\in\scrP$ (since $f_L$ is the minimum window frequency of any $\alpha$). If $\mu$ denotes the minimum number of batches in which $\beta$ belongs to the band gap, then there can be at most $\left(\frac{\epsilon k m}{\mu}\right)$ such {\em distinct} $\beta$. Thus, if $|\scrP|\geq k$, we can determine the set of top-$k$ episodes over $W_s$ with error no more than $\left(\frac{\epsilon k m}{\mu}\right)$ episodes.

There are now two cases to consider to determine $\mu$: (i)~$\beta$ is in the top-$k$ of some batch, and (ii)~$\beta$ is not in the top-$k$ of any batch.

Case (i): Let $\beta$ be in the top-$k$ of $B_{s'}\in W_s$. Let $B_{s''}\in W_s$ be $t$ batches away from $B_{s'}$. Using {\em Lemma~\ref{lem:r-Delta}} we get $f^{s''}(\beta) \geq f_{s''}^k - 2t\Delta$. The minimum $t$ for which $(f_{s''}^k - 2t\Delta < f_s^k - \varphi\Delta)$ is $\left(\frac{\varphi}{2}\right)$. Since $\beta\notin\scrP$, $\beta$ is below the top-$k$ in at least $(m-v+1)$ batches. Hence $\beta$ stays in the band gap of at least $\min\{m-v+1,\frac{\varphi}{2}\}$ batches of $W_s$.

Case (ii): Let $V_G$ denote the set of batches in $W_s$ where $\beta$ lies in the band gap and let $|V_G|=g$. Since $\beta$ does not belong to top-$k$ of any batch, it must stay below the band gap in all the $(m-g)$ batches of $(W_s\setminus V_G)$. Since $\Delta$ is the maximum rate of change, the window frequency of $\beta$ can be written as follows:
\begin{eqnarray}
f^{W_s}(\beta) &=& \sum_{B_p\in V_G} f^p(\beta) + \sum_{B_q\in W_s\setminus V_G} f^q(\beta) \nonumber\\
&<& \sum_{B_p\in V_G} f^p(\beta) + \sum_{B_q\in W_s\setminus V_G} (f^k_q-\varphi\Delta) \label{eq:thm-topk-pf1}
\end{eqnarray}
Let $B_{\widehat{q}(p)}$ denote the batch in $W_s\setminus V_G$ that is nearest to $B_p\in V_G$. Then we have:
\begin{eqnarray}
f^p(\beta) &\leq& f^{\widehat{q}(p)}(\beta) + |p-\widehat{q}(p)|\Delta \nonumber\\
&<& f^k_{\widehat{q}(p)} -\varphi\Delta + |p-\widehat{q}(p)|\Delta \nonumber\\
&<& f^k_p -\varphi\Delta + 2 |p-\widehat{q}(p)|\Delta \label{eq:thm-topk-pf2}
\end{eqnarray}
where the second inequality holds because $\beta$ is below the band gap in $B_{\widehat{q}(p)}$ and (\ref{eq:thm-topk-pf2}) follows from {\em Lemma~\ref{lem:fk}}. Using (\ref{eq:thm-topk-pf2}) in (\ref{eq:thm-topk-pf1}) we get
\begin{eqnarray}
f^{W_s}(\beta) &<& \sum_{B_{s'}\in W_s} f^k_{s'} -m\varphi\Delta+ \sum_{B_p\in V_G} 2|p-\widehat{q}(p)|\Delta \nonumber\\
&<& \sum_{B_{s'}\in W_s} f^k_{s'} -m\varphi\Delta+ 2(1+2+\cdots+g)\Delta\nonumber\\
&=& \sum_{B_{s'}\in W_s} f^k_{s'} -m\varphi\Delta+ g(g+1)\Delta = \mathrm{UB}\label{eq:thm-topk-pf3}
\end{eqnarray}
The smallest $g$ for which $(f^{W_s}(\beta)\geq f_L)$ is feasible can be obtained by setting $\mathrm{UB}\geq f_L$. Since $\frac{\varphi}{2} > (1-\frac{v}{m})(m-v+1)$, $\mathrm{UB} \geq f_L$ implies
\begin{eqnarray}
\sum_{B_{s'}\in W_s} f^k_{s'} - m\varphi\Delta + g(g+1)\Delta
&>& \sum_{B_{s'}\in W_s} f^k_{s'} - \frac{m\varphi\Delta}{2} \nonumber
\end{eqnarray}
Solving for $g$, we get $g\geq \frac{1}{2}(\sqrt{1+2m\varphi}-1)$. Combining cases (i) and (ii), we get $\mu=\min \{ m-v+1, \frac{\varphi}{2}, \frac{1}{2}(\sqrt{1+2m\varphi} - 1)\}$.
\end{proof}

{\em Theorem~\ref{thm:topk-approximation}} shows the relationship between the extent of top-$k$ separation required and quality of top-$k$ approximation that can be obtained through $(v,k)$-persistent episodes. In general, $\mu$ increases with $\frac{\varphi}{2}$ until the latter starts to dominate the other two factors, namely, $(m-v+1)$ and $\frac{1}{2}(\sqrt{1+2m\varphi}-1)$. The theorem also brings out the tension between the persistence parameter $v$ and the quality of approximation. At smaller values of $v$, the algorithm mines `deeper' within each batch and so we expect fewer errors with respect to the true top-$k$ epispodes. On the other hand, deeper mining within batches is computationally more intensive, with the required effort approaching that of exact top-$k$ mining as $v$ approaches 1. Finally, we use {\em Theorem~\ref{thm:topk-approximation}} to derive error-bounds for three special cases; first for $v=1$, when the batchwise threshold is same as that for exact top-$k$ mining as per {\em Theorem~\ref{thm:topk-mine}}; second for $v=m$, when the batchwise threshold is simply the batch frequency of the $k^{\rm th}$ most-frequent episode in the batch; and third, for $v=\left\lfloor\frac{m+1}{2}\right\rfloor$, when the batchwise threshold lies midway between the thresholds of the first two cases.
\begin{corollary}

Let every batch $B_{s'}\in W_s$ have a top-$k$ separation of $(\varphi,\epsilon)$ and let $W_s$ contain at least $m\geq 2$ batches. Let $\scrP$ denote the set of all $(v,k)$-persistent episodes over $W_s$. If we have $|\scrP| \geq k$, then the maximum number of errors in the top-$k$ episodes derived from $\scrP$, for three different choices of $v$, is given by:
\begin{enumerate}
\item $\left(\frac{\epsilon k m}{m-1}\right)$, for $v=1$, if $\frac{\varphi}{2} > (m-1)$
\item $(\epsilon k m)$, for $v=m$, if $\frac{\varphi}{2} > 1$
\item $\left(\frac{4 \epsilon k m^2}{m^2-1}\right)$, for $v=\left\lfloor\frac{m+1}{2}\right\rfloor$, if $\frac{\varphi}{2} > \frac{1}{m}\left\lceil \frac{m-1}{2}\right\rceil\left\lceil\frac{m+1}{2} \right\rceil$
\end{enumerate}
\label{cor:topk-approximation-heuristic-v}
\end{corollary}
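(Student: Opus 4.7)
The plan is to specialize Theorem~\ref{thm:topk-approximation} to each of the three prescribed values of $v$, in each case verifying that the preconditions translate appropriately and that the quantity $\mu=\min\{m-v+1,\frac{\varphi}{2},\frac{1}{2}(\sqrt{1+2m\varphi}-1)\}$ satisfies a lower bound that yields the stated error estimate $\frac{\epsilon k m}{\mu}$. The hypothesis $|\scrP|\geq k$ carries over verbatim.

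The first task is to evaluate $(1-\frac{v}{m})(m-v+1)$ at each $v$. For $v=1$ this gives $m-1$; for $v=m$ it gives $0$, so that the requirement $\frac{\varphi}{2}>\max\{1,0\}=1$ is what appears; and for $v=\lfloor\frac{m+1}{2}\rfloor$, a short case split on the parity of $m$ (writing $m=2q+1$ or $m=2q$) reduces the expression to $\frac{1}{m}\lceil\frac{m-1}{2}\rceil\lceil\frac{m+1}{2}\rceil$, matching the stated hypotheses of the corollary.

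A single algebraic observation will unify the $\mu$ bounds: whenever $\frac{\varphi}{2}>c$ and $c\leq m-1$, we also have $\frac{1}{2}(\sqrt{1+2m\varphi}-1)>c$. This follows from the chain $2m\varphi>4mc\geq 4c(c+1)$, so that $1+2m\varphi>(2c+1)^2$. Armed with this, the first case is immediate: for $v=1$ we have $m-v+1=m>m-1$, $\frac{\varphi}{2}>m-1$ by hypothesis, and the square-root term exceeds $m-1$ by the observation with $c=m-1$; hence $\mu>m-1$ and the error is at most $\frac{\epsilon k m}{m-1}$. For $v=m$ the first term of the minimum becomes $1$, and applying the observation with $c=1$ forces the remaining two terms to exceed $1$, so $\mu=1$ and the bound $\epsilon k m$ drops out.

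The third case, $v=\lfloor\frac{m+1}{2}\rfloor$, is the main obstacle, since the target value $\frac{m^2-1}{4m}$ is a non-integer quantity that must simultaneously lie below all three arguments of the minimum. First, $m-v+1=\lceil\frac{m+1}{2}\rceil\geq\frac{m+1}{2}$, and $\frac{m+1}{2}>\frac{(m-1)(m+1)}{4m}$ reduces to $2m>m-1$, which always holds. Second, $\frac{\varphi}{2}>\frac{m^2-1}{4m}$ holds by hypothesis. For the third, invoking the observation with $c=\frac{m^2-1}{4m}$ requires $c\leq m-1$, which rearranges to $(3m-1)(m-1)\geq 0$, true for $m\geq 1$. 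Combining, $\mu>\frac{m^2-1}{4m}$, so $\frac{\epsilon k m}{\mu}<\frac{4\epsilon k m^2}{m^2-1}$, finishing the proof.
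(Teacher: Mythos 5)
Your proof is correct and follows essentially the same route as the paper's: specialize Theorem~\ref{thm:topk-approximation} to each choice of $v$, check that $(1-\frac{v}{m})(m-v+1)$ reduces to the stated threshold, and lower-bound $\mu=\min\{m-v+1,\frac{\varphi}{2},\frac{1}{2}(\sqrt{1+2m\varphi}-1)\}$ by the reciprocal of the claimed error factor --- your unified observation ``$\frac{\varphi}{2}>c$ and $c\leq m-1$ imply $\frac{1}{2}(\sqrt{1+2m\varphi}-1)>c$'' is just a cleaner packaging of the paper's separate $g_{\rm min}$ computation, and you additionally spell out cases 1 and 2, which the paper dismisses as immediate. (One shared wrinkle, inherited from the paper rather than introduced by you: for $m=3$ the case-3 hypothesis reads $\frac{\varphi}{2}>\frac{2}{3}$, which does not by itself imply the ``$\frac{\varphi}{2}>1$'' half of the precondition $\frac{\varphi}{2}>\max\{1,(1-\frac{v}{m})(m-v+1)\}$ of Theorem~\ref{thm:topk-approximation}.)
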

\begin{proof}
We show the proof only for $v=\left\lfloor\frac{m+1}{2}\right\rfloor$. The cases of $v=1$ and $v=m$ are obtained immediately upon application of {\em Theorem~\ref{thm:topk-approximation}}.

Fixing $v=\left\lfloor\frac{m+1}{2}\right\rfloor$ implies $(m-v)=\left\lceil\frac{m-1}{2}\right\rceil$. For $m\geq 2$, $\frac{\varphi}{2} > \frac{1}{m}\left\lceil \frac{m-1}{2}\right\rceil\left\lceil\frac{m+1}{2} \right\rceil$ implies $\frac{\varphi}{2} > \max \{1, (1-\frac{v}{m})(m-v+1)\}$. Let $t_{\rm min} = \min\{m-v+1,\frac{\varphi}{2}\}$. The minimum value of $t_{\rm min}$ is governed by
\begin{eqnarray}
t_{\rm min} &\geq& \min \left\{ \left\lceil\frac{m+1}{2}\right\rceil, \frac{1}{m}\left\lceil \frac{m-1}{2}\right\rceil \left\lceil\frac{m+1}{2} \right\rceil \right\}\nonumber\\
&=& \frac{1}{m}\left\lceil \frac{m-1}{2}\right\rceil \left\lceil\frac{m+1}{2} \right\rceil \nonumber\\
&\geq& \left(\frac{m^2-1}{4m}\right)
\label{eq:thm-topk-heuristic-pf1}
\end{eqnarray}
Let $g_{\rm min}=\frac{1}{2}(\sqrt{1+2m\varphi} - 1)$. $\varphi > \frac{2}{m}\left\lceil \frac{m-1}{2}\right\rceil\left\lceil\frac{m+1}{2} \right\rceil$ implies $g_{\rm min}>\left(\frac{m-1}{2}\right)$. From {\em Theorem~\ref{thm:topk-approximation}} we have
\begin{equation*}
\mu=\min\{t_{\rm min}, g_{\rm min}\} \geq \left(\frac{m^2-1}{4m}\right)
\end{equation*}
and hence the number of errors is no more than $\left(\frac{4\epsilon k m^2}{m^2-1}\right)$.
\end{proof}

\subsection{Incremental Algorithm}
\label{sec:incremental-algorithm}

In this section we present an efficient algorithm for incrementally mining patterns with frequency $\geq (\fk{s} - \theta) $. From our formalism, the value of $\theta$ is specified by the type of patterns we want to mine. For $(v,k)$ persistence, $\theta = 2(m-v)\Delta$ whereas for mining the exact top-$k$ the threshold is $2(m-1)\Delta$.

\begin{figure}[htbp]
\centering
\includegraphics[width=\columnwidth]{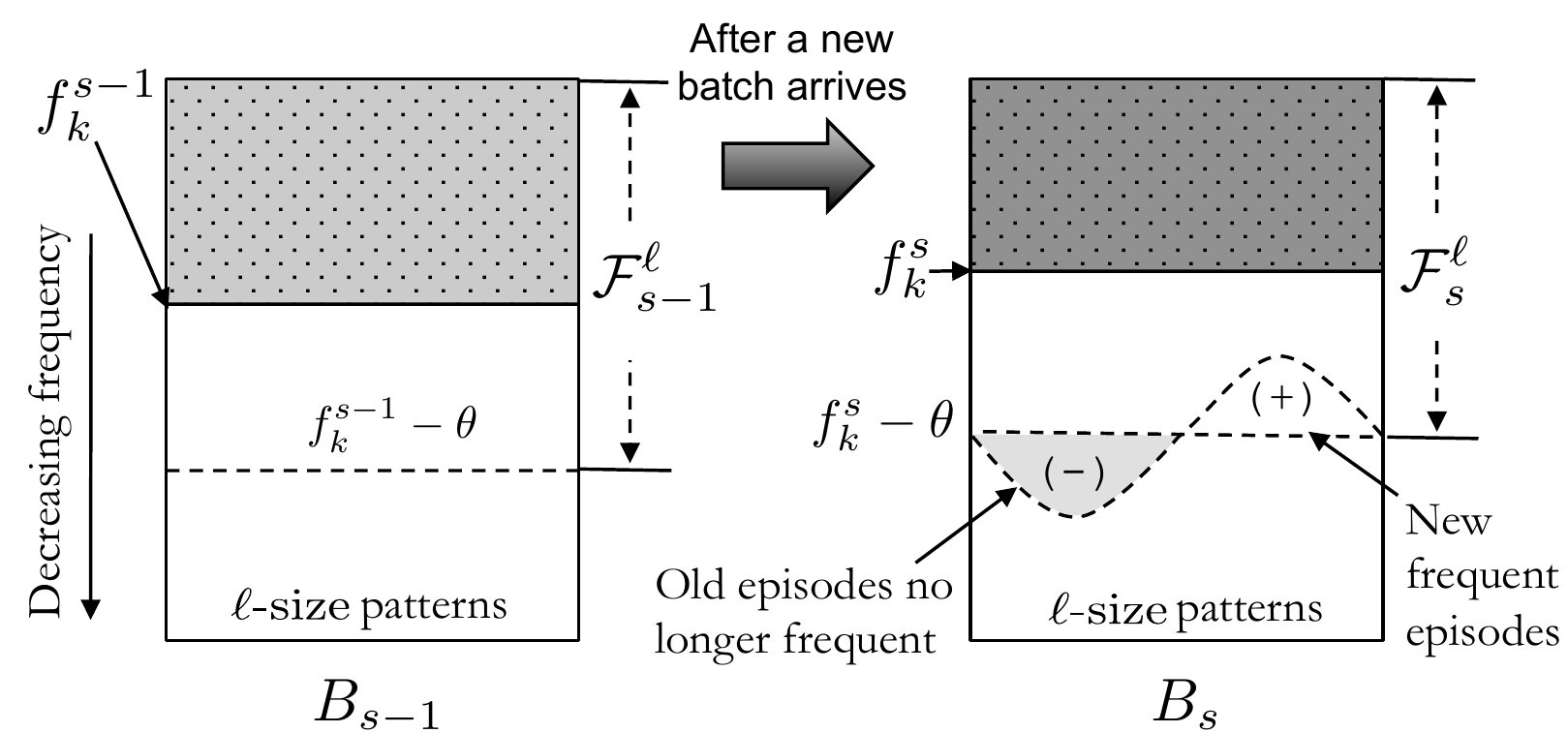}
\caption{The set of frequent patterns can be incrementally updated as new batches arrive.}
\label{fig:batch-update}
\end{figure}

Recall that the goal of our mining task is to report frequent patterns of size $\ell$. 
After processing the data in the batch $B_{s-1}$, we desire all patterns with frequency greater than $(\fk{s-1}-\theta)$. Algorithmically this is achieved by first setting a high frequency threshold and mining for patterns using  the classical level wise Apriori method~\cite{srikant-agrawal}. 
If the number of patterns of size-$\ell$ is less than $k$, the support threshold is decreased and the mining repeated until atleast $k$ $\ell$-size patterns are found. At this point $\fk{s}$ is known. The mining process is repeated once more with the frequency threshold $(\fk{s}-\theta)$. Doing this entire procedure for every new batch can be expensive and wasteful. After seeing the first batch of the data, whenever a new batch arrives we have information about the patterns that were frequent in the previous batch. This can be exploited to incrementally and efficiently update the set of frequent episodes in the new batch. The intuition behind this is that the frequencies of the majority of episodes do not change much from one batch to the next. As a result a small number of episode fall below the new support threshold in the new batch. There is also the possibility of some new episodes becoming frequent. This is illustrated in Figure~\ref{fig:batch-update}. In order to efficiently find these sets of episodes, we need to maintain additional information that allows us to avoid full-blown candidate generation.
We show that this state information is a by-product of Apriori algorithm and therefore any extra processing is unnecessary.

In the Apriori algorithm, frequent patterns are discovered iteratively, in ascending order of their size and it is often referred to as a levelwise procedure. The procedure alternates between counting and candidate generation. First a set $C^{i}$ of candidate $i$-size patterns is created by joining the frequent $(i-1)$-size itemsets found in the previous iteration. Then the data is scanned for determining the frequency or count of each candidate pattern and the frequent $i$-size patterns are extracted from the candidates. An interesting observation is that all candidate episodes that are not frequent constitute the negative border of the frequent lattice. This is true because, in the Apriori algorithm, a candidate pattern is generated only when all its subpatterns are frequent. The usual approach is to discard the border. 
For our purposes, the patterns in the border contain the information required to identify the change in the frequent sets from one batch to the next.

The pseudocode for incrementally mining frequent patterns in batches is listed in Algorithm~\ref{alg:mine-top-k}. 
Let the frequent episodes of size-$i$ be denoted by $\mathcal{F}_{s}^{i}$. Similarly, the border episodes of size-$i$ are denoted by $\mathcal{B}_{s}^{i}$. The frequency threshold used in each batch is $\fk{s}-\theta$. In the first batch of data, the top-$k$ patterns are found by progressively lowering the frequency threshold $f_{min}$ by a small amount $\epsilon$ (Lines 1-8). Once atleast $k$ patterns of size $\ell$ are found, $f_{k}^{s}$ is determined and the mining procedure repeated with a threshold of $\fk{s}-\theta$. The border patterns generated during level wise mining are retained.

For subsequent batches, first $\fk{s}$ is determined. As shown in Remark~\ref{rem:batch-topk}, if $\theta \geq 2\Delta$, then the set of frequent patterns $\mathcal{F}_{s-1}^{\ell}$ in batch $B_{s-1}$ contains all patterns that can be frequent in the next batch $B_{s}$. Therefore simply updating the counts of all patterns in $\mathcal{F}_{s-1}^{\ell}$ in the batch $B_{s}$ and picking the $k^{th}$ highest frequency gives $\fk{s}$ (Lines 10-11). The new frequency threshold $f_{min}$ is set to be $\fk{s}-\theta$.
The procedure, starting from bottom (size-1 patterns) updates the lattice for $B_{s}$. The data is scanned to determine the frequency of new candidates together with the frequent and border patterns from the lattice (Line 15-18). In the first level (patterns of size 1), the candidate set is empty. After counting, the patterns from the frequent set $\mathcal{F}_{s-1}^{\ell}$ that continue to be frequent in the new batch are added to $\mathcal{F}_{s}^{\ell}$. But if a pattern is no longer frequent it is marked as a border set and all its super episodes are deleted (Lines 19-24). This ensures that only border patterns are retained in the lattice.
All patterns, either from the border set or the new candidate set, that are found to be frequent are added to $\mathcal{F}_{s}^{\ell}$. Such episodes are also added to $F_{new}^{i}$. Any remaining infrequent patterns belong to border set because otherwise they would have atleast one of infrequent subpatterns and would have been deleted at a previous level (Line 24). These patterns are added to $\mathcal{B}_{s}^{\ell}$ (Line 30).
The candidate generation step is required to fill out the missing parts of the frequent lattice. We want to avoid a full blown candidate generation. Note that if a pattern is frequent in $B_{s-1}$ and $B_{s}$ then all its subpatterns are also frequent in both $B_{s}$ and $B_{s-1}$. Any new pattern ($\not\in \mathcal{F}_{s-1}^{\ell} \cup \mathcal{B}_{s-1}^{\ell}$) that turns frequent in $B_{s}$, therefore, must have atleast one subpattern that was not frequent in $B_{s-1}$ but is frequent in $B_{s}$. All such patterns are listed in $F_{new}^{i}$.
The candidate generation step (Line 31)  for the next level generates only candidate patterns with atleast one subpattern $\in F_{new}^{i}$.  This greatly restricts the number of candidates generated at each level without compromising the completeness of the results.

The space and time complexity of the candidate generation is now $O(|F_{new}^{i}|.|\mathcal{F}_{s}^{i}|)$ instead of $O(|\mathcal{F}_{s}^{i}|^{2})$ and in most practical cases $|F_{new}^{i}| \ll |\mathcal{F}_{s}^{i}|$. This is crucial in a streaming application where processing rate must match the data arrival rate.

\begin{algorithm}[!ht]
\caption{Mine top-$k$ $v$-persistent patterns.}
\label{alg:mine-top-k}
\begin{algorithmic}[1]
\small
\REQUIRE{A new batch of events $B_{s}$, the lattice of frequent and border patterns $(\mathcal{F}^{*}_{s-1}, \mathcal{B}^{*}_{s-1})$, and parameters $k$ and $\theta$}
\ENSURE{The lattice of frequent and border patterns $(\mathcal{F}^{*}_{s}, \mathcal{B}^{*}_{s})$}
\IF {$s = 1$}
	\STATE $f_{min} = $ high value
	\WHILE{$|\mathcal{F}_{s}^{\ell}| < k$}
		\STATE Mine patterns with frequency $\geq f_{min}$
		\STATE $f_{min} = f_{min} - \epsilon$
	\ENDWHILE
	\STATE $\fk{s} = $ frequency of the $k^{th}$ most frequent pattern $\in \mathcal{F}_{s}^{\ell}$
	\STATE Mine $\ell$-size patterns in $B_{s}$ with frequency threshold $f_{min} = \fk{s} - \theta$
	\STATE Store the frequent and border patterns (of size = $1\ldots \ell$) in $(\mathcal{F}^{*}_{s}, \mathcal{B}^{*}_{s})$
\ELSE
	\STATE {\bf CountPatterns}$(\mathcal{F}^{\ell}_{s-1}, B_{s})$
	\STATE Set $\fk{s} =$ frequency $k^{th}$ highest frequency (pattern $\in \mathcal{F}^{\ell}_{s-1}$)
	\STATE Set frequency threshold for $B_{s}$, $f_{min} =(\fk{s}-\theta)$
	\STATE $\mathcal{C}^{1} = \varphi$ \COMMENT{New candidate patterns of size $=1$}
	\FOR{$i = 1\ldots\ell-1$}
		\STATE $\mathcal{F}^{i}_{s} = \varphi$ \COMMENT{Frequent patterns of size $i$}
		\STATE $\mathcal{B}^{i}_{s} = \varphi$ \COMMENT{Border patterns of size $i$}
		\STATE $F^{i}_{new} = \varphi$ \COMMENT{List of newly frequent Patterns}
		\STATE {\bf CountPatterns}$(\mathcal{F}^{i}_{s-1} \cup \mathcal{B}^{i}_{s-1} \cup \mathcal{C}^{i}, B_{s})$
		\FOR{$\alpha \in \mathcal{F}^{i}_{s-1}$}
			\IF{$\f{s} \geq f_{min}$}
				\STATE $\mathcal{F}^{i}_{s} = \mathcal{F}^{i}_{s} \cup \{\alpha \}$
			\ELSE
				\STATE $\mathcal{B}^{i}_{s} = \mathcal{B}^{i}_{s} \cup \{\alpha \}$
				\STATE Delete all its super-patterns from $(\mathcal{F}^{*}_{s-1}, \mathcal{B}^{*}_{s-1})$
 			\ENDIF
		\ENDFOR
		\FOR{$\alpha \in \mathcal{B}^{i}_{s-1} \cup \mathcal{C}^{i}$}
			\IF{$\f{s} \geq f_{min}$}
				\STATE $\mathcal{F}^{i}_{s} = \mathcal{F}^{i}_{s} \cup \{\alpha \}$
				\STATE $F^{i}_{new} = F^{i}_{new} \cup \{\alpha\}$
			\ELSE
				\STATE $\mathcal{B}^{i}_{s} = \mathcal{B}^{i}_{s} \cup \{\alpha \}$
 			\ENDIF
		\ENDFOR
		\STATE $C^{i+1} = \mbox{\bf GenerateCandidate}_{i+1}(F^{i}_{new}, \mathcal{F}^{i}_{s})$
	\ENDFOR
\ENDIF
\RETURN $(\mathcal{F}^{*}_{s}, \mathcal{B}^{*}_{s})$
\end{algorithmic}
\end{algorithm}

For a window $W_{s}$ ending in the batch $B_{s}$, the set of output patterns can be obtained by picking the top-$k$ most frequent patterns from the set $\mathcal{F}_{s}^{\ell}$. Each pattern also maintains a list that stores its batch-wise counts is last $m$ batches. The window frequency is obtained by adding these entries together. The output patterns are listed in decreasing order of their window counts.

\begin{example}
In this example we illustrate the procedure for incrementally updating the frequent patterns lattice as a new batch $B_{s}$ is processed (see Figure~\ref{fig:lattice}).
\begin{figure}[htbp]
\centering
\includegraphics[width=\columnwidth]{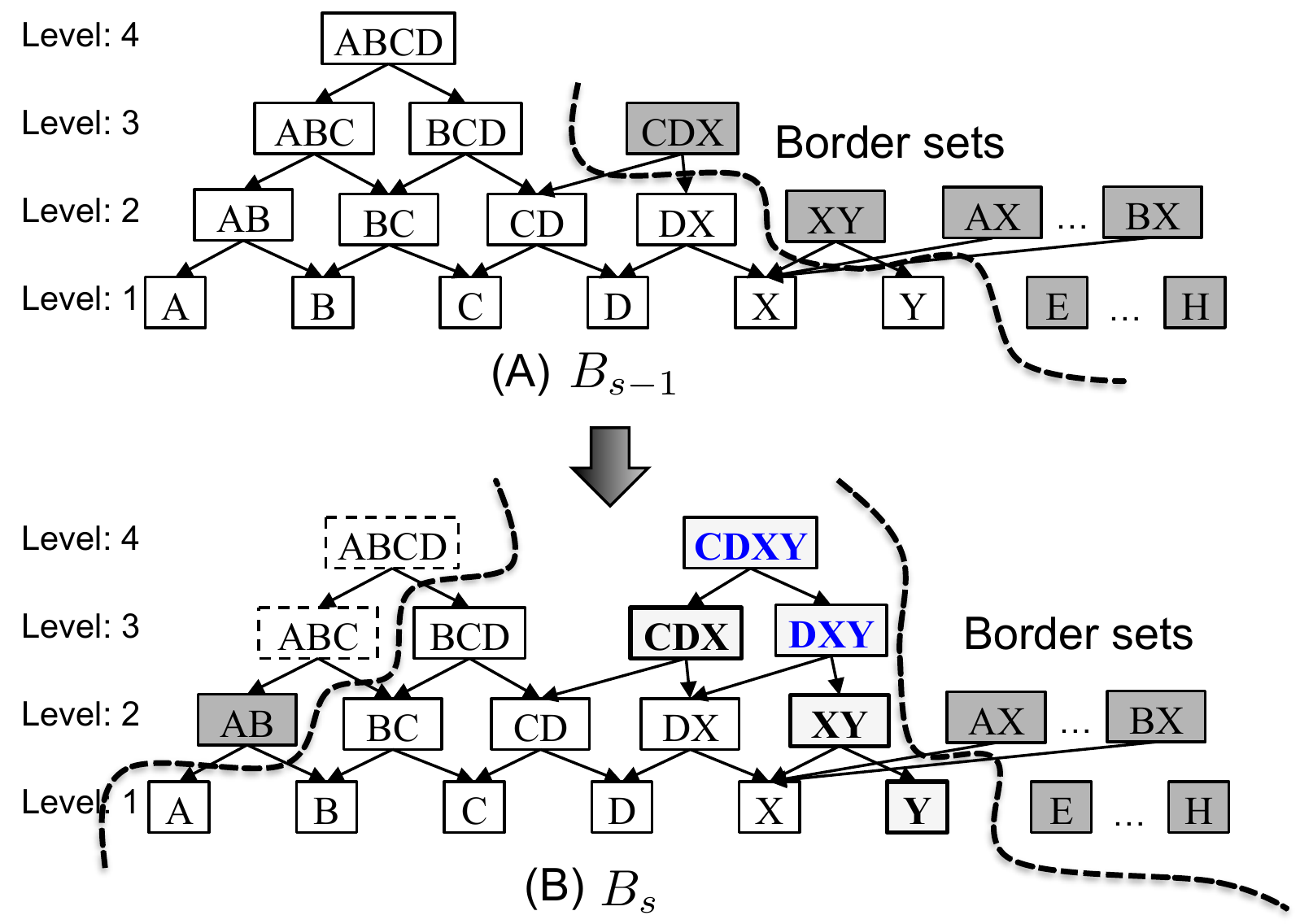}
\caption{Incremental lattice update for the next batch $B_{s}$ given the lattice of frequent and border patterns in $B_{s-1}$.}
\label{fig:lattice}
\end{figure}

Figure~\ref{fig:lattice}(A) shows the lattice of frequent and border patterns found in the batch $B_{s-1}$. $A B C D$ is a 4-size frequent pattern in the lattice. In the new batch $B_{s}$, the pattern $A B C D$ is no longer frequent. The pattern $C D X Y$ appears as a new frequent pattern. The pattern lattice in $B_{s}$ is shown in Figure~\ref{fig:lattice}(B).

In the new batch $B_{s}$, $A B$ falls out of the frequent set. $AB$ now becomes the new border and all its super-patterns namely $ABC$, $BCD$ and $ABCD$ are deleted from the lattice.

At level 2, the border pattern $X Y$ turns frequent in $B_{s}$. This allows us to generate $DXY$ as a new 3-size candidate. At level 3, $DXY$ is also found to be frequent and is combined with $CDX$ which is also frequent in $B_{s}$ to generate $CDXY$ as a 4-size candidate. Finally at level 4, $CDXY$ is found to be frequent. This shows that border sets can be used to fill out the parts of the pattern lattice that become frequent in the new data.

\end{example}

\subsection{Estimating $\Delta$ dynamically}
The parameter $\Delta$ in the bounded rate change assumption is a critical parameter in the entire formulation. But unfortunately the choice of the correct value for $\Delta$ is highly data-dependent. In the streaming setting, the characteristics of the data can change over time. Hence one predetermined value of $\Delta$ cannot be provided in any intuitive way. Therefore we estimate $\Delta$ from the frequencies of $\ell$-size episodes in consecutive windows. We compute the differences in frequencies of episodes that are common in consecutive batches. Specifically, we consider the value at the 75th percentile as an estimate of $\Delta$. We avoid using the maximum change as it tends to be noisy. A few patterns exhibiting large changes in frequency can skew the estimate and adversely affect the mining procedure.

\section{Results}
\label{sec:results}

In this section we present results both on synthetic data and data from real neuroscience experiments. We compare the performance of the proposed streaming episode mining algorithm on synthetic data to quantify the effect of different parameter choices and data characteristics on the quality of the top-k episodes reported by each method. Finally we show the quality of results obtained on neuroscience data.

For the purpose of comparing the quality of results we setup the following six variants of the mining frequent episodes:
\begin{description}
\item[Alg 0:] This is the naive brute force top-k mining algorithm that loads an entire window of events at a time and mines the top-k episode by repeatedly lowering the frequency threshold for mining. When a new batch arrives, events from the oldest batch are retired and mining process is repeated from scratch. This method acts as the baseline for comparing all other algorithms in terms of precision and recall.

\item[Alg 1:] The top-k mining is done batch-wise. The top-k episodes over a window are 
reported from within the set of episodes that belong to the batch-wise top-k of atleast one batch in the window.

\item[Alg 2:] Here the algorithm is same is above, but once an episodes enters  the top-k in any of the batches in a window, it is tracked over several subsequent batches. An episode is removed from the list of episodes being tracked if it does not occur in the top-k of last $m$ consecutive batches. This strategy helps obtaining a larger candidate set and also in getting more accurate counts of candidate patterns over the window.

\item[Alg 3:] This algorithm uses a batch-wise frequency threshold $\fk{s}-2\delta$ which ensures that the top-k episodes in the next batch $B_{s+1}$ are contained in the frequent lattice of $B_{s}$. This avoids multiple passes of the data while trying to obtain $k$ most frequent episodes lowering the support threshold iteratively. The patterns with frequency between $\fk{s}$ and $\fk{s}-2\delta$ also improve the overall precision and recall with respect to the window.

\item[Alg 4:] In this case the batch-wise frequency threshold is $\fk{s}-2(m-v)\delta$ which guarantees finding all $(v,k)$-persistent episodes in the data. We report results for $v=3m/4$ and $v=m/2$.

\item[Alg 5:] Finally, this last algorithm uses a heuristic batchwise threshold of $f_{k}^s -m(2-\frac{v}{m} -(\frac{v}{m})^2)\Delta$. Again we report results for $v=3m/4$ and $v=m/2$.

\end{description}

\subsection{Synthetic Datasets}
The datasets we used for experimental evaluation are listed in Table~\ref{tab:datasets}. The name of the
data set is listed in Column 1, the length of the data set (or number of time-slices in the data
sequence) in Column 2, the size of the alphabet (or total number of event types) in
Column 3, the average rest firing rate in Column 4 and the number of patterns embedded in Column 5.
In these datasets the data length is varied from - million to - million events, the alphabet size is varied from 1000 to 5000, the resting firing rate from 10.0 to 25.0, 
and the number of patterns embedded in the data from 25 to 50.

\paragraph{Data generation model:}
The data generation model for synthetic data is based on the inhomogeneous Poisson process model for evaluating the algorithm for learning excitatory dynamic networks \cite{PLR10}. We introduce two changes to this model. First, in order to mimic real data more closely 
in the events that constitute the background noise the event-type distribution follows a power law distribution. This gives the long tail characteristics to the simulated data.

The second modification was to allow the rate of arrival of episodes to change over time. As time progresses, the frequency of episodes in the recent window or batch slowly changes. We use a randomized scheme to update the connection strengths in the neuronal simulation model. The updates happen at the same timescale as the batch sizes used for evaluation.

\begin{table}
  \centering
  \caption{Datasets}
  \label{tab:datasets}
  \vspace{1mm}
    \begin{tabular}{|l|rrr|}
	\hline
    Dataset & Alphabet  & Rest Firing & Number of \\
	Name	& Size & Rate	& Patterns \\
	\hline
	A1 & 500 & 10.0 & 50\\
	A2 & 1000 & 10.0 & 50\\
	A3 & 5000 & 10.0 & 50\\
	\hline
	B1 & 1000 & 2.0 & 50\\
	B2 & 1000 & 10.0 & 50\\
	B3 & 1000 & 25.0 & 50\\
	\hline
	C1 & 1000 & 10.0 & 10\\
	C1 & 1000 & 10.0 & 25\\
	C1 & 1000 & 10.0 & 50\\
	\hline
    \end{tabular}
\end{table}

\subsection{Comparison of algorithms}

In Fig.~\ref{fig:all-compare}, we compare the five algorithms---Alg 1 through Alg 5---that 
report the frequent episodes over the window looking at one batch at a time with the baseline algorithm Alg 0 that stores and processes the entire window at each window slide. The results are averaged over all 9 data sets shown in Table~\ref{tab:datasets}. We expect to marginalize the data characteristics and give a more general picture of each algorithm. The parameter settings for the experiments are shown in Table~\ref{tab:params}. 
Fig.~\ref{fig:all-compare} (a) plots the precision of the output of each algorithm compared to that of Alg 0 (treated as ground truth). Similarly, Fig~\ref{fig:all-compare} (b) shows the recall. Since the size of output of each algorithm is roughly $k$, the corresponding precision and recall numbers are almost the same.
Average runtimes are shown in Fig.~\ref{fig:all-compare}~(c) and average memory requirement in MB is shown in Fig.~\ref{fig:all-compare}~(d).

\begin{table}[htdp]
\centering
\caption{Parameter settings}
\label{tab:params}
\begin{tabular}{|l|l|}
\hline
Parameter & Value(s)\\
\hline
Batch size $T_{b}$ & $10^{5}$ sec ($\approx 1$ million events per batch)\\
Number of batches in a window $m$ & 10 (5,15)\\
$v$, in $(v,k)$-persistence & 0.5m, 0.75m\\
$k$ in $(v,k)$-persistence and in top-$k$ & 25, 50\\
$\ell$ - size of episode & 4 \\
\hline
\end{tabular}
\end{table}

\begin{figure}[htbp]
\centering
\subfigure[Precision] {\includegraphics[width=0.45\columnwidth,height=2in]{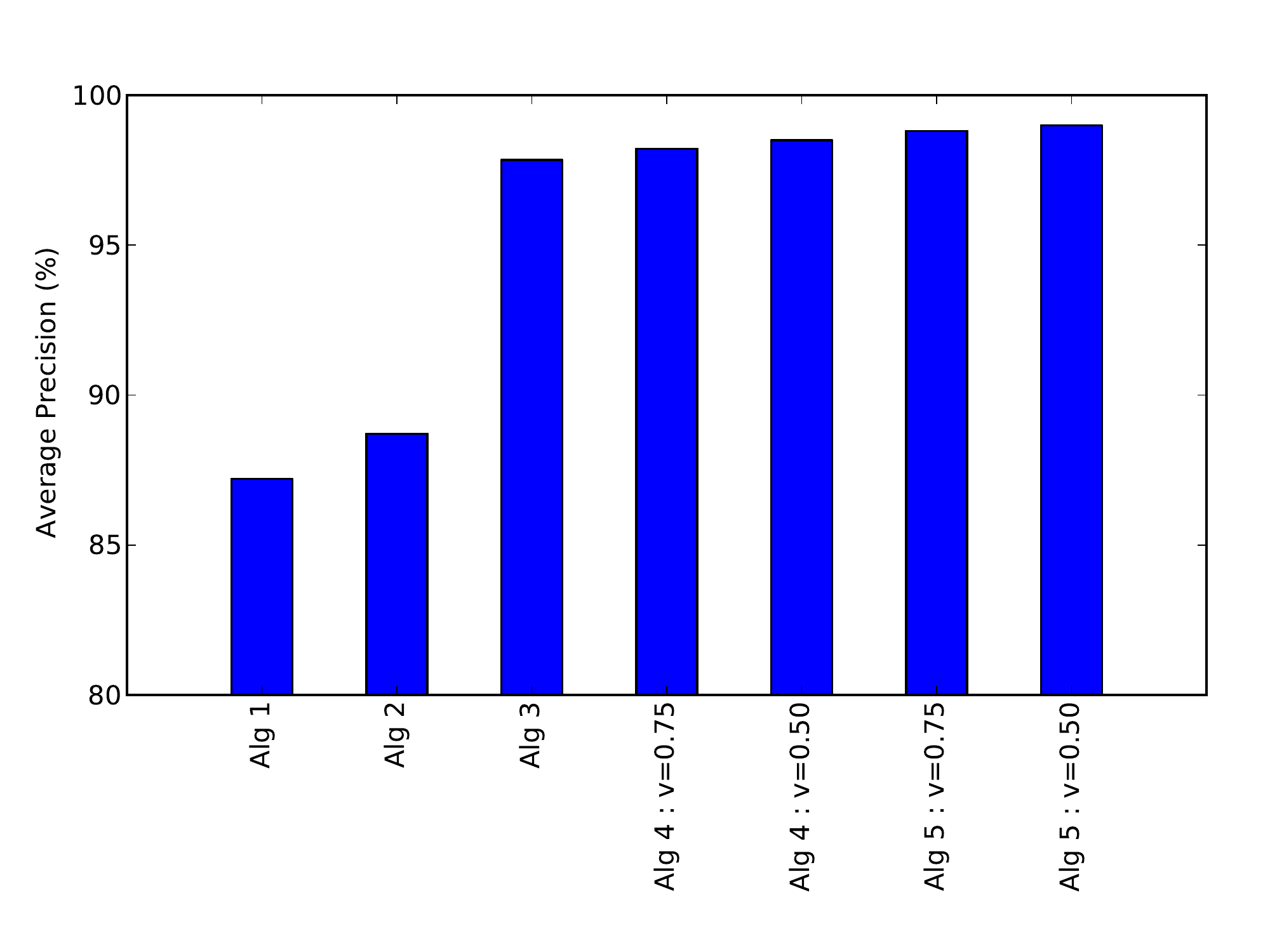}}
\subfigure[Recall] {\includegraphics[width=0.45\columnwidth,height=2in]{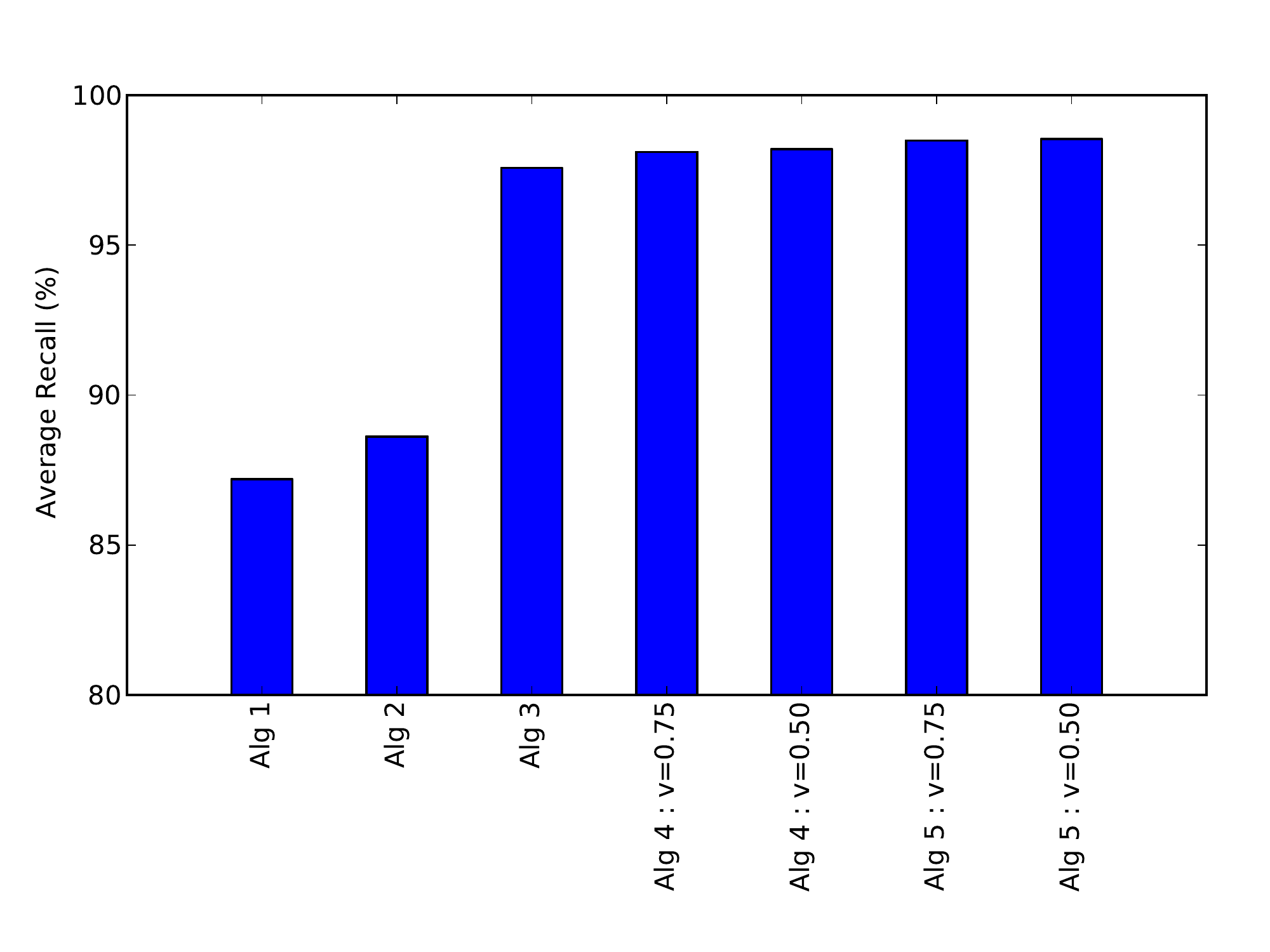}}
\subfigure[Runtime] {\includegraphics[width=0.45\columnwidth,height=2in]{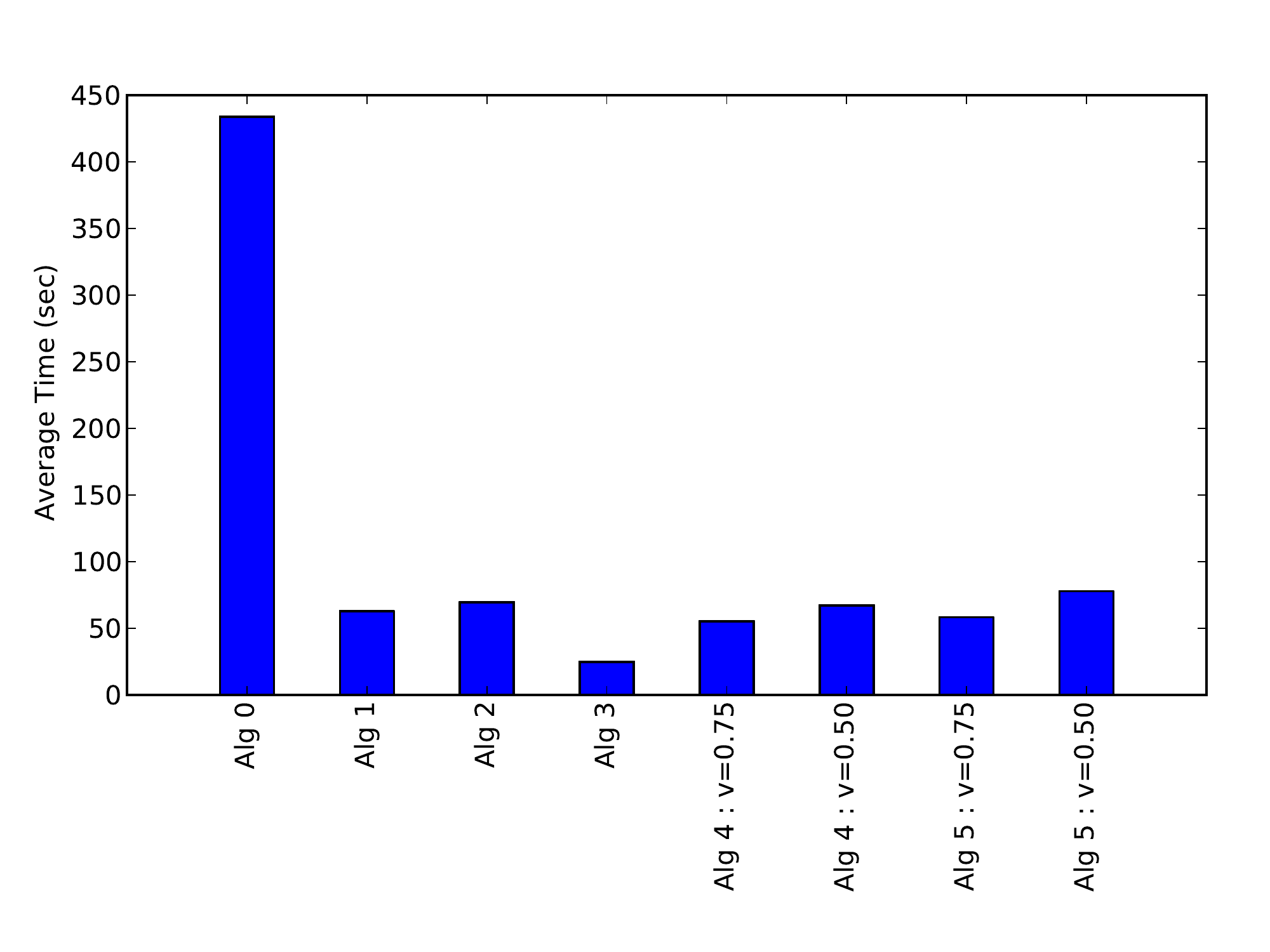}}
\subfigure[Memory] {\includegraphics[width=0.45\columnwidth,height=2in]{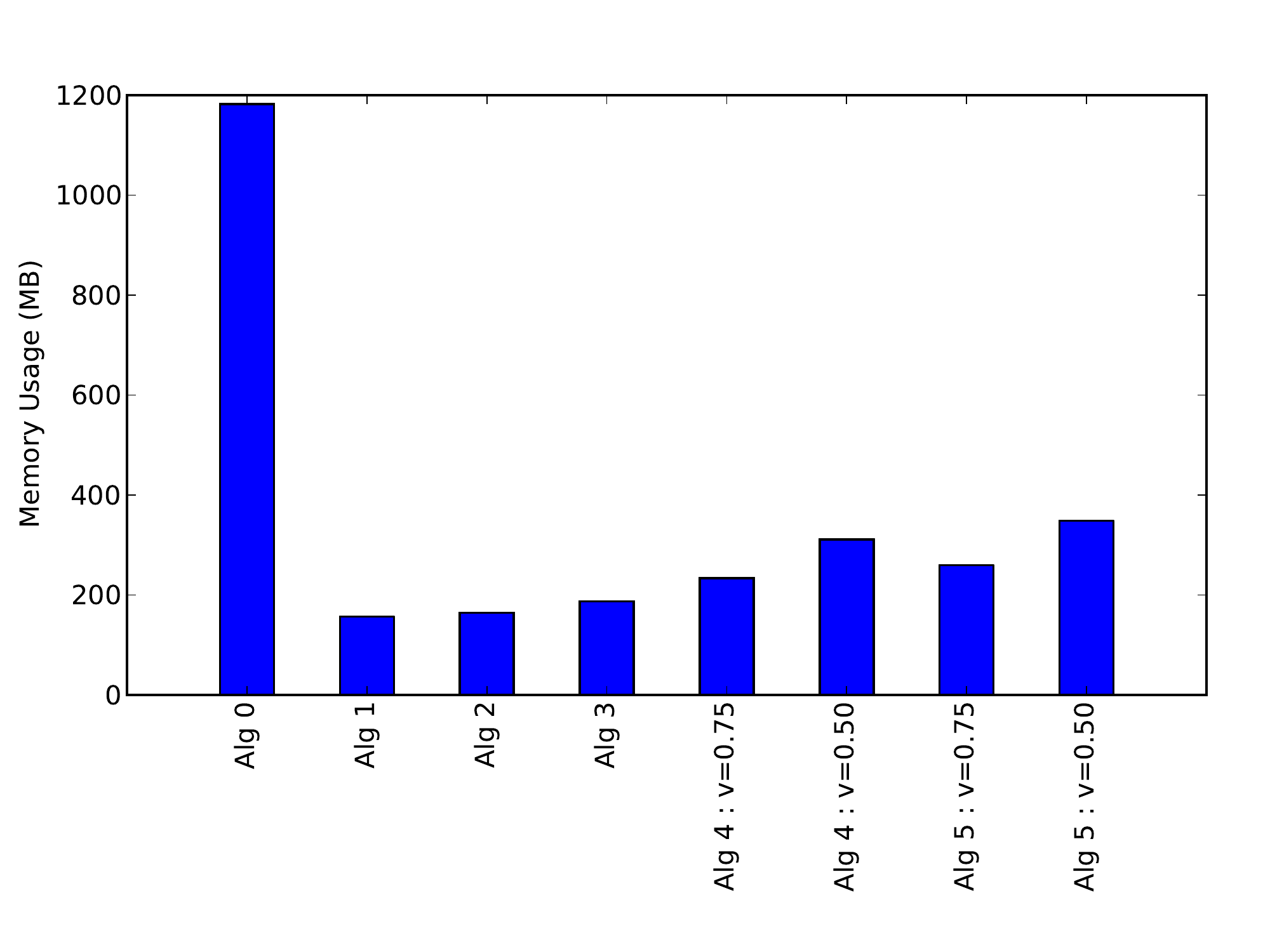}}
\caption{Comparison of average performance of different streaming episode mining algorithm. Alg 1 and 2 give lower precision and recall values compared with any other algorithms. Overall the proposed methods give atleast one order of magnitude improvement over the baseline algorithm (Alg 0) in terms of both time and space complexity.}
\label{fig:all-compare}
\end{figure}

We consistently observe that Alg 1 and 2 give lower precision and recall values compared with
any other algorithm. This reinforces our observation that the top-$k$ patterns in a window can be much different from the top-$k$ patterns in the constituent batches. Alg 2 provides only a slight improvement over Alg 1 by tracking an episode once it enters the top-$k$ over subsequent batches. This improvement can be attributed to the fact that window frequencies of patterns that were once in the top-$k$ is better estimated. Alg 3 gives higher precision and recall compared to Alg 1 and 2. The frequency threshold used in Alg 3 is given by Theorem~\ref{thm:topk-mine}. Using this result we are able to estimate the value $\fk{s}-2\delta$ by simply counting the episodes that are frequent in the previous batch. This avoids multiple iterations required in general for finding the top-$k$ patterns. Fortunately this threshold also results in significantly higher support and precision.

We ran Alg 4 and 5 for two different values of $v$, viz. $v = m/2$ and $v = 3m/4$. Both these algorithms guarantee finding all $(v,k)$-persistent patterns for respective values of $v$. For the sake of comparison we also include episodes that exceeded the frequency threshold prescribed by $(v,k)$-persistence, but do not appear in top-$k$ of atleast $v$ batches. We observe that the precision and recall improves a little over Alg 3 with reasonable increase in memory and runtime requirements. 
For a higher value of $v$, this algorithm insists that the frequent patterns much persist over more batches. This raises the support threshold and as a result there is improvement in terms of memory and runtime, but a small loss in precision and recall. Note that the patterns missed by the algorithm are either not $(v,k)$-persistent or our estimation of $\delta$ has some errors in it.
In addition, Alg 5 gives a slight improvement over Alg 4. This shows that our heuristic threshold is effective.

Overall the proposed methods give atleast one order of magnitude improvement over the baseline algorithm in terms of both time and space complexity.

\subsubsection{Performance over time}
Next we consider one dataset A2 with number of event types = 1000, the average resting firing rate as 10 Hz, and the number of embedded patterns = 50. On this data we show how the 
performances of the five algorithms change over time. The window size is set to be $m=10$ and the batch size = $10^{5}$ sec. Fig.~\ref{fig:over-time} shows the comparison over 50 contiguous batches. Fig.~\ref{fig:over-time} (a) and (b) show the way precision and recall evolve over time. Fig.~\ref{fig:over-time} (c) and (d) show the matching memory usage and runtimes.

The data generation model allows the episode frequencies to change slowly over time. In the 
dataset used in the comparison we change the frequencies of embedded episodes at two time intervals: batch 15 to 20 and batch 35 to 42. In Fig.~\ref{fig:over-time} (a) and (b), we have a special plot shown by the dashed line. This is listed as Alg 0 in the legend. What this line shows is the comparison of top-$k$ episodes between consecutive window slides. In other words the top-$k$ episodes in window $W_{s-1}$ are considered as the predicted output to obtain the precision and recall for $W_{s}$. The purpose of this curve is to show how the true top-$k$ set changes with time and show how well the proposed algorithms track this change.

\begin{figure}[!ht]
\centering
\subfigure[Precision] {\includegraphics[width=\columnwidth,height=1.4in]{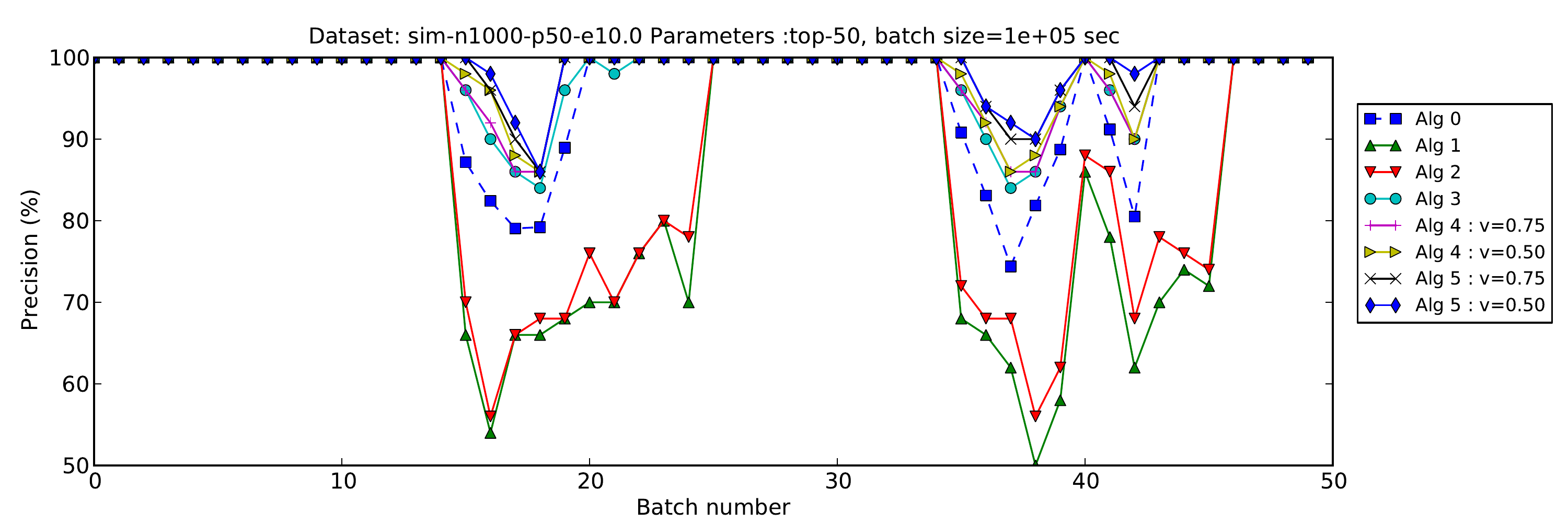}}
\subfigure[Recall] {\includegraphics[width=\columnwidth,height=1.4in]{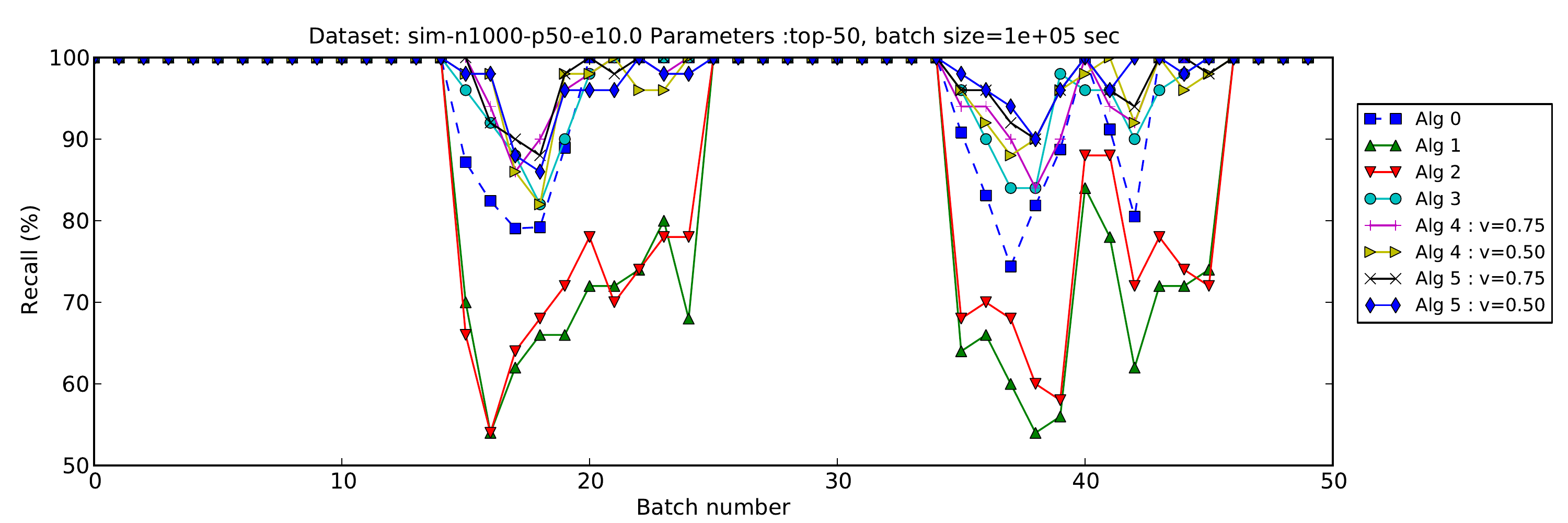}}
\subfigure[Runtime] {\includegraphics[width=\columnwidth,height=1.4in]{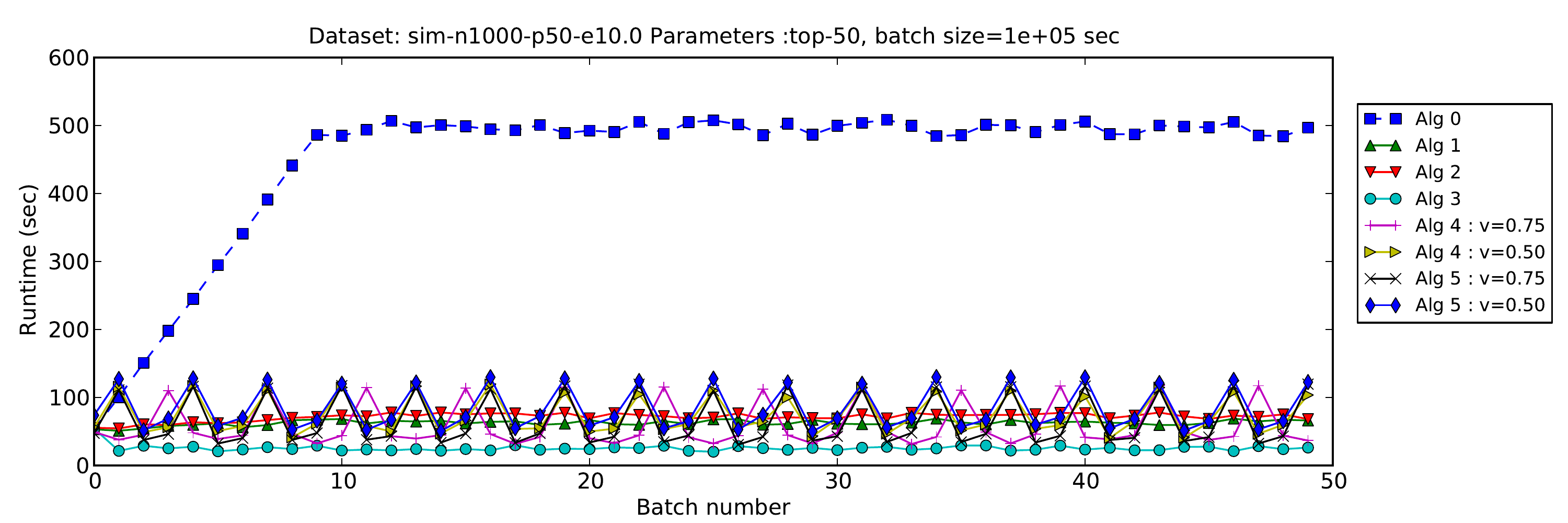}}
\subfigure[Memory] {\includegraphics[width=\columnwidth,height=1.4in]{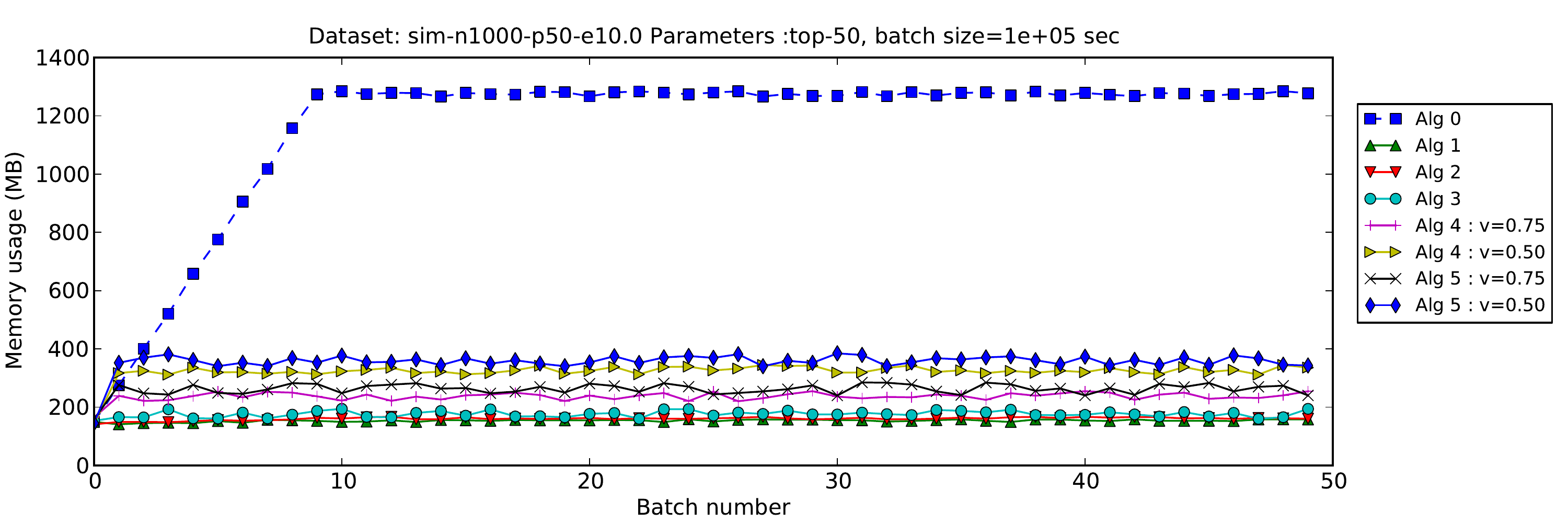}}
\caption{Comparison of the performance of different streaming episode mining algorithms over a sequence of 50 batches (where each batch is $10^{5}$ sec wide and each window consists of 10 batches).}
\label{fig:over-time}
\end{figure}

Alg 1 and 2 perform poorly. On an average in the transient regions (batch 15 to 20 and batch 35 to 42) they perform 15 to 20\% worse than any other method. Alg 3, 4 and 5 (for v=0.75m and v=0.5m) perform consistently above the reference curve of Alg 0. It expected of any reasonable algorithm to do better than the algorithm which uses the top-$k$ of $W_{s-1}$ to predict the top-$k$ of the window $W_{s}$. The precision and recall performance are in the order Alg 3 $<$ Alg 4 v=0.75m $<$ Alg 4 v=0.5m $<$ Alg 5 v=0.75m $<$ Alg 4 v=0.5m. This is in the same order as the frequency thresholds used by each method, and as expected.

In terms of runtime and memory usage, the changing top-$k$ does not affect these numbers. The lowest runtimes are those of Alg 3. The initial slope in the runtimes and memory usage seen in Algo 0, is due to the fact that the algorithm loads the entire window, one batch at a time into memory. In this experiment the window consists of $m=10$ batches. Therefore only after the first 10 batches one complete window span is available in memory.

\subsubsection{Effect of Data Characteristics}
In this section we present results on synthetic data with different characteristics, namely, number of event types (or alphabet size), noise levels and number of patterns embedded in the data.
\begin{figure}[!ht]
\centering
\subfigure[Precision] {\includegraphics[width=0.45\columnwidth,height=1.8in]{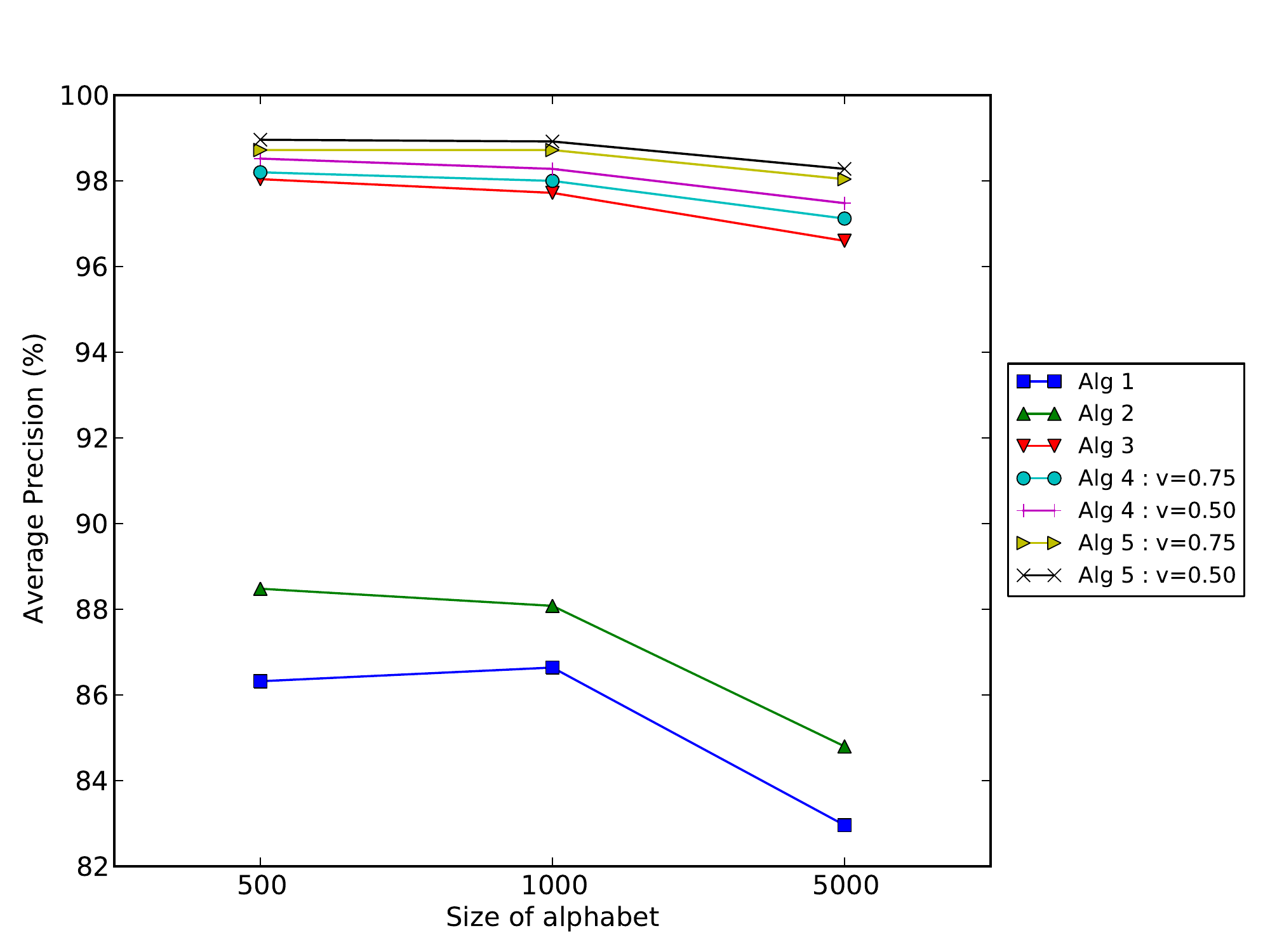}}
\subfigure[Recall] {\includegraphics[width=0.45\columnwidth,height=1.8in]{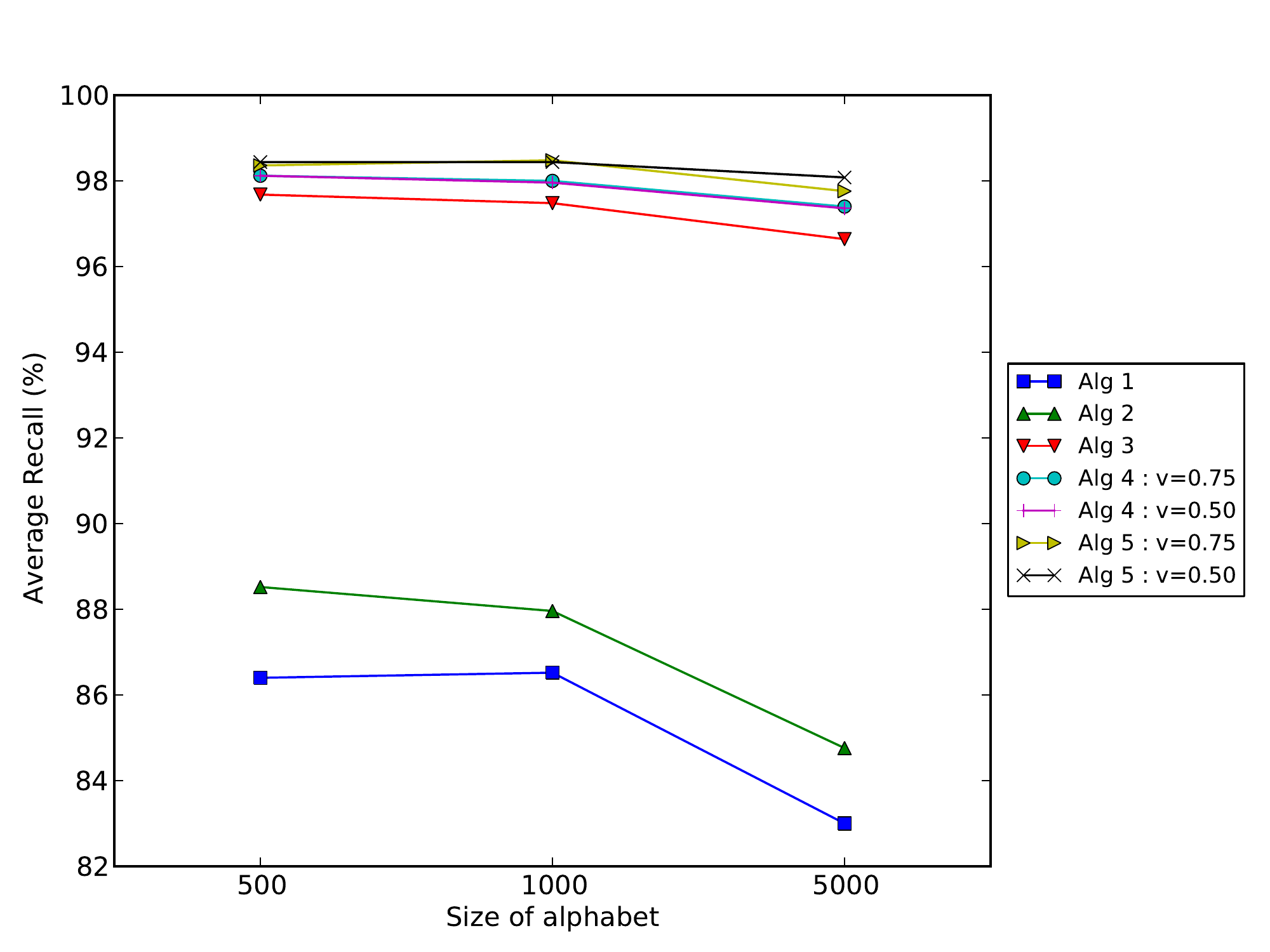}}
\subfigure[Runtime] {\includegraphics[width=0.45\columnwidth,height=1.8in]{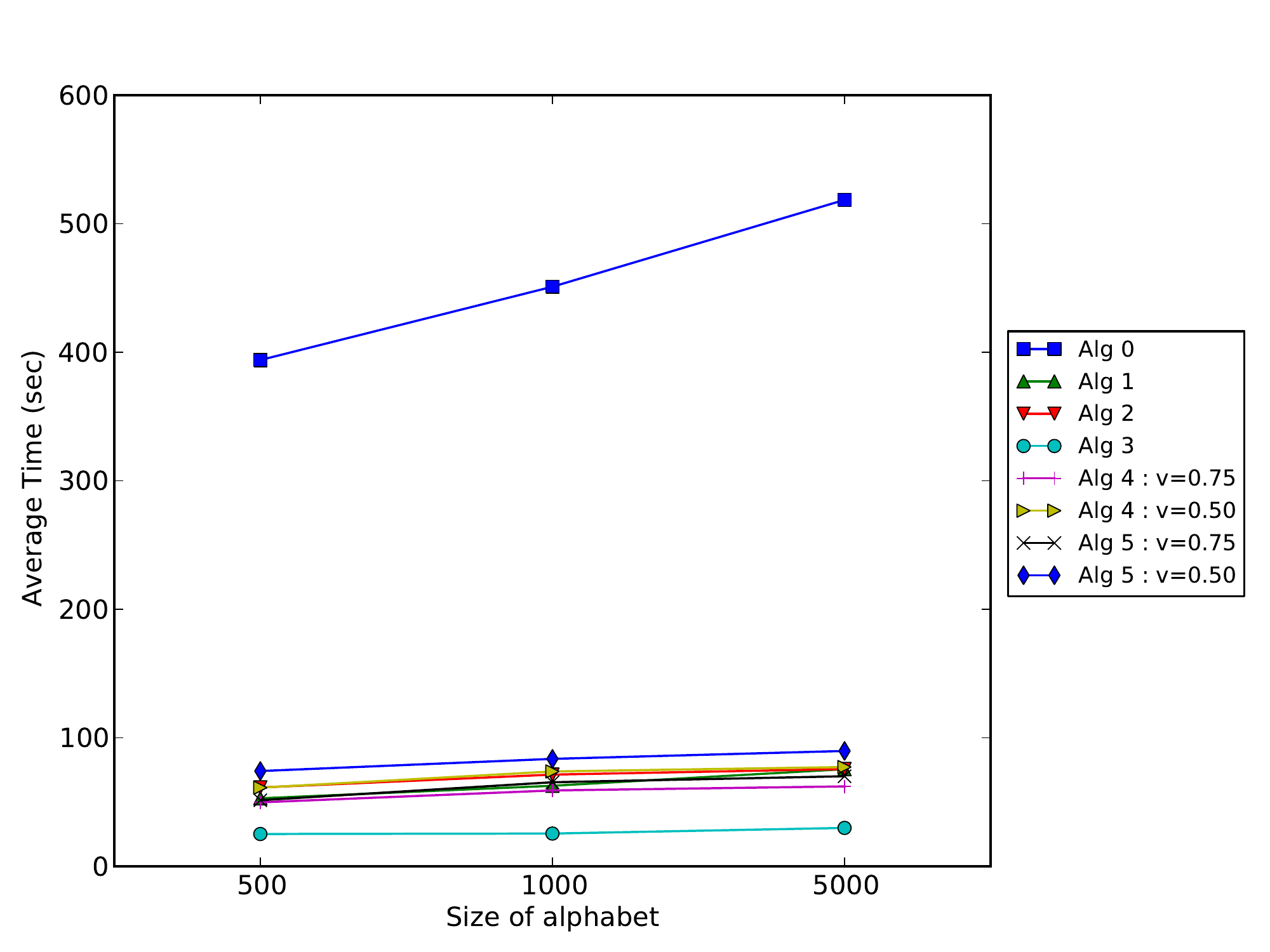}}
\subfigure[Memory] {\includegraphics[width=0.45\columnwidth,height=1.8in]{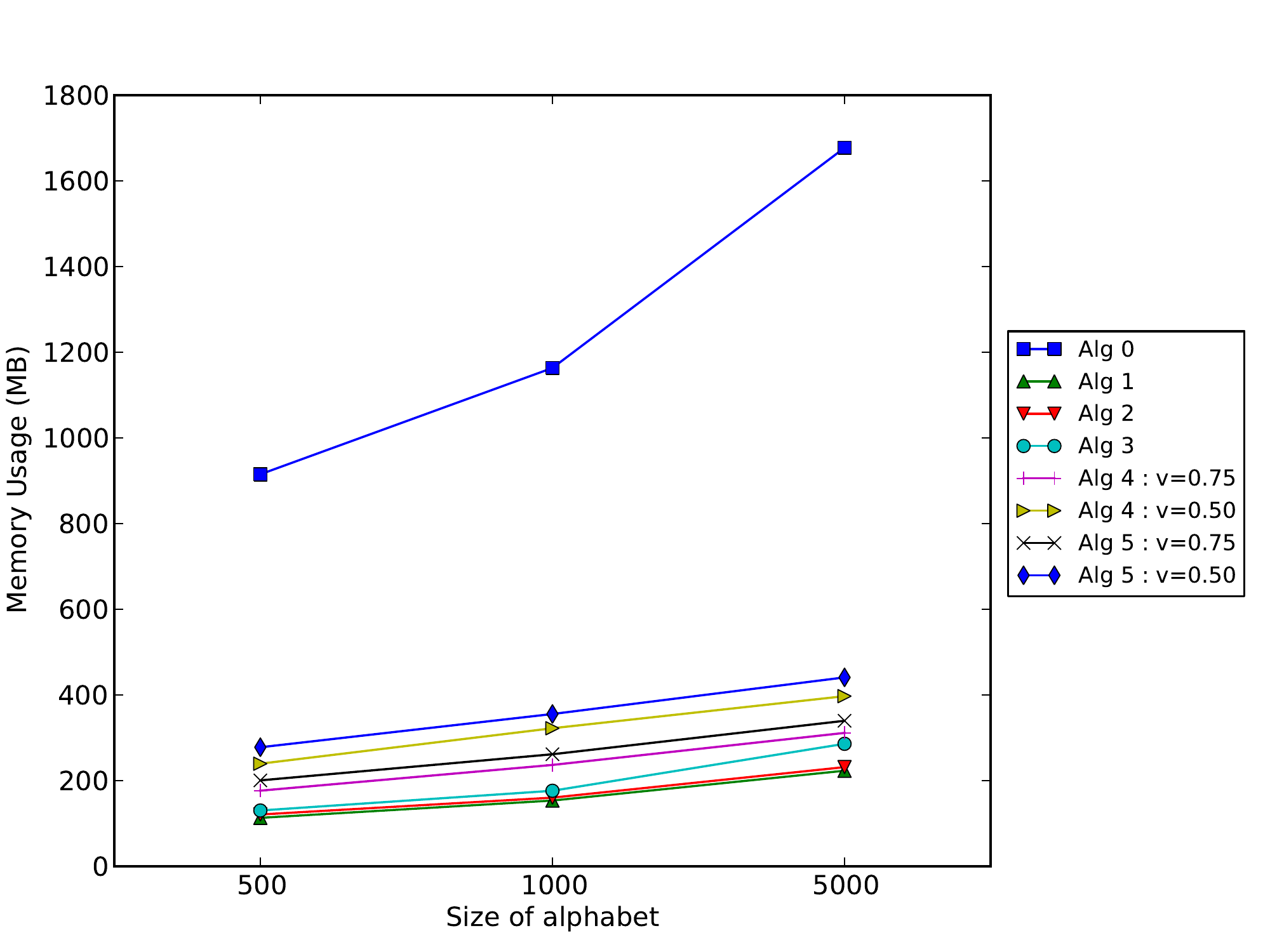}}
\caption{Effect of alphabet size. The proposed algorithms are robust to large alphabet sizes. Precision and recall drop by only 2-4\% going from alphabet size of 500 to 5000.}
\label{fig:alphabet}
\end{figure}

In Fig.~\ref{fig:alphabet} we report the effect alphabet size on the quality of result of the different algorithms. In datasets A1, A2 and A3 the alphabet size, i.e. the number of distinct event types, is varied from 500 to 5000. We observe that for smaller alphabet sizes the performance is better. Alg 1 and 2 perform consistently worse that the other algorithm for different alphabet sizes.

In this experiment we find that the quality of results for the proposed algorithms is not very sensitive to alphabet size. The precision and recall numbers drop by only 2-4\%. This is quite different from the pattern mining setting where the user provides a frequency threshold. In our experience alphabet size is critical in the fixed frequency threshold based formulation. For low thresholds, large alphabet sizes can quickly lead to uncontrolled growth in the number of candidates. In our formulation the support threshold is dynamically readjusted and as a result the effect of large alphabet size is attenuated.

\begin{figure}[t]
\centering
\subfigure[Precision] {\includegraphics[width=0.45\columnwidth,height=2in]{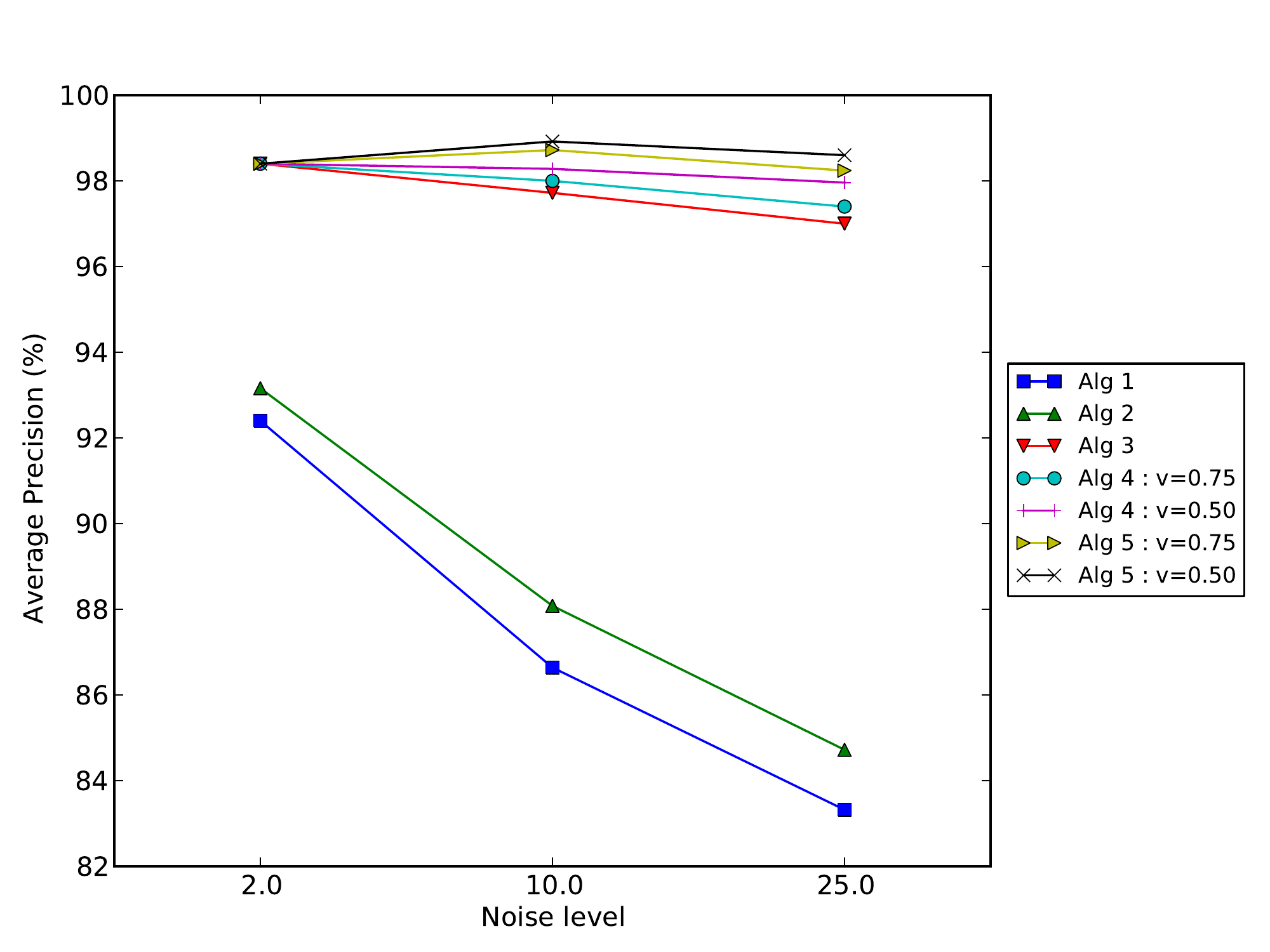}}
\subfigure[Recall] {\includegraphics[width=0.45\columnwidth,height=2in]{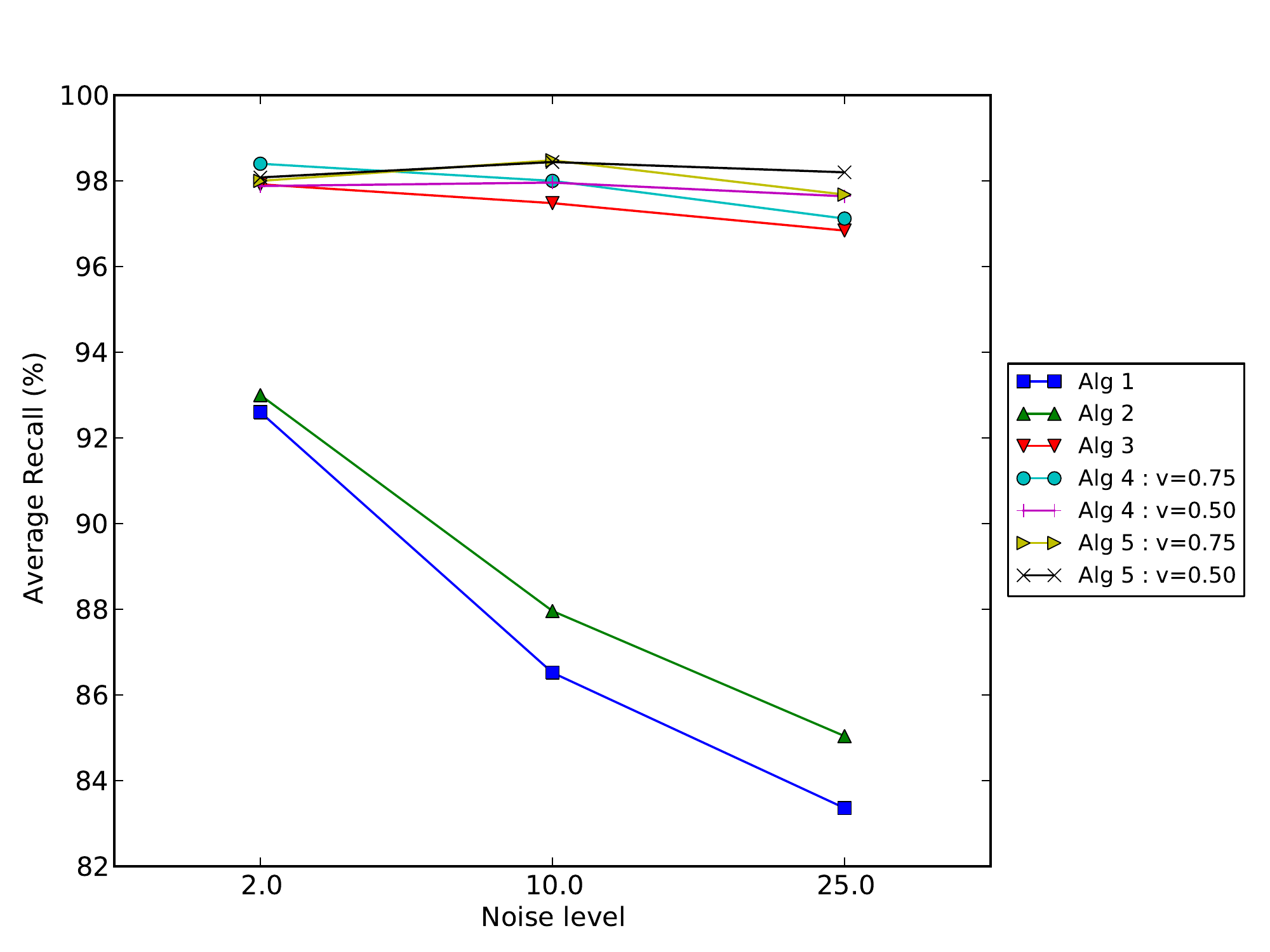}}
\subfigure[Runtime] {\includegraphics[width=0.45\columnwidth,height=2in]{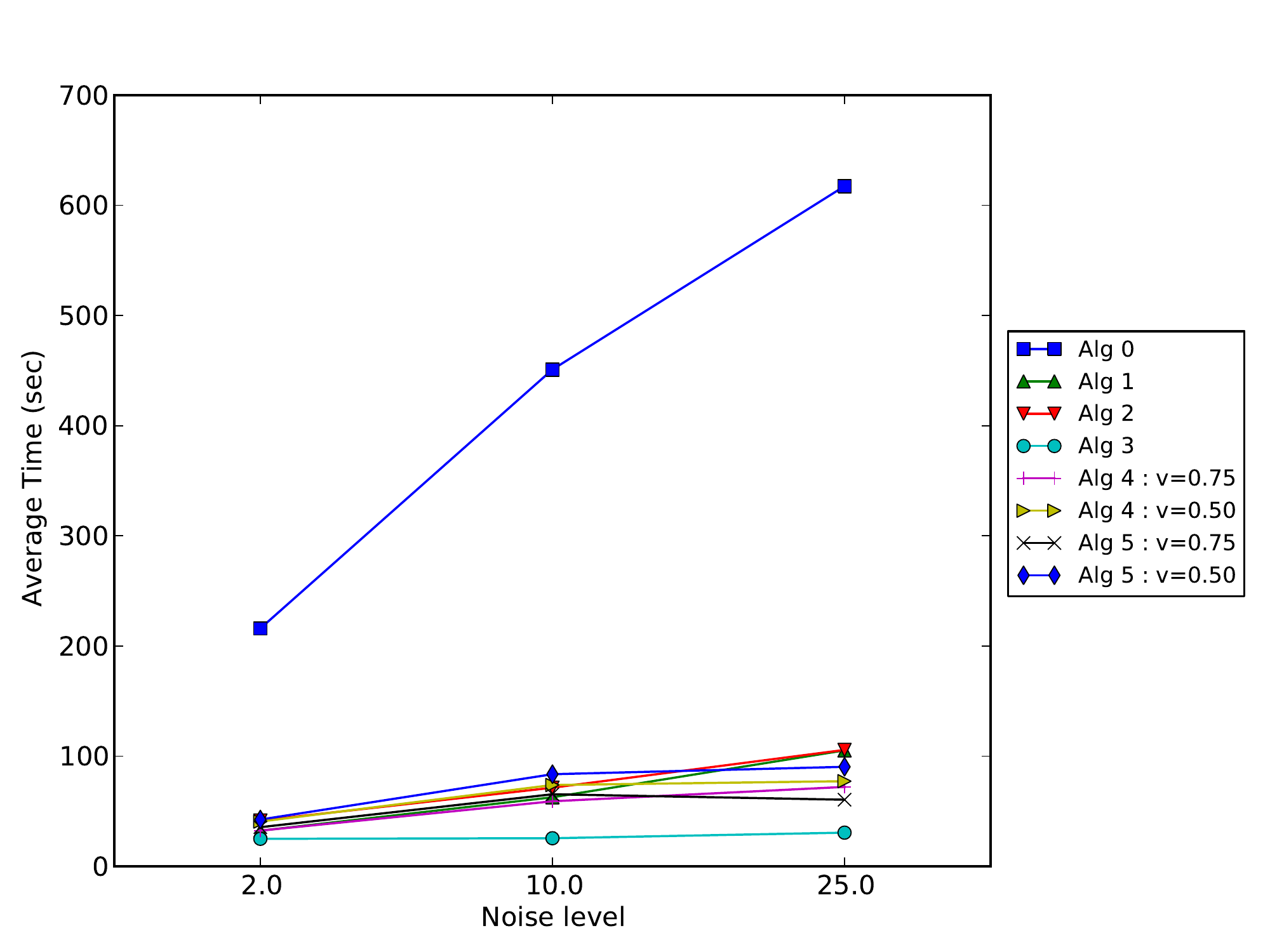}}
\subfigure[Memory] {\includegraphics[width=0.45\columnwidth,height=2in]{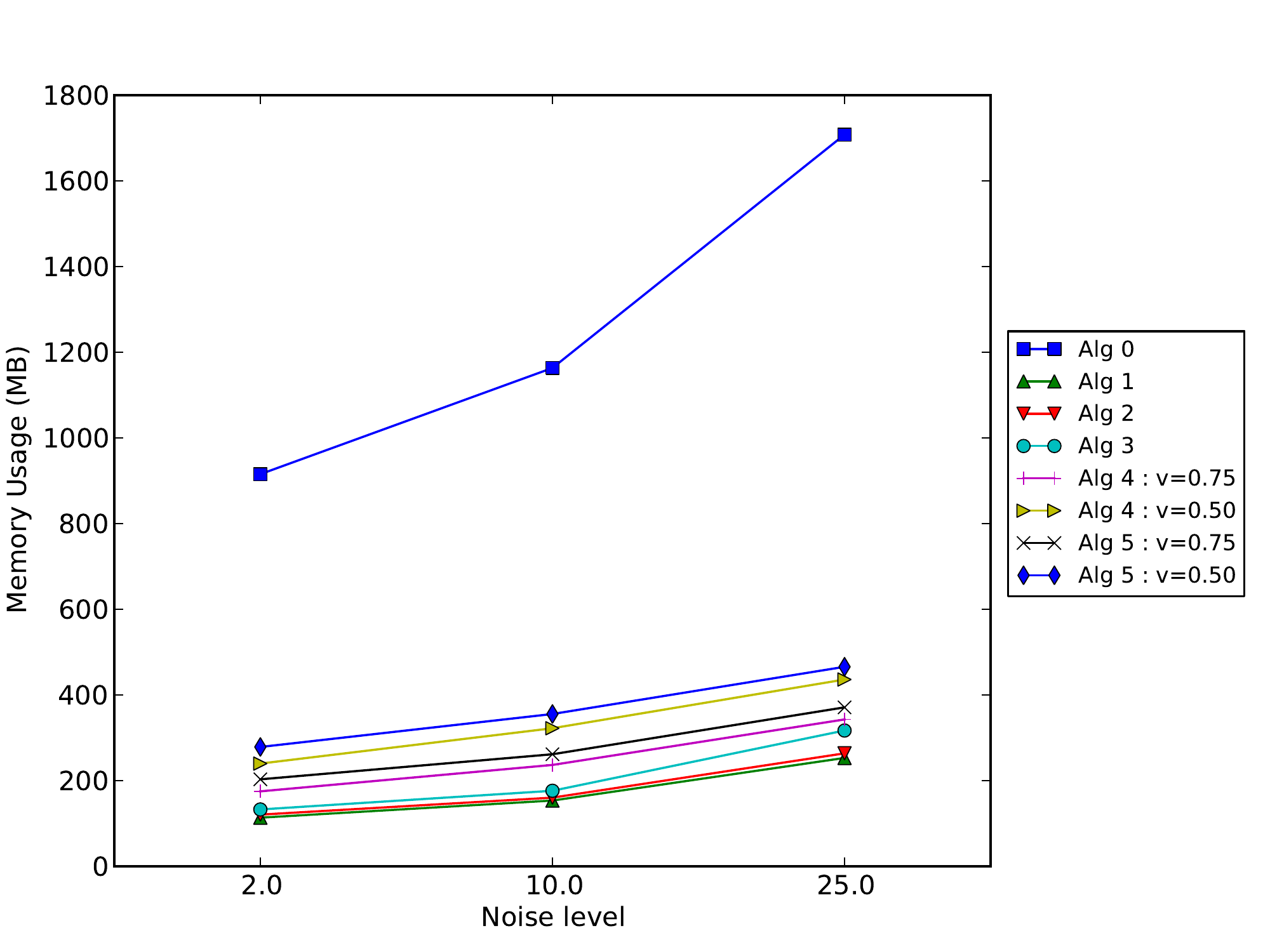}}
\caption{Effect of noise. In terms of precision and recall Alg 5 is most robust to noise in the data.}
\label{fig:noise}
\end{figure}

Next, in Fig.~\ref{fig:noise}, we show the effect of noise. The average rate of firing of the noise event-types (event types that do not participate in pattern occurrences) is varied  from 2.0 Hz to 25 Hz. The precision and recall of Alg 1 and 2 degrade quickly with increase in noise. A small decrease in precision and recall of Alg 3 and Alg 4 is seen. But the performance of Alg 5 (for both v=0.75m and v=0.5m) stays almost at the same level. It seems that the frequency threshold generated by Alg 5 is sufficiently low to finds the correct patterns even at higher noise level but not so low as to require significantly more memory ($\approx 400$ MB) or runtime ($\approx 70$ sec per batch at noise = 25.0 Hz for v=0.5m) as compared to other algorithms.

\begin{figure}[t]
\centering
\subfigure[Precision] {\includegraphics[width=0.45\columnwidth,height=1.8in]{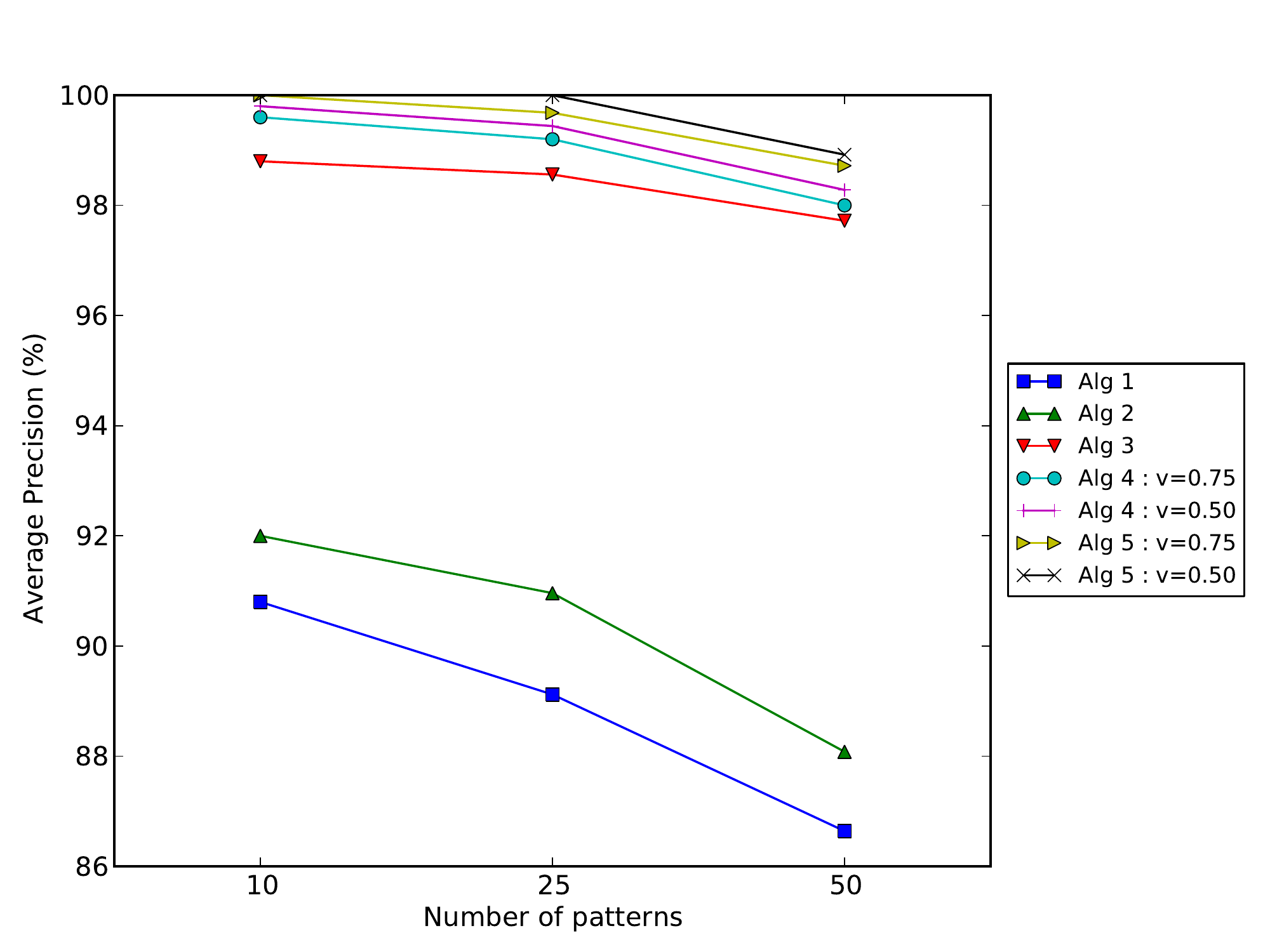}}
\subfigure[Recall] {\includegraphics[width=0.45\columnwidth,height=1.8in]{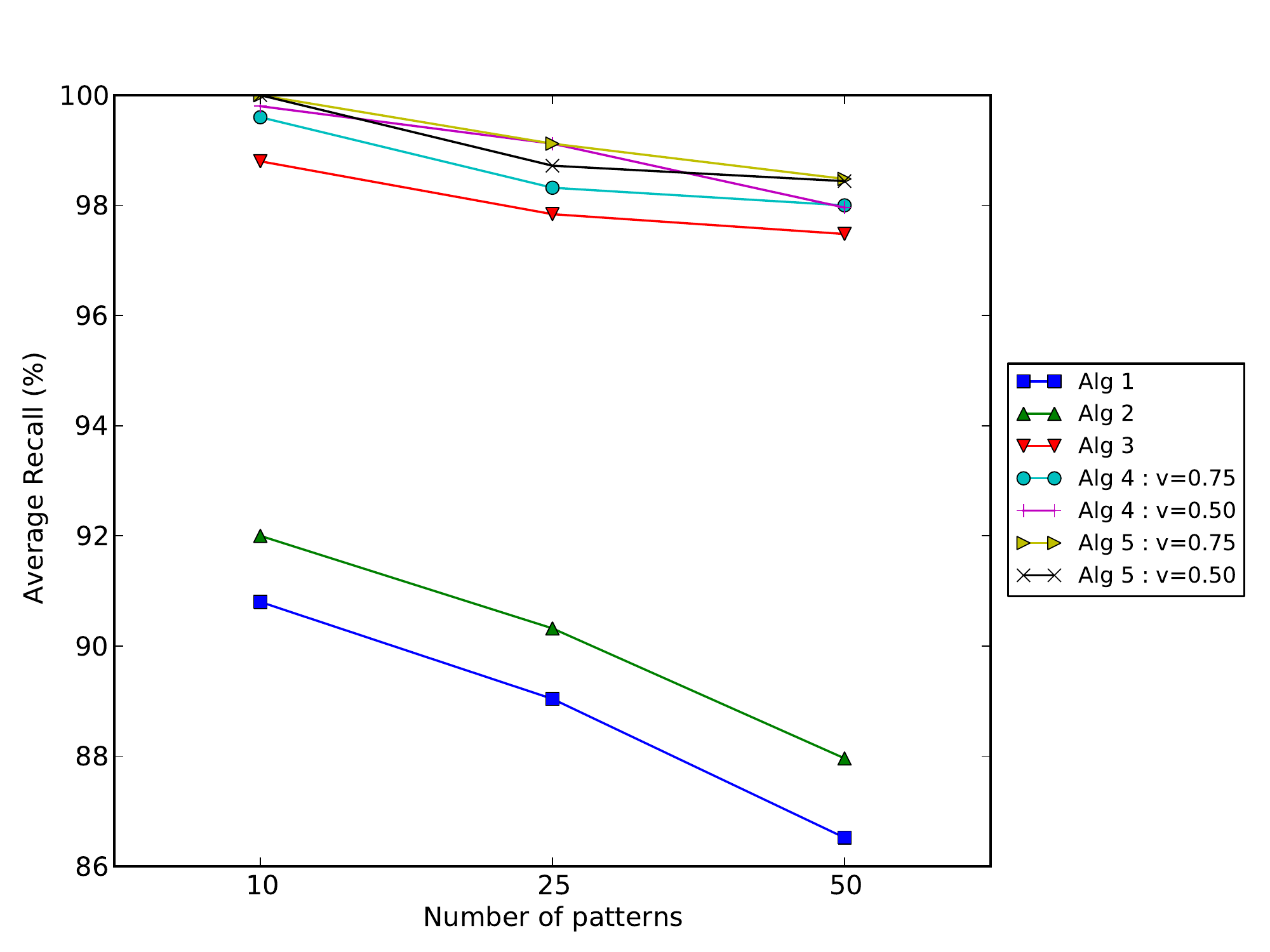}}
\subfigure[Runtime] {\includegraphics[width=0.45\columnwidth,height=1.8in]{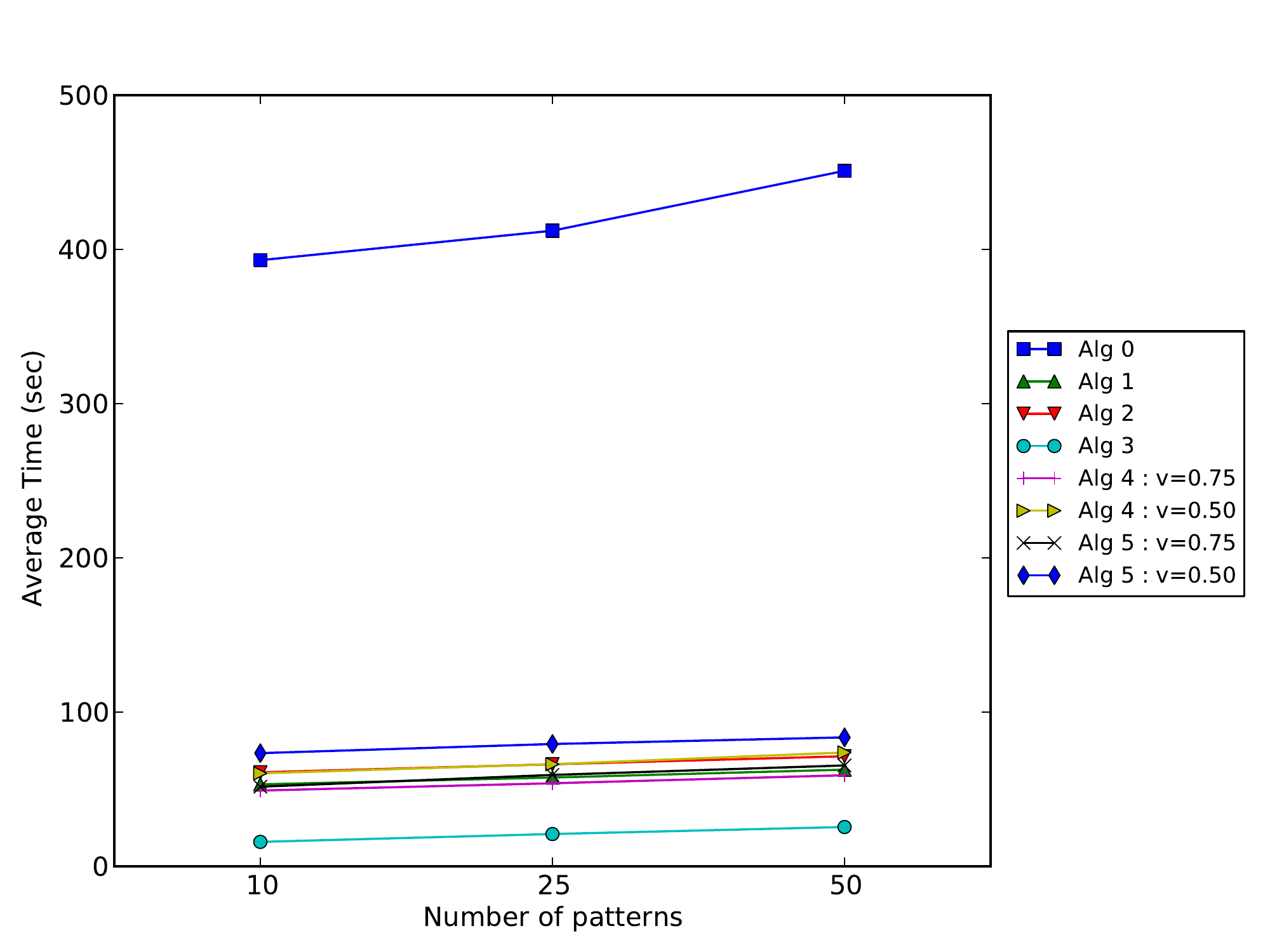}}
\subfigure[Memory] {\includegraphics[width=0.45\columnwidth,height=1.8in]{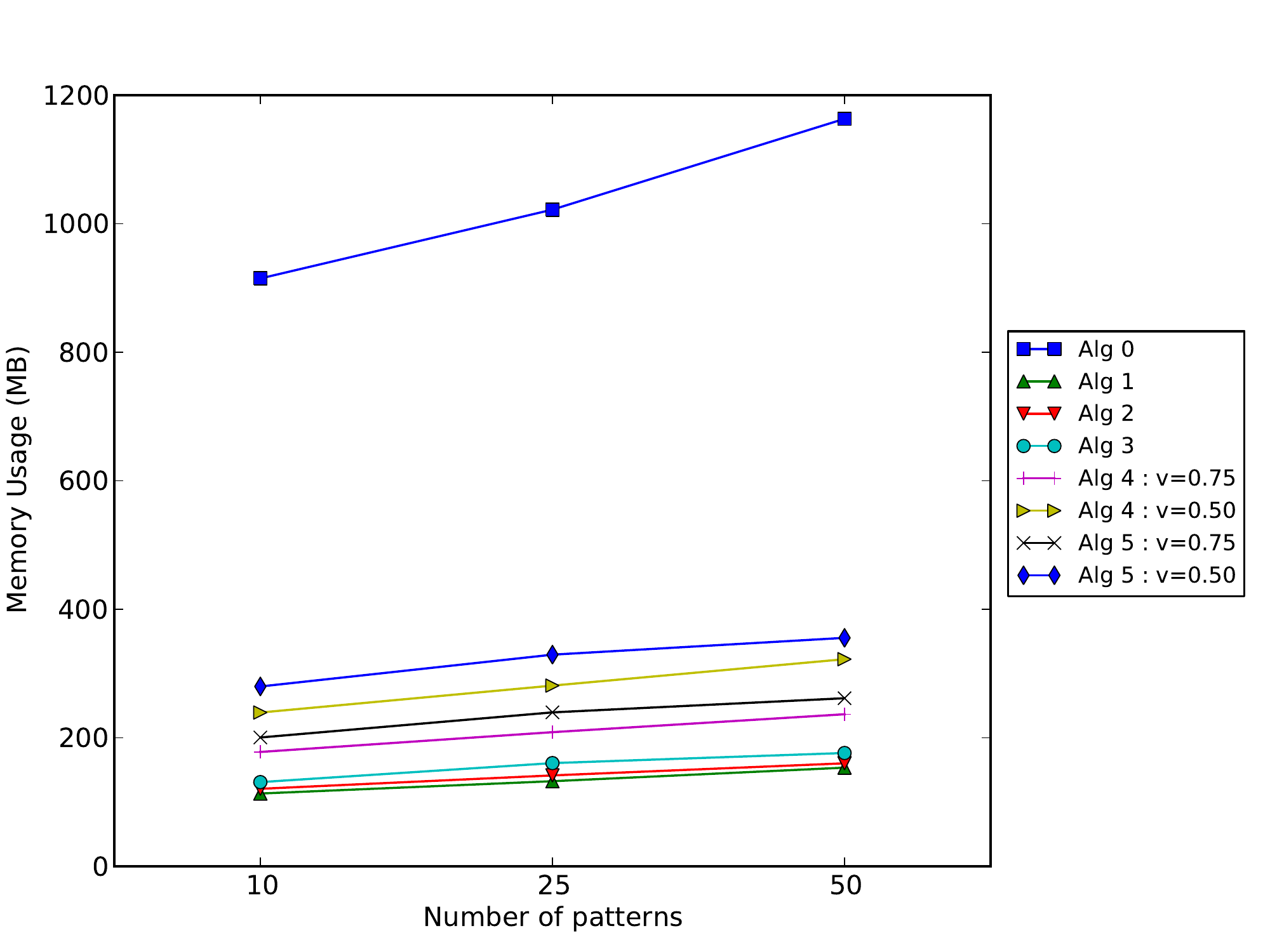}}
\caption{Effect of number of embedded patterns.}
\label{fig:patterns}
\end{figure}

In Fig.~\ref{fig:patterns}, we change the number of patterns embedded in the data and study the effect. The number of embedded patterns vary from 10 to 50. 
Once again the performance of our proposed methods is fairly flat in all the metrics. Alg 3, 4 and 5 are seen to be less sensitive to the number of patterns embedded in the data than Alg 1 and 2.

\subsubsection{Effect of Parameters}

So far the discussion has been about the effect of data characteristics of the synthetic data. The parameters of the mining algorithms were kept fixed. In this section we look at two important parameters of the algorithms, namely, the batch size $T_{b}$ and the number of batches that make up a window, $m$.

In Fig.~\ref{fig:batch}, the quality and performance metrics are plotted for three different batch sizes: $10^{3}$, $10^{4}$ and $10^{5}$ (in sec). Batch size appears to have a significant effect on precision and recall. There is a 10\% decrease in both precision and recall when batch size is reduced to $10^{3}$ sec from $10^{5}$. But note that a 100 fold decrease in batch size only changes quality of result by 10\%.

It is not hard to imagine that for smaller batch sizes the episode statistics can have higher variability in different batches resulting in a lower precision and recall over the window. As the size of the batch grows the top-$k$ in the batch starts to resemble the top-$k$ of the window. Transient patterns will not be able to gather sufficient support in a large batch size.

As expected, the runtimes and memory usage are directly proportional to the batch size in all cases. The extra space and time is required only to handle more data. Batch size does not play a role in growth of number of candidates in the mining process.

\begin{figure}[!ht]
\centering
\subfigure[Precision] {\includegraphics[width=0.45\columnwidth,height=1.8in]{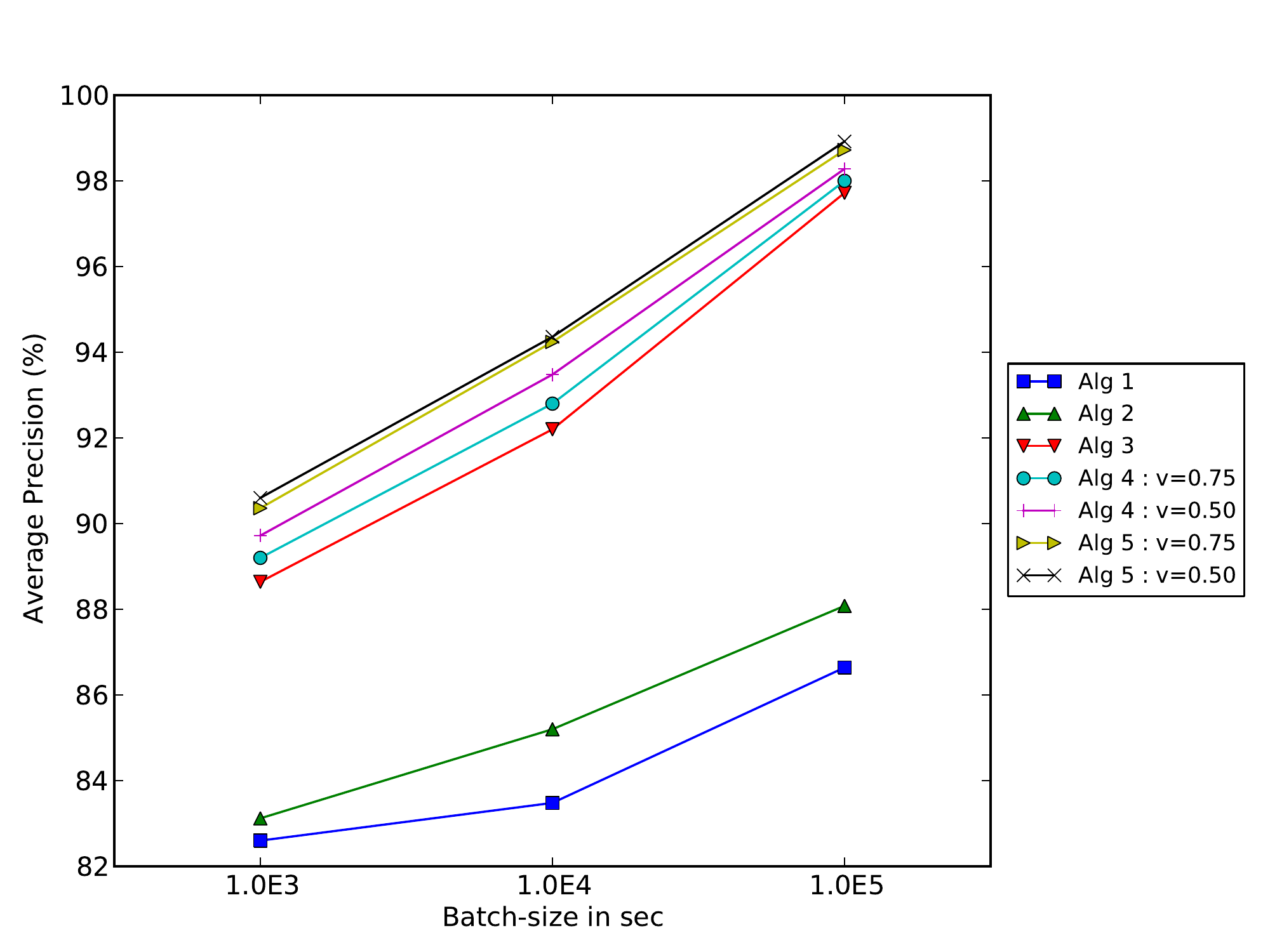}}
\subfigure[Recall] {\includegraphics[width=0.45\columnwidth,height=1.8in]{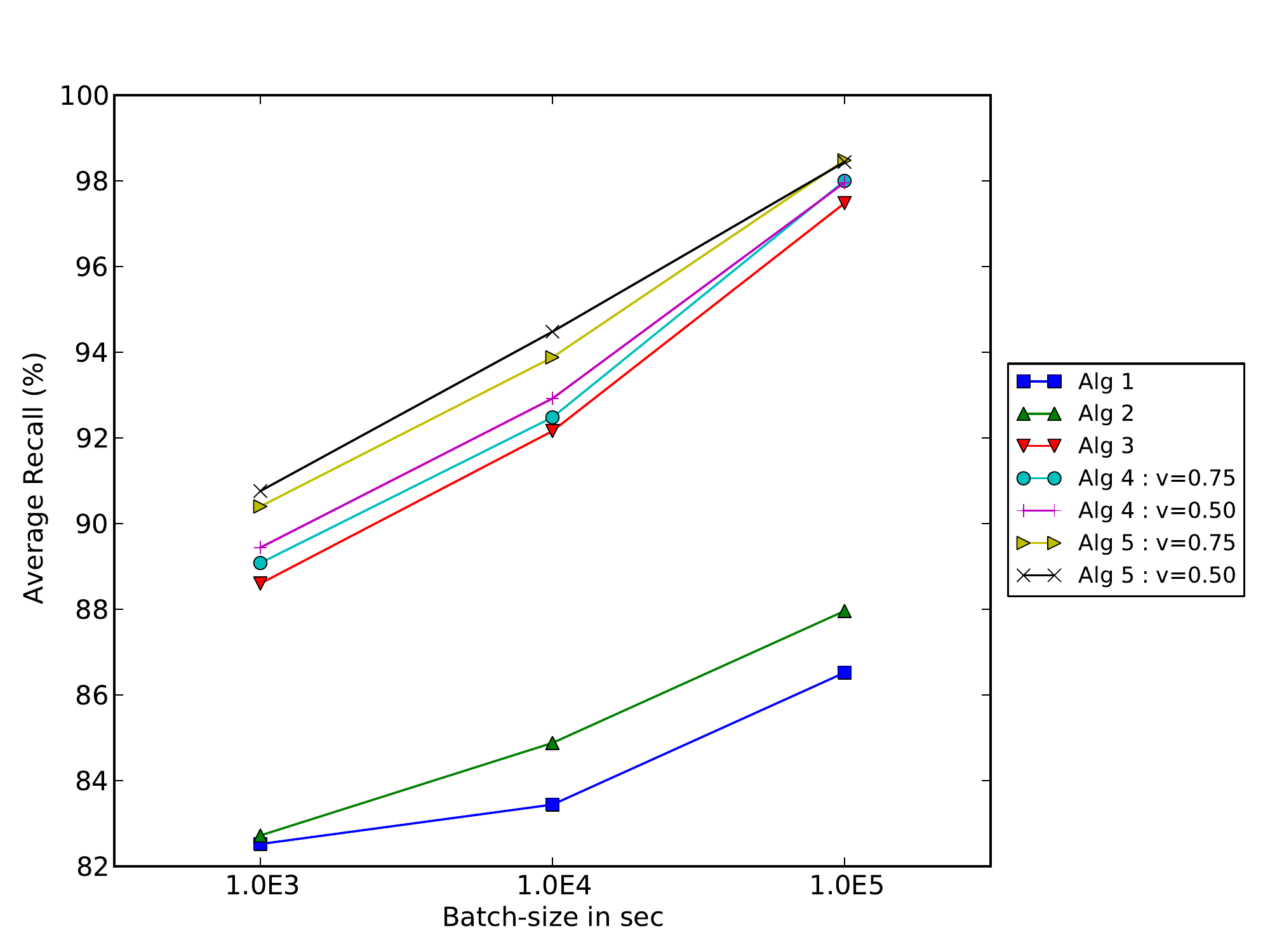}}
\subfigure[Runtime] {\includegraphics[width=0.45\columnwidth,height=1.8in]{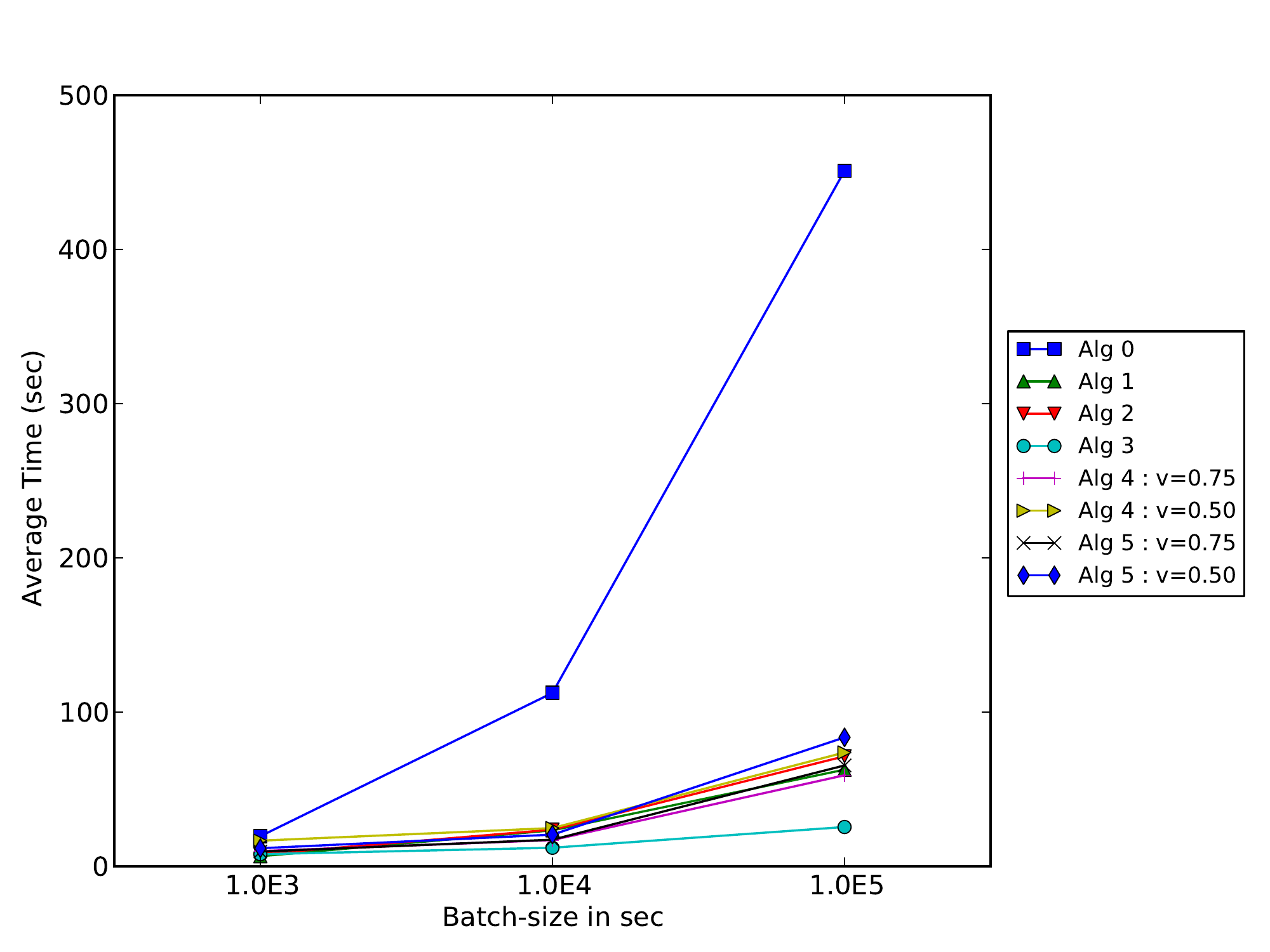}}
\subfigure[Memory] {\includegraphics[width=0.45\columnwidth,height=1.8in]{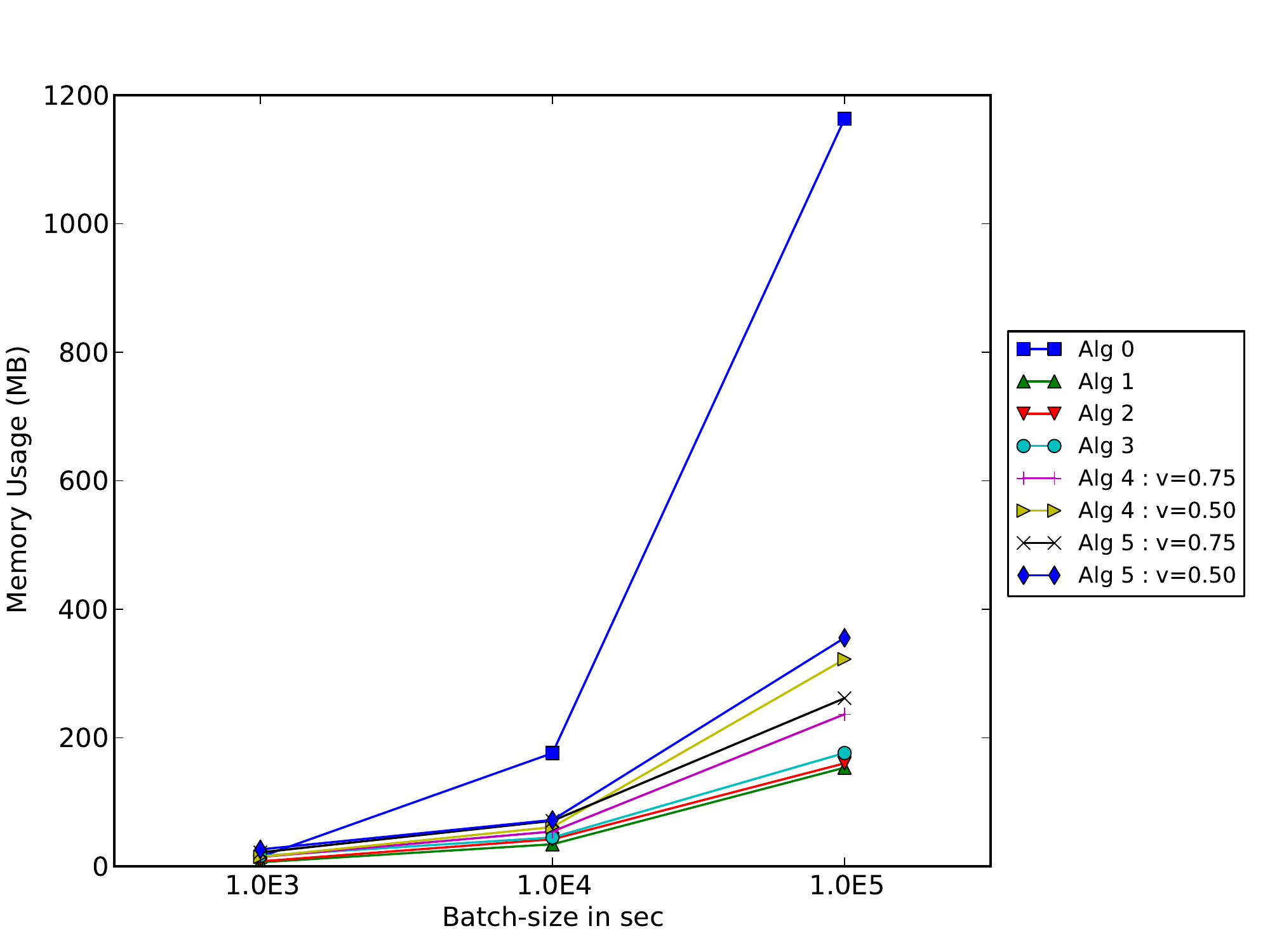}}
\caption{Effect of batchsize. Larger batch sizes have higher precision and recall. Precision and recall increase logarithmically with batch size. (Note that x-axis is in log scale)}
\label{fig:batch}
\end{figure}

Next in Fig.~\ref{fig:window}, we show how the number of windows in a batch affect the performance. Precision and recall are observed to decrease linearly with the number of batches in a window in Fig.~\ref{fig:window}(a) and (b), whereas the memory and runtime requirements grow linearly with the number of batches. The choice of number of batches provides the trade-off between the window size over which the user desires the frequent persistent patterns and the accuracy of the results. For larger window sizes the quality of the results will be poorer. Note that the memory usage and runtime does not increase much for algorithms other than Alg 0 (see Fig.~\ref{fig:window} (c) and (d)). Because these algorithms only process one batch of data irrespective of the number of batches in window. Although for larger windows the 
batch-wise support threshold decreases. In the synthetic datasets we see that this does not lead to unprecedented increase in the number of candidates.

\begin{figure}[!ht]
\centering
\subfigure[Precision] {\includegraphics[width=0.45\columnwidth,height=1.8in]{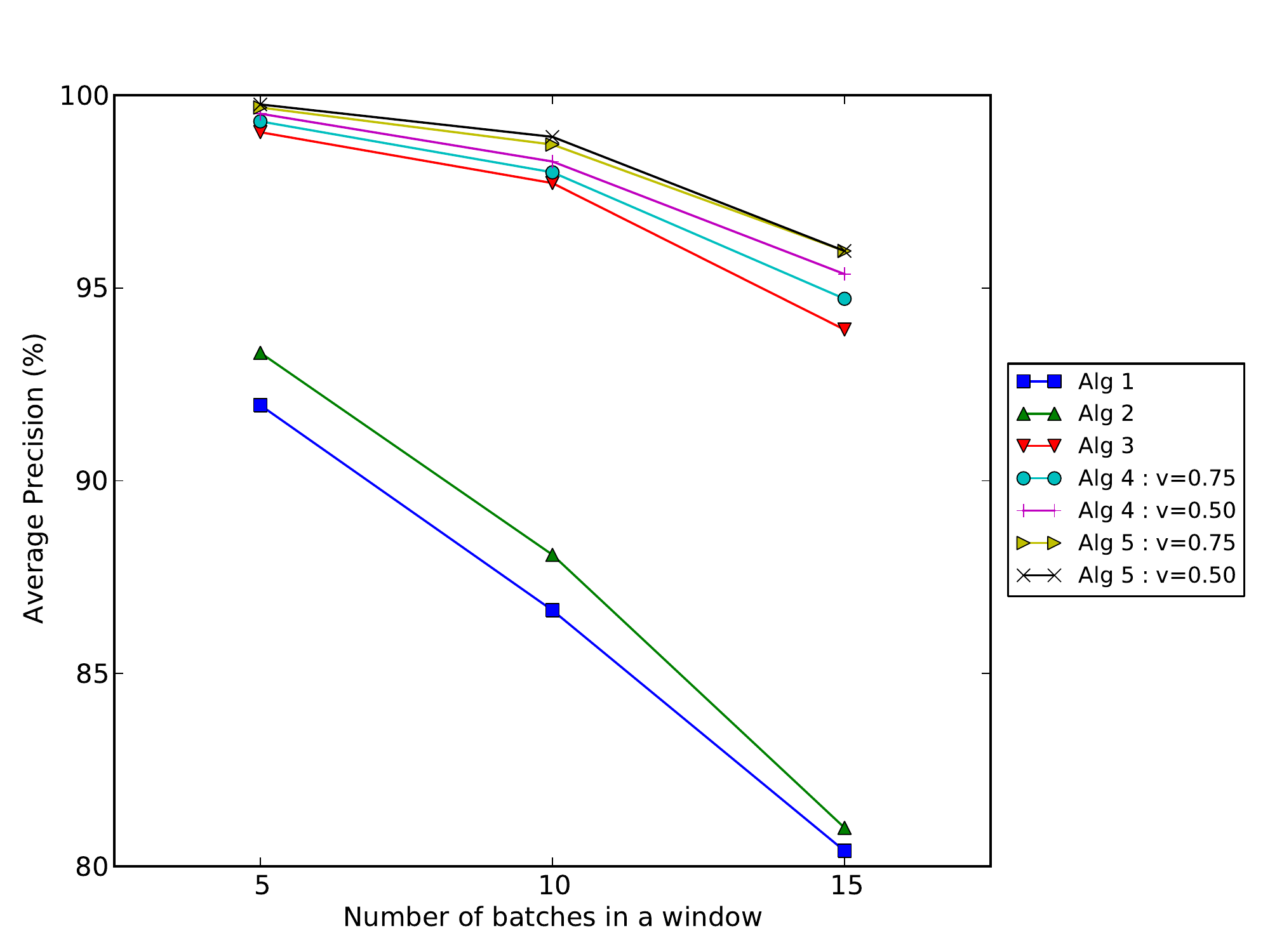}}
\subfigure[Recall] {\includegraphics[width=0.45\columnwidth,height=1.8in]{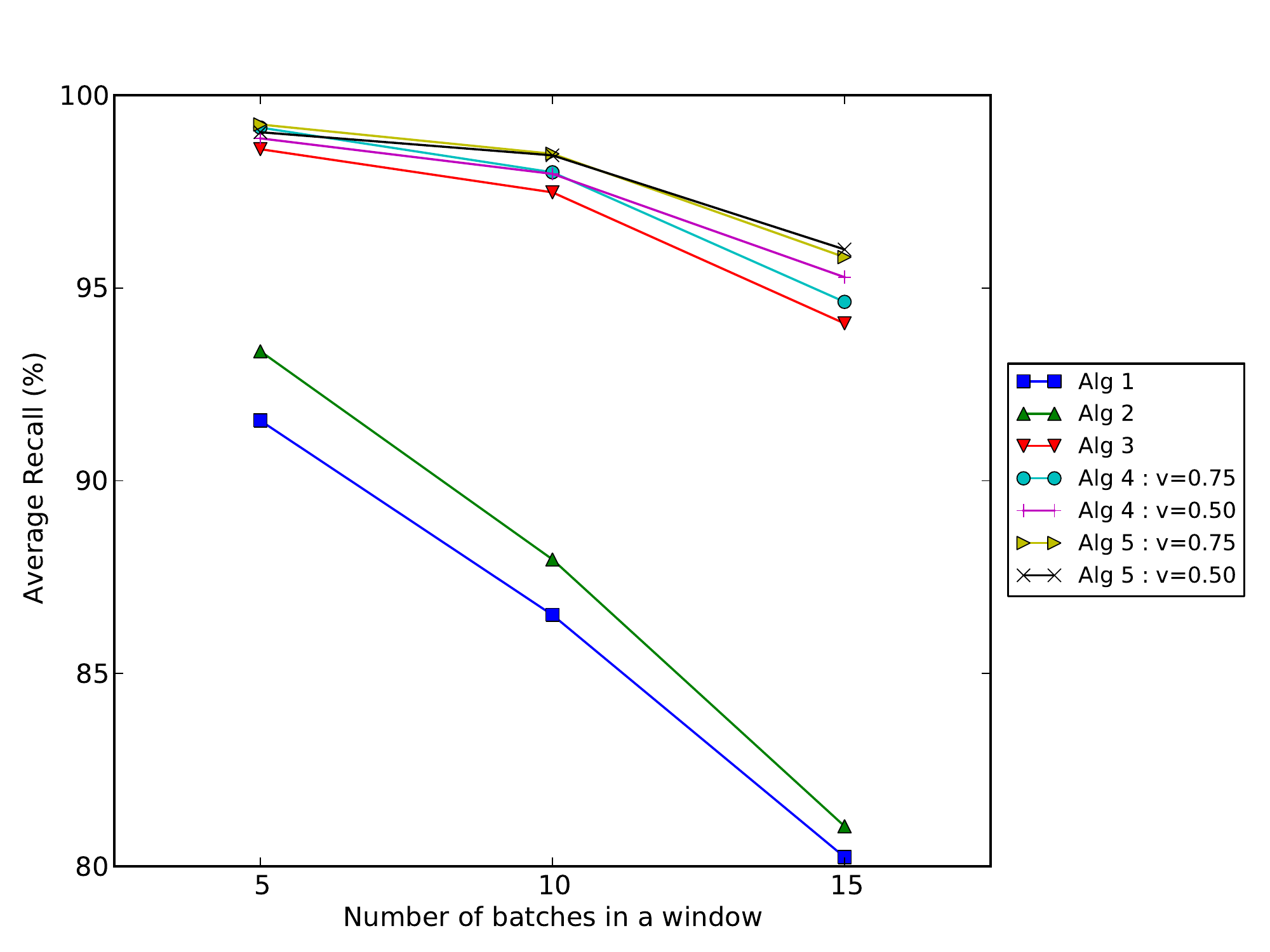}}
\subfigure[Runtime] {\includegraphics[width=0.45\columnwidth,height=1.8in]{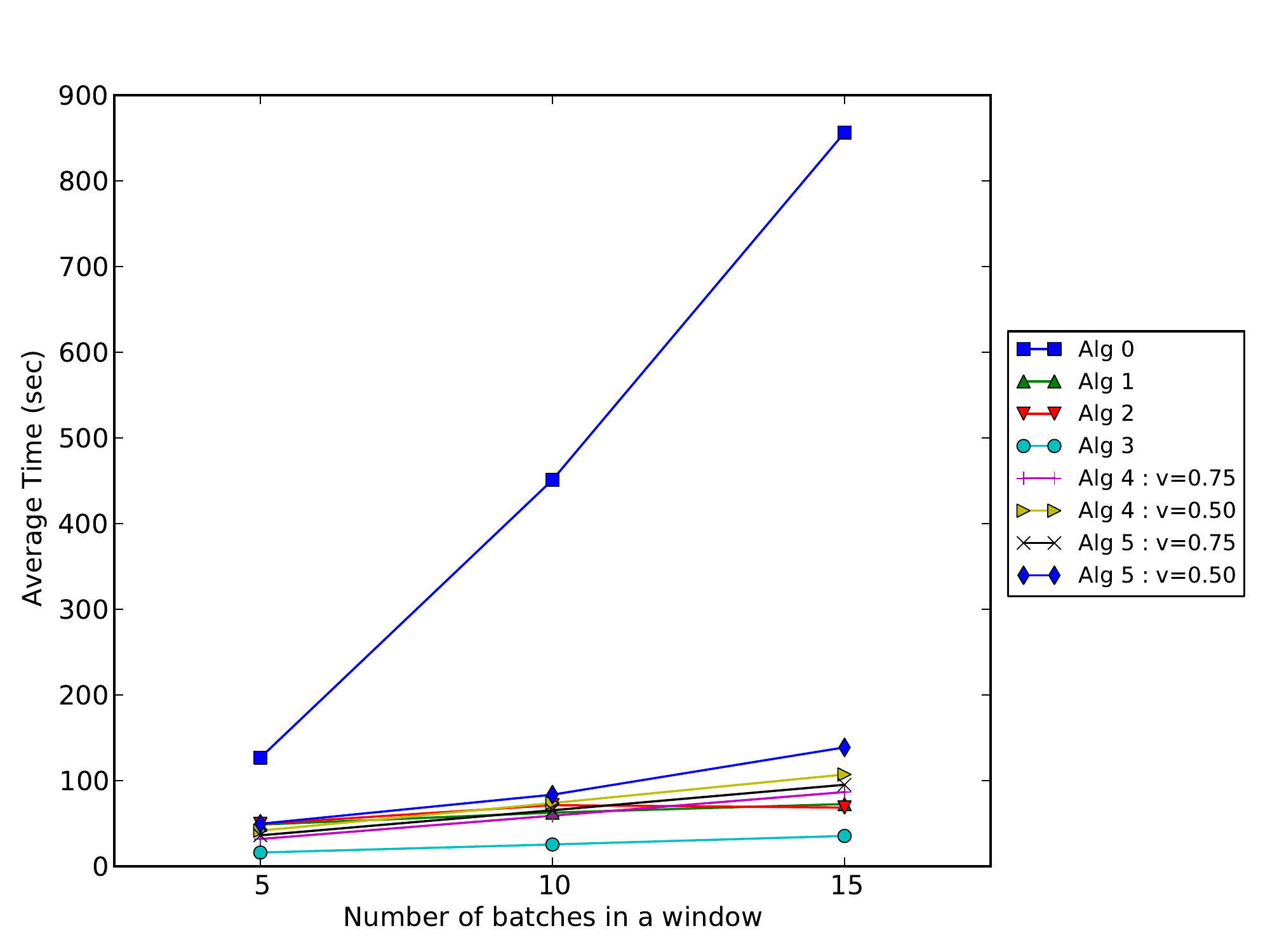}}
\subfigure[Memory] {\includegraphics[width=0.45\columnwidth,height=1.8in]{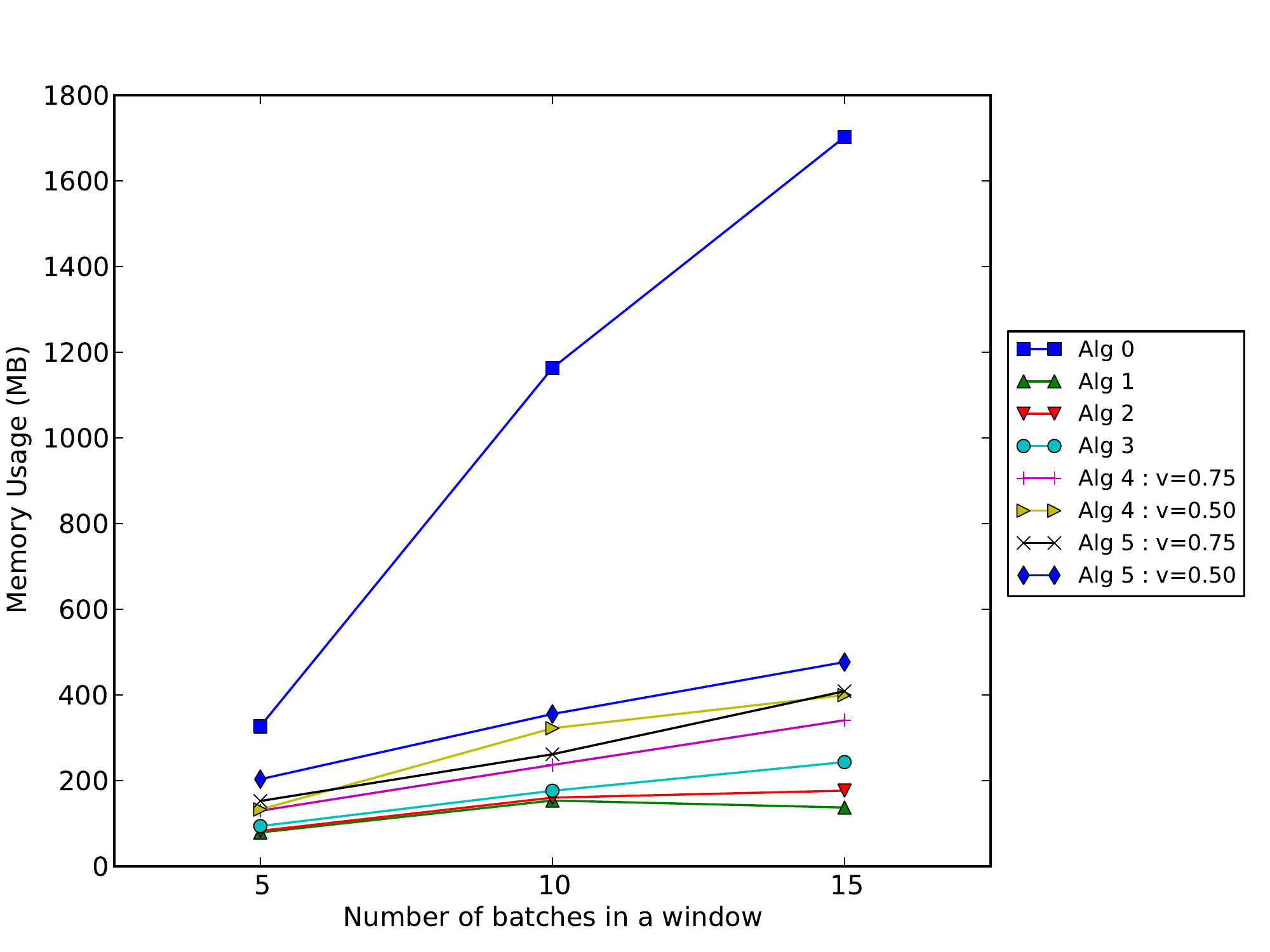}}
\caption{Effect of number of batches in a window. The memory usage and runtime does not increase much for algorithms other than Alg 0}
\label{fig:window}
\end{figure}

\subsection{Multi-neuronal Data}
Multi-electrode arrays provide high throughput recordings of the spiking activity in neuronal tissue and are hence rich sources of event data where events correspond to specific neurons being activated. We used the data from dissociated cortical cultures gathered by 
Steve Potter's laboratory at Georgia Tech \cite{Potter2006} over several days. This is a rich collection of recordings from a 64-electrode MEA setup.

We show the result of mining frequent episodes in the data collected over several days from Culture 6~\cite{Potter2006}. We use a batch size of 150 sec and all other parameters for mining are the same as that used for the synthetic data. The plots in Fig.~\ref{fig:real} show the performance of the different algorithms as time progresses. Alg 1 and 2 give very low precision values which implies that the top-$k$ in a batch is much different from the top-$k$ in the window. Alg 3, 4 and 5 perform equally well over the MEA data with Alg 3 giving the best runtime performance. 

\begin{figure}[!ht]
\centering
\subfigure[Precision] {\includegraphics[width=\columnwidth,height=1.4in]{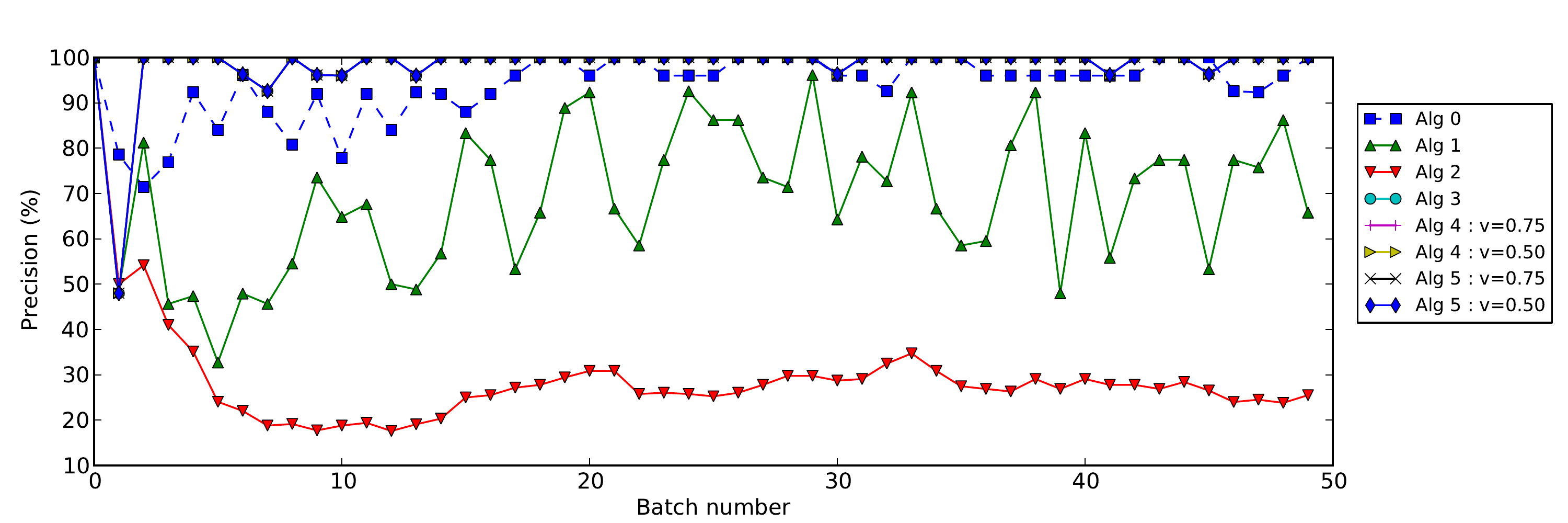}}
\subfigure[Recall] {\includegraphics[width=\columnwidth,height=1.4in]{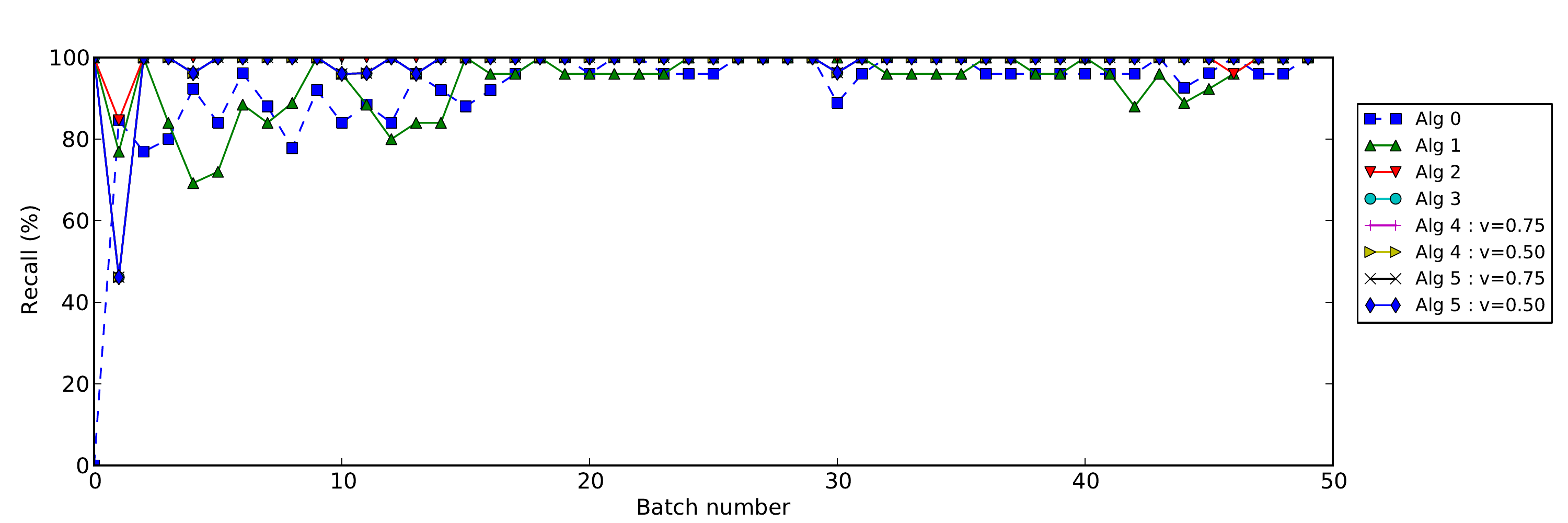}}
\subfigure[Runtime] {\includegraphics[width=\columnwidth,height=1.4in]{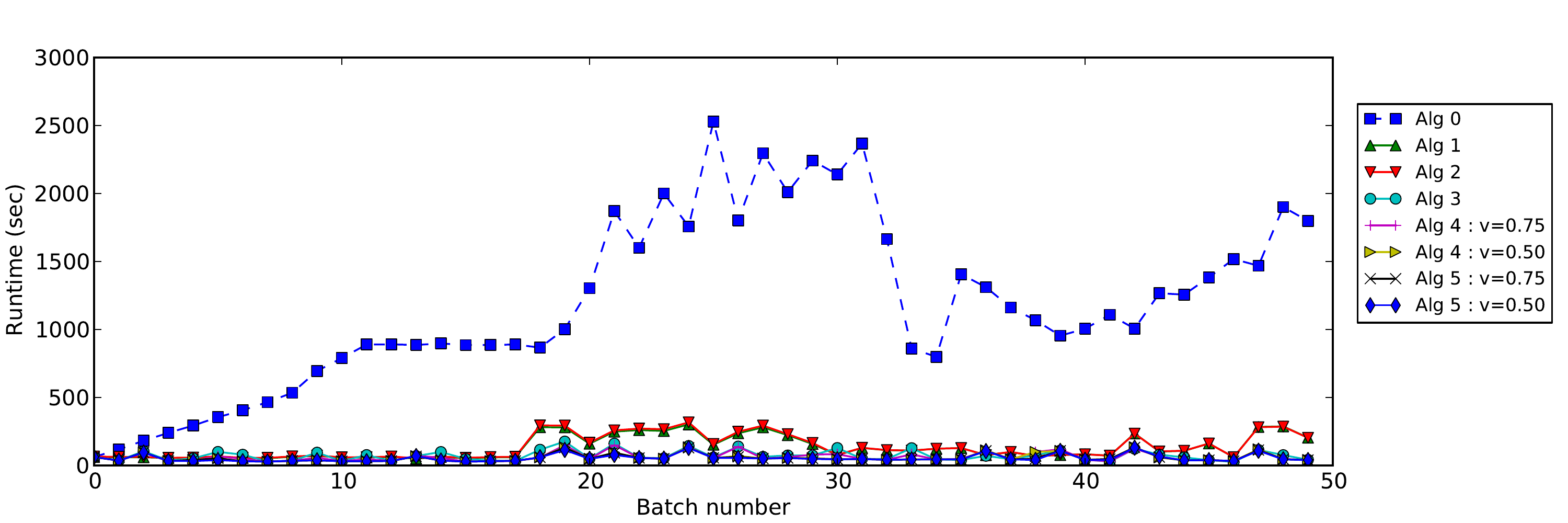}}
\subfigure[Memory] {\includegraphics[width=\columnwidth,height=1.4in]{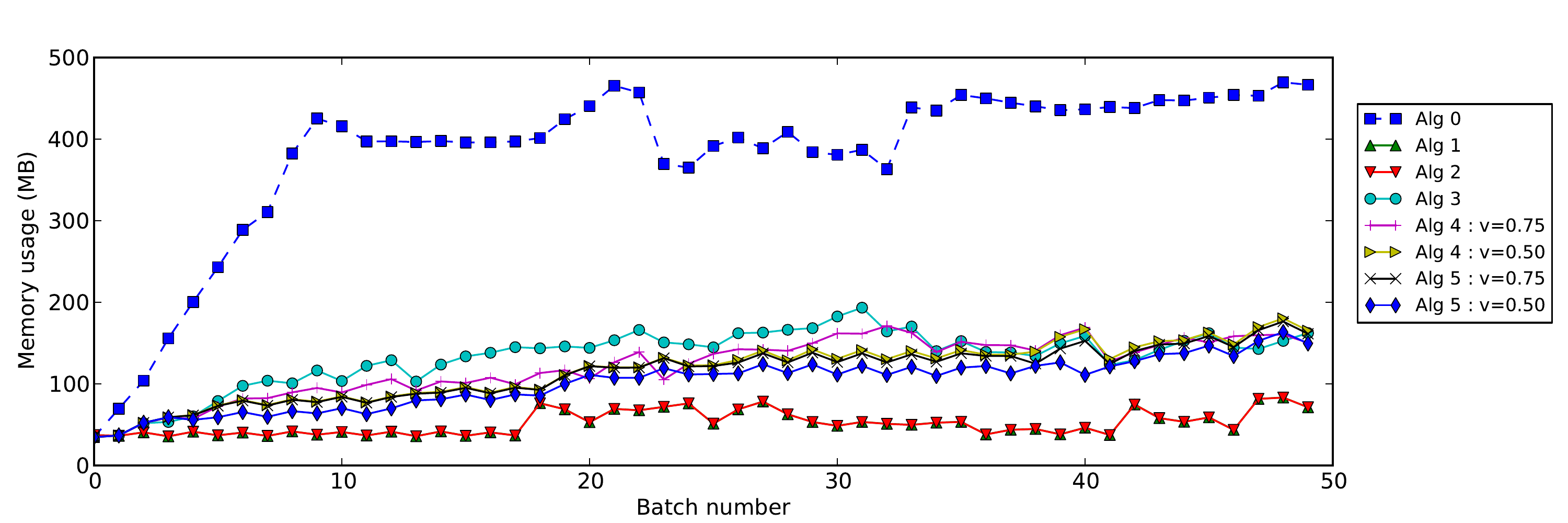}}
\caption{Comparison of performance of different algorithms on real multi-neuronal data.}
\label{fig:real}
\end{figure}

At times Alg 3 requires slightly higher memory than the other algorithm (Alg 4 and 5). This may seem counter intuitive as Alg 4 and 5 use lower frequency threshold. But since $\delta$ is dynamically estimated from all episodes being tracked by the algorithm it can easily be the case that the $\delta$ estimates made by Alg 3 are looser and hence result in higher memory usage.

%
%
%
%
%
%
%
%
%
%
%
%
%
%
%
%

\section{Related work}
\label{sec:related-work}

Most prior work in streaming pattern mining is related to frequent itemsets and sequential patterns~\cite{Karp:2003,Jin2005,Manku:2002,Calders2007}. Some interesting algorithms have also been proposed for streaming motif mining in time-series data~\cite{Mueen2010}. But these methods do not easily extend to other pattern classes like episodes and partial orders.  
To our knowledge, there has been very little work in the area of mining patterns in discrete event streams. In this section we discuss some of the existing methods for itemsets, sequential patterns, and motifs. 

Karp \textit{et al} proposed a one pass streaming algorithm for finding frequent events in an item sequence~\cite{Karp:2003}. The algorithm, at any given time, maintains a set $K$ of event types and their corresponding counts. Initially, this set is empty. When an event is read from the input sequence, if the event type exists in the set then its count is incremented. Otherwise the event type is inserted into the set $K$ with count 1. When the size of the set $K$ exceeds $\lfloor 1/\theta \rfloor$, the count of each event type in the set is decremented by 1 (and deleted from the set if count drops to zero). The key property is that any event type that occurs at least $n\theta$ times in the sequence is in the set $K$. Consider an event type that occurs $f$ times in the sequence, but is not in $K$. Each occurrence of this event type is eliminated together with more than $\lfloor 1/\theta \rfloor - 1$ occurrences of other event types achieved by decrementing all counts by 1. Thus, at least a total of $f/\theta$ elements are eliminated. Thus $f/\theta < n$, where $n$ is the number of events in the sequences and hence, $f < n\theta$. This method guarantees no false negatives for a given support threshold. But the space and time complexity of this algorithm varies inversely with the support threshold chosen by the user. This can be a problem when operating at low support thresholds. In \cite{Jin2005}, this approach was extended to mine frequent itemsets.

Lossy counting constitutes another important class of streaming algorithms proposed by Manku and Motwani in 2002 \cite{Manku:2002}. In this work an approximate counting algorithm for itemsets is described.
The algorithm stores a list of tuples which comprise an item or itemset, a lower bound on its count, and a maximum error term $(\Delta)$. When processing the $i^{th}$ item, if it is currently stored then its count is incremented by one; otherwise, a new tuple is created with the lower bound set to one, and $\Delta$ set to $\lfloor i \epsilon \rfloor$. Periodically, each tuple whose upper bound is less than $\lfloor i \epsilon \rfloor$ is deleted. This technique guarantees that the percentage error in reported counts is no more than $\epsilon$ and it is also shown that the space used by this algorithm is $O(\frac{1}{\epsilon}\log \epsilon n)$ for itemsets. Unfortunately, this method requires operating at very low support threshold $\epsilon$ in order to provide small enough error bounds.
In \cite{Mendes:2008}, the pattern growth algorithm - PrefixSpan \cite{PrefixSpan} for mining sequential patterns was extended to incorporate the idea of lossy counting.

In \cite{Calders2007}, the authors propose a new frequency measure for itemsets over data streams.  The frequency of an itemset in a stream is defined as its maximal frequency over all windows in the stream from any point in the past until the current time that satisfy a minimal length constraint. They present an incremental algorithm that produces the current frequencies of all frequent itemsets in the data stream. The focus of this work is on the new frequency measure and its unique properties.

In \cite{Mueen2010} an online algorithm for mining time series motifs was proposed. The algorithm uses an interesting data structure to find a pair of
approximately repeating subsequences in a window. The Euclidean distance measure
is used to measure the similarity of the motif sequences in the window. Unfortunately this notion does not extend naturally to discrete patterns. Further,
this motif mining formulation does not explicitly make use of a support or frequency threshold and returns exactly one pair of motifs that are found to be the closest in terms of distance.

A particular sore point in pattern mining is coming up with a frequency threshold for the mining process. Choice of this parameter is key to the success of any effective strategy for pruning the exponential search space of patterns. Mining the top-$k$ most frequent patterns has been proposed in the literature as a more intuitive formulation for the end user. In \cite{PLR10} we proposed an information theoretic principle for determining the frequency threshold that is ultimately used in learning a dynamic Bayesian network model for the data. In both cases the idea is to mine patterns at the highest possible support threshold  to either outputs the top-$k$ patterns or patterns that satisfy a minimum mutual information criteria. This is different from the approach adopted, for example, in lossy counting where the mining algorithm operates at support threshold proportional to the error bound. Therefore, in order to guarantee low errors, 
the algorithm tries to operate at the lowest possible threshold.

An episode or a general partial order pattern can be thought of as a generalization of itemsets where each item in the set is not confined to occur within the same transaction (i.e. at the same time tick) and there is additional structure in the form of ordering of events or items. In serial episodes, events must occur in exactly one particular order. Partial order patterns allow multiple orderings. In addition there could be repeated event types in an episode.  
The loosely coupled structure of events in an episode results in narrower separation between the frequencies of true and noisy patterns (i.e. resulting from random co-occurrences of events) and  quickly leads to combinatorial explosion of candidates when mining at low frequency thresholds.  Most of the itemset literature does not deal with the problem of candidate generation. The focus is on counting and not so much on efficient candidate generation schemes.
In this work we explore ways of doing both counting and candidate generation efficiently. Our goal is to devise algorithms that can operate at as high frequency thresholds as possible and yet 
give certain guarantees about the output patterns.

\section{Conclusions}
\label{sec:conclusions}

In this paper, we have studied the problem of
mining frequent episodes over changing data streams. In particular our contribution in this work is three fold. We unearth an interesting aspect of temporal data mining where the data owner may desire results over a span of time in the data that cannot fit in the memory or 
be processed at a rate faster than the data generation rate. We 
have proposed
a new sliding window model which slides forward in hops of batches. At any point only one batch of data is available for processing. We have studied
this problem and identified the theoretical guarantees one can give and the necessary assumptions for supporting them.

In many real applications we find the need for characterizing pattern not just based on their frequency but also their tendency to persist over time. In particular,
in neuroscience, the network structure underlying an ensemble of neurons changes much slowly in comparison to the culture wide periods bursting phenomenon. Thus separating the persistent patterns from the bursty ones can give us more insight into the underlying connectivity map of the network.
We have proposed the notion of
$(v,k)$ persistent patterns to address this problem and outlined methods to mine all $(v,k)$-persistent patterns in the data.
Finally we have provided
detailed experimental results on both synthetic and real data to show the advantages of the proposed methods.

Finally, we reiterate that
although we have focused on episodes, the ideas presented in this paper could
be applied to other pattern classes with similar considerations.

\bibliographystyle{abbrv}
\bibliography{bib/references}

\end{document}